\documentclass[10pt,twocolumn]{article}

\usepackage[utf8]{inputenc}
\usepackage[T1]{fontenc}
\usepackage{times}
\usepackage[margin=1in]{geometry}
\usepackage[pdftex]{graphicx}
\usepackage{amsmath,amssymb,amsthm,amsfonts}
\usepackage{booktabs}
\usepackage{algorithm}
\usepackage{algpseudocode}
\usepackage{tikz}
\usetikzlibrary{shapes.geometric,arrows.meta,positioning,shadows,decorations.pathmorphing,fit,calc,backgrounds,patterns}
\usepackage{hyperref}
\usepackage{float}
\usepackage{subcaption}
\usepackage{microtype}
\usepackage{mathtools}
\usepackage{enumitem}
\usepackage{multirow}
\usepackage{xcolor}
\usepackage[numbers,sort&compress]{natbib}
\usepackage{url}
\usepackage{wrapfig}
\usepackage{bm}

\graphicspath{{./}{./figures/}{./Images-Outputs-Excluir-Depois/}{../assets/}}

\newtheorem{theorem}{Theorem}

\newtheorem{corollary}[theorem]{Corollary}

\newtheorem{proposition}{Proposition}
\newtheorem{remark}{Remark}

\hypersetup{
    colorlinks=true,
    linkcolor=blue,
    citecolor=blue,
    urlcolor=blue,
    pdfauthor={Davi Bonetto},
    pdftitle={SpectralGuard: Detecting Memory Collapse Attacks in State Space Models}
}

\title{\vspace{-1.5cm}\textbf{\LARGE SpectralGuard: Detecting Memory Collapse Attacks \\[0.3em] in State Space Models}}

\author{
    \Large Davi Bonetto \\[0.5em]
    \large Independent Researcher \\[0.3em]
    \texttt{davi.bonetto100@gmail.com} \\[0.5em]
    \small\href{https://github.com/DaviBonetto/spectralguard}{\githubicon\ \textbf{Code}}%
    \hspace{1.2em}%
    \href{https://huggingface.co/spaces/DaviBonetto/spectralguard-demo}{\hficon\ \textbf{Demo}}%
    \hspace{1.2em}%
    \href{https://huggingface.co/datasets/DaviBonetto/spectralguard-dataset}{\hficon\ \textbf{Dataset}}
}
\date{}

\tikzset{
    block/.style={rectangle, rounded corners, minimum width=2.4cm, minimum height=0.8cm, 
                  text centered, draw=black, fill=blue!20, drop shadow, font=\scriptsize},
    attack/.style={rectangle, draw=red!80, very thick, dashed, fill=red!10,
                   minimum width=2.4cm, minimum height=0.8cm, text centered, font=\scriptsize},
    guard/.style={rectangle, draw=green!60!black, very thick, fill=green!20, rounded corners,
                  minimum width=2.4cm, minimum height=0.8cm, text centered, font=\scriptsize},
    io/.style={trapezium, trapezium left angle=70, trapezium right angle=110,
               minimum width=2.4cm, minimum height=0.8cm, text centered, draw=black,
               fill=gray!10, drop shadow, font=\scriptsize},
    decision/.style={diamond, aspect=2, text centered, draw=black, fill=yellow!15, 
                     drop shadow, font=\tiny, text width=1.8cm},
    line/.style={-{Latex[length=1.8mm]}, thick},
    dashedline/.style={-{Latex[length=1.8mm]}, thick, dashed}
}

\setlist[itemize]{leftmargin=*, itemsep=1pt, topsep=3pt}
\setlist[enumerate]{leftmargin=*, itemsep=1pt, topsep=3pt}

\newcommand{\githubicon}{\raisebox{-0.3ex}{\includegraphics[height=1.5em]{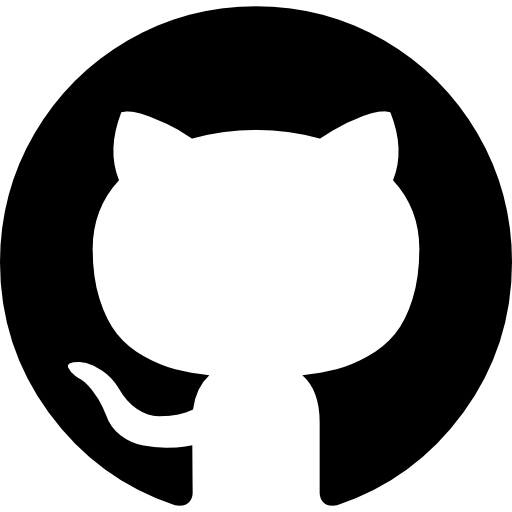}}}
\newcommand{\hficon}{\raisebox{-0.3ex}{\includegraphics[height=1.5em]{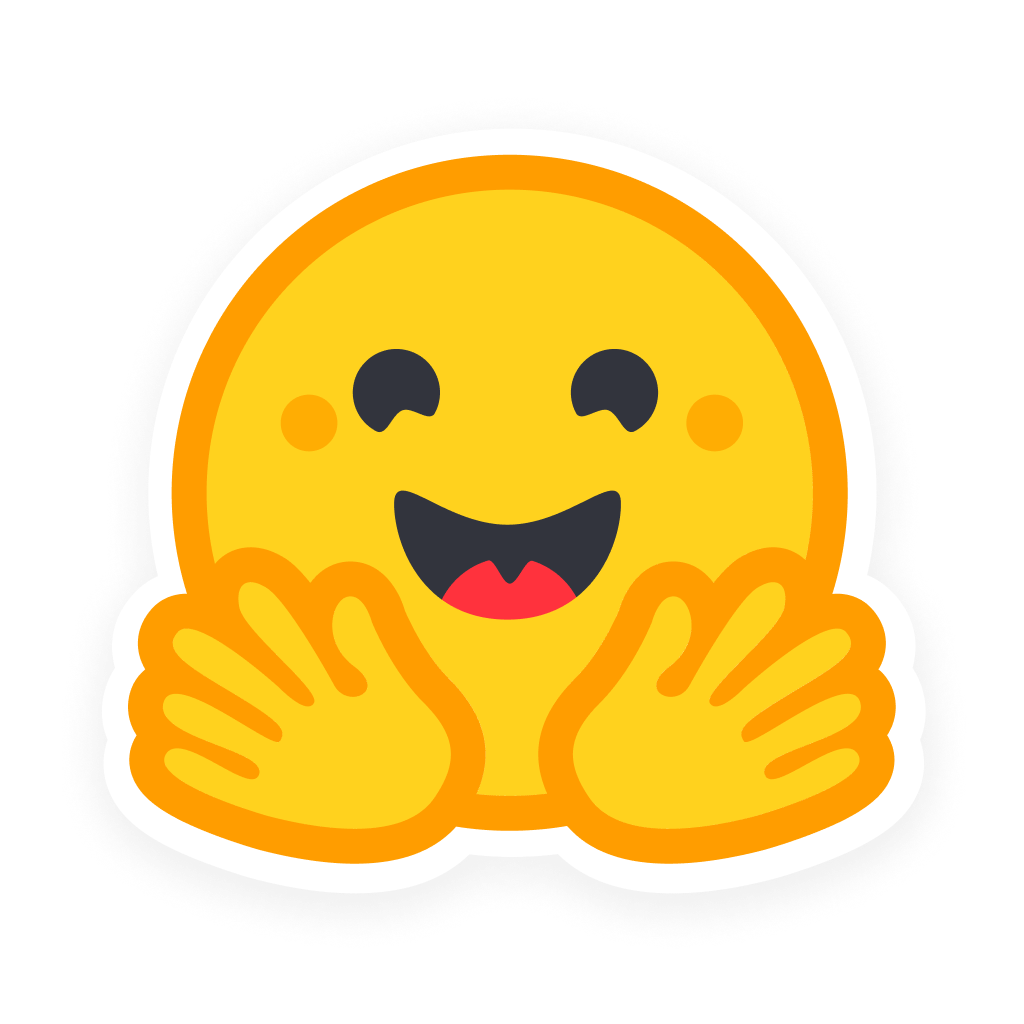}}}

\begin{document}


\maketitle

\begin{abstract}
State Space Models (SSMs) such as Mamba achieve linear-time sequence processing through input-dependent recurrence, but this mechanism introduces a critical safety vulnerability. We show that the spectral radius $\rho(\bar{A})$ of the discretized transition operator governs effective memory horizon: when an adversary drives $\rho$ toward zero through gradient-based Hidden State Poisoning, memory collapses from millions of tokens to mere dozens, silently destroying reasoning capacity without triggering output-level alarms. We prove an evasion existence theorem showing that for any output-only defense, adversarial inputs exist that simultaneously induce spectral collapse and evade detection, then introduce \emph{SpectralGuard}, a real-time monitor that tracks spectral stability across all model layers. SpectralGuard achieves F1$=0.961$ against non-adaptive attackers and retains F1$=0.842$ under the strongest adaptive setting, with sub-15ms per-token latency. Causal interventions and cross-architecture transfer to hybrid SSM-Attention systems confirm that spectral monitoring provides a principled, deployable safety layer for recurrent foundation models.
\end{abstract}


\section{Introduction}
\label{sec:intro}

Despite the rapid deployment of State Space Models (SSMs)~\cite{gu2023mamba,gu2021efficiently,dao2022hungry} in production systems, we prove that no defense operating solely on model outputs can detect a class of attacks that silently destroys reasoning capacity from within. This impossibility---formalized in Theorem~\ref{thm:impossibility}---arises because SSMs compress their entire history into a low-dimensional recurrent state $h_t \in \mathbb{R}^d$, updated at every token via a discretized transition operator $\bar{A}_t = \exp(\Delta_t A)$. An adversary who manipulates the input-dependent step size $\Delta_t$ can drive the spectral radius $\rho(\bar{A}_t)$ toward zero, collapsing effective memory from millions of tokens to mere dozens, while output distributions remain superficially plausible. We call this failure mode \mbox{\emph{spectral collapse}}, and we build both the theory to explain it and the defense to stop it.

The vulnerability is architectural. Unlike Transformers, which distribute information across an explicit key--value cache with additive attention aggregation, SSMs multiply by $\bar{A}_t$ at every recurrent step. This creates a \emph{compounding} attack surface: a small adversarial shift in $\Delta_t$ cascades exponentially through recurrence ($0.3^{10} \approx 6 \times 10^{-6}$), obliterating state information before any output-level alarm can fire. Hidden-state manipulation attacks (HiSPA)~\cite{lemercier2026hispa} exploit precisely this pathway, steering internal recurrence into a high-contraction regime without triggering jailbreak detectors~\cite{liu2023jailbreak,zou2023universal}. Our framework provides the first mechanistic explanation for why such attacks succeed on Mamba-based systems but not on comparable Transformers---a phenomenon previously observed but never grounded in theory. As SSMs enter safety-critical deployments, the mismatch between internal vulnerability and external observability demands a principled response: monitoring the internal dynamics directly.

\subsection{Contributions}

We make three contributions:

\begin{enumerate}[label=\textbf{(\arabic*)}]
\item \textbf{Spectral Foundations Theory.} We derive a tight bound on the effective memory horizon $H_{\text{eff}}$ as a function of $\rho(\bar{A})$, condition number $\kappa$, and the Controllability Gramian $\mathcal{W}_c$ (Theorem~\ref{thm:horizon}), and prove that output-only defenses are fundamentally insufficient against spectral collapse attacks (Theorem~\ref{thm:impossibility}). Empirical validation reveals a sharp phase transition at $\rho_{\text{critical}} \approx 0.90$, with accuracy collapsing from $>$80\% to $<$30\% as $\rho$ crosses this boundary.

\item \textbf{HiSPA Attack and Spectral Collapse Demonstration.} We construct a gradient-based attack that minimizes $\rho(\bar{A})$ through adversarial chain-of-thought prompting, inducing spectral collapse ($\rho: 0.98 \to 0.32$) and degrading accuracy by 42--53 percentage points across associative recall, long-context QA, mathematical reasoning, and code generation benchmarks.

\item \textbf{SpectralGuard Defense.} We introduce a real-time, multi-layer spectral monitor that tracks $\rho(\bar{A}_t)$ across all model layers with sub-15ms per-token latency. SpectralGuard achieves F1$=0.961$ (AUC$=0.989$) against non-adaptive attackers and retains F1$=0.842$ (AUC$=0.903$) under the strongest adaptive setting, validated across three model scales (130M--2.8B), multiple random seeds, and cross-architecture transfer to hybrid SSM-Attention systems (Zamba2). Extensive ablations over hyperparameters ($\rho_{\min}$, window size, power iterations) and comparisons against output-based baselines confirm that internal-state monitoring is both necessary and sufficient.
\end{enumerate}

\subsection{Paper Organization}

Section~\ref{sec:background} introduces selective SSM dynamics, spectral theory preliminaries, and related work across SSM architectures, adversarial ML, mechanistic interpretability, and control theory. Section~\ref{sec:theory} presents our theoretical framework: the Spectral Horizon Bound, Evasion Existence Theorem, SpectralGuard completeness guarantee, and a Lipschitz perturbation certificate. Section~\ref{sec:method} describes attack construction and the SpectralGuard algorithm. Section~\ref{sec:results} reports empirical validation across eight experiments covering spectral horizon validation, adversarial collapse, defense performance, scaling, hybrid architectures, ablations, and baseline comparisons. Section~\ref{sec:discussion} discusses implications for AI safety, adaptive attacks, limitations, and future directions. Section~\ref{sec:conclusion} presents our conclusions. Appendix~\ref{sec:appendix} contains complete proofs and experimental details.

\section{Background and Related Work}
\label{sec:background}

\subsection{State Space Models: From Continuous to Discrete}

We model an SSM as a continuous-time linear dynamical system driven by an input signal $x(t)$:

where $A\in\mathbb{R}^{d\times d}$ is the continuous state transition matrix, $B\in\mathbb{R}^{d\times m}$ injects the input, $C\in\mathbb{R}^{p\times d}$ projects the state to outputs, and $D\in\mathbb{R}^{p\times m}$ is a direct feedthrough term (often set to zero). This formulation has deep roots in control theory~\cite{kalman1960contributions} and signal processing~\cite{oppenheim1999discrete}.

To apply SSMs to discrete token sequences, we require a discretization scheme. The zero-order hold (ZOH) method yields the exact discretization:
\begin{equation}
    \bar{A} = \exp(\Delta A), \quad \bar{B} = (\bar{A} - I) A^{-1} B,
    \label{eq:discretization_exact}
\end{equation}
for a fixed step size $\Delta>0$. The key innovation in selective SSMs~\cite{gu2023mamba} is making $\Delta$ input-dependent: $\Delta_t = \sigma(\text{Linear}(x_t))$, where $\sigma$ is typically softplus to ensure positivity. This allows the model to adaptively control information flow on a per-token basis.

The discrete-time recurrence becomes:
\begin{equation}
    h_t = \bar{A}_t h_{t-1} + \bar{B}_t x_t, \quad y_t = C h_t,
    \label{eq:dts_update}
\end{equation}
where $\bar{A}_t = \exp(\Delta_t A)$ and $\bar{B}_t$ is computed analogously. Our analysis focuses on the spectral properties of $\bar{A}_t$, which governs the decay or persistence of information in $h_t$.

\subsection{Spectral Theory Primer}

Let $\lambda_i(M)$ denote the eigenvalues of a square matrix $M\in\mathbb{C}^{d\times d}$. The \emph{spectral radius} is defined as:
\begin{equation}
    \rho(M) = \max_{i=1,\dots,d} |\lambda_i(M)|.
    \label{eq:spectral_radius_def}
\end{equation}

A fundamental result in linear algebra states that for any induced matrix norm $\|\cdot\|$, we have $\rho(M) \le \|M\|$, and for any $\epsilon>0$, there exists a norm such that $\|M\| \le \rho(M) + \epsilon$~\cite{horn2012matrix}. For our purposes, the key consequence is:

\begin{proposition}[Geometric Decay]
If $\rho(\bar{A})<1$, then $\|\bar{A}^t\| \le \kappa \rho(\bar{A})^t$ for some constant $\kappa\ge 1$ depending on the condition number of $\bar{A}$. If $\bar{A}$ is diagonalizable, $\kappa$ is the condition number of the eigenvector matrix.
\end{proposition}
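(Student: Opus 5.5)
The plan is to prove the diagonalizable case directly, since that is the case the paper actually uses, and then say precisely what survives in general. Suppose $\bar{A} = V \Lambda V^{-1}$ with $\Lambda = \mathrm{diag}(\lambda_1,\dots,\lambda_d)$. Then $\bar{A}^t = V \Lambda^t V^{-1}$, and submultiplicativity of the induced norm (spectral norm, say) gives
\begin{equation*}
\|\bar{A}^t\| \le \|V\|\,\|\Lambda^t\|\,\|V^{-1}\| = \kappa(V)\,\max_i |\lambda_i|^t = \kappa(V)\,\rho(\bar{A})^t .
\end{equation*}
Here I use that $\Lambda^t$ is diagonal, so its spectral norm is its largest diagonal modulus. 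So $\kappa = \kappa(V) = \|V\|\|V^{-1}\| \ge \|VV^{-1}\| = 1$, which is exactly the second sentence of the proposition. Note that $V$ is only determined up to column scaling, so one may take the infimum over admissible $V$.

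In the SSM setting I would then observe that this is all that is needed. In Mamba-style models $A$ is diagonal (or diagonal-plus-structured and diagonalized in practice). Hence $\bar{A} = \exp(\Delta A)$ is diagonal, or diagonalizable with the same eigenvectors as $A$, with eigenvalues $e^{\Delta \lambda_i(A)}$. In the purely diagonal case $V = I$, so $\kappa = 1$ and the bound is an equality up to the choice of norm. For the input-dependent products $\bar{A}_t \cdots \bar{A}_1$, the shared eigenbasis makes the same argument go through with $\rho^t$ replaced by $\prod_s \rho(\bar{A}_s)$. That remark sets up later uses such as Theorem~\ref{thm:horizon}.

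The main obstacle is the non-diagonalizable case, where the statement as literally phrased fails. A Jordan block $J = \rho I + N$ has $\|J^t\| \sim \binom{t}{k}\rho^{t-k}$, and no constant $\kappa$ can dominate the polynomial factor against $\rho^t$. I would therefore prove the correct general version using the norm fact quoted just before the proposition. For any $\epsilon > 0$ there is an induced norm $\|\cdot\|_\epsilon$ with $\|\bar{A}\|_\epsilon \le \rho(\bar{A}) + \epsilon$. By equivalence of norms in finite dimension, $\|\cdot\| \le c_\epsilon \|\cdot\|_\epsilon$ and conversely with constants $c_\epsilon, c'_\epsilon$. This gives
\begin{equation*}
\|\bar{A}^t\| \le c_\epsilon c'_\epsilon (\rho(\bar{A})+\epsilon)^t .
\end{equation*}
Alternatively, I could scale the Jordan basis by $\mathrm{diag}(1,\epsilon,\epsilon^2,\dots)$ so that the off-diagonal entries become $\epsilon$. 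Choosing $\epsilon$ with $\rho + \epsilon < 1$ preserves geometric decay, with $\kappa = \kappa_\epsilon$ the condition number of the scaled Jordan basis. I would state this as the precise meaning of ``$\kappa$ depending on the condition number'' in the general case, and note that it collapses to the clean bound above whenever $\bar{A}$ is diagonalizable, which covers the models studied here.
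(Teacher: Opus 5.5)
Your diagonalizable argument is the same as the paper's. The paper gives no separate proof of the proposition, but Step~1 of the appendix proof of Theorem~\ref{thm:horizon} writes $\bar{A}=V\Lambda V^{-1}$ and collapses $\|V\|_2\|\Lambda^t\|_2\|V^{-1}\|_2$ to $\kappa(\bar{A})\rho(\bar{A})^t$ via $\|\Lambda^t\|_2=\rho(\bar{A})^t$. Your Jordan-block objection is also correct and goes beyond the paper: the first sentence fails when $\bar{A}$ has a nontrivial Jordan block at an eigenvalue of modulus $\rho(\bar{A})$, and the $\|M\|\le\rho(M)+\epsilon$ norm fact cited just before the proposition only gives your $(\rho(\bar{A})+\epsilon)^t$ version, which is harmless here because Mamba's $\bar{A}_t$ is diagonal.
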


This implies that in the homogeneous recurrence $h_t = \bar{A}^t h_0$, information about the initial state decays exponentially at rate $\rho(\bar{A})$. Conversely, as $\rho(\bar{A})\to 1^{-}$, the decay slows and long-range dependencies become representable. When $\rho(\bar{A})=1$ (criticality), the system is marginally stable, and when $\rho(\bar{A})>1$, it diverges. Trained SSMs typically operate near-critical regimes ($\rho\approx 0.95$--$0.99$) to balance stability and memory.

\begin{remark}[Diagonal Structure in Mamba]
\label{rem:diagonal}
In the Mamba architecture~\cite{gu2023mamba}, the continuous state matrix $A$ is parameterized as a \emph{diagonal} matrix, with entries stored in log-space as $d_{\text{inner}} \times d_{\text{state}}$ independent scalars. Consequently, $\bar{A}_t = \exp(\Delta_t A) = \text{diag}(\exp(\Delta_t a_1), \dots, \exp(\Delta_t a_d))$ is also diagonal, and the spectral radius reduces to:
\begin{equation}
    \rho(\bar{A}_t) = \max_{i=1,\dots,d} |\exp(\Delta_t a_i)| = \max_i \exp(\Delta_t \text{Re}(a_i)).
    \label{eq:diagonal_rho}
\end{equation}
The theorems in Section~\ref{sec:theory} are stated for the general (full matrix) case to ensure applicability to future SSM variants (e.g., dense or low-rank parameterizations). For diagonal $\bar{A}_t$, all results simplify: the condition number $\kappa(\bar{A}_t) = 1$ (since diagonal matrices are trivially diagonalizable with $V = I$), the Controllability Gramian computation reduces from $O(d^3)$ to $O(d)$, and the spectral radius is computable in $O(d)$ via a single \texttt{max} operation rather than eigendecomposition. This computational simplification \emph{strengthens} the practical case for SpectralGuard: real-time monitoring costs only $O(d_{\text{state}})$ per token per layer, making deployment overhead negligible even without the power method approximation.
\end{remark}

\subsection{The HiPPO Framework and Memory Retention}

The HiPPO (High-order Polynomial Projection Operators) framework~\cite{gu2020hippo} provides a principled initialization for SSM matrices $A$ that enables long-range memory. HiPPO matrices are designed to approximate continuous signals by projecting them onto orthogonal polynomial bases (e.g., Legendre polynomials). The key insight is that the continuous operator $A$ should compress the input history into a finite-dimensional state $h_t$ such that reconstruction error is minimized.

For the Legendre measure, the HiPPO matrix has eigenvalues clustered near the imaginary axis with real parts $\text{Re}(\lambda_i) \le 0$, ensuring stability. After discretization via Eq.~\eqref{eq:discretization_exact}, the discrete eigenvalues lie within or near the unit circle, with $\rho(\bar{A})$ close to but less than 1. This spectral configuration is \emph{necessary} for long-context performance but, as we demonstrate, also creates an attack surface.

\subsection{Selective SSMs and the Mamba Architecture}

Mamba~\cite{gu2023mamba} extends structured SSMs (S4~\cite{gu2021efficiently}) by introducing input-dependent parameters $\Delta_t$, $B_t$, and $C_t$, computed via learned projections of the input token $x_t$. This selectivity allows the model to decide \emph{per token} how much to retain versus forget, analogous to gating mechanisms in LSTMs~\cite{hochreiter1997lstm} but operating in continuous state space.

Computational efficiency is achieved through a scan algorithm that parallelizes the recurrence during training (similar to parallel prefix sums) and a custom CUDA kernel for inference. The combination of selectivity and efficiency has enabled Mamba models to match or exceed Transformer performance on long-context benchmarks~\cite{poli2023hyena,dao2022hungry} while maintaining linear complexity in sequence length.

However, the input-dependent discretization introduces a \emph{gradient pathway} from token choice to spectral properties: $\text{tokens} \to \Delta_t \to \bar{A}_t \to \rho(\bar{A}_t)$. An adversary can exploit this pathway to minimize $\rho(\bar{A}_t)$, forcing the model into a high-contraction regime.

\subsection{State Space Models and Memory}

The modern SSM renaissance began with S4~\cite{gu2021efficiently}, which paired HiPPO initialization~\cite{gu2020hippo} with diagonal-plus-low-rank parameterizations for $O(N\log N)$ convolutions. Subsequent variants---S5~\cite{smith2023s5} (complex-to-real simplification), H3~\cite{dao2022hungry} (data-dependent convolutions), and Hyena~\cite{poli2023hyena} (implicit neural operators replacing SSMs)---explored different trade-offs between expressiveness and efficiency. Mamba~\cite{gu2023mamba} unified these threads via selective discretization, achieving Transformer-competitive language modeling at linear complexity. Concurrent architectures such as RetNet~\cite{sun2023retentive} (retention with exponential decay) and RWKV~\cite{peng2023rwkv} (element-wise time-mixing) share the recurrent memory paradigm but differ in how spectral properties manifest: RetNet uses fixed decay rates without continuous-time foundations, while RWKV's element-wise updates reduce the spectral attack surface. The classical vanishing gradient problem in RNNs~\cite{bengio1994learning,hochreiter1997lstm}---where $|\lambda_i(W)|<1$ causes exponential gradient decay---is mitigated in SSMs by HiPPO's principled initialization ensuring $\rho(\bar{A})\approx 1$. Our Theorem~\ref{thm:horizon} generalizes these classical analyses to selective SSMs with input-dependent $\rho(\bar{A}_t)$.

\subsection{Adversarial Attacks on Language Models}

Adversarial ML spans gradient-based attacks in vision~\cite{goodfellow2014adversarial,madry2018towards} and language~\cite{wallace2019universal,zou2023universal}, with language-specific threats including jailbreaking~\cite{liu2023jailbreak,wei2023jailbroken}, prompt injection~\cite{perez2022red,greshake2023not}, universal adversarial suffixes~\cite{zou2023universal}, and character/word-level perturbations~\cite{ebrahimi2018hotflip,gao2018black}. These attacks primarily aim to change output intent---forcing models to produce disallowed content. HiSPA~\cite{lemercier2026hispa} introduces a fundamentally distinct threat model: \emph{capability denial} via internal state manipulation, analogous to denial-of-service attacks in cybersecurity where the system remains operational but loses critical reasoning capabilities. Our spectral framework provides the first mechanistic explanation for why this attack class succeeds and the first principled defense.

\subsection{Defenses and Certified Robustness}

Output-based defenses~\cite{helbling2023llm,jain2023baseline} detect prohibited content via classifiers or perplexity heuristics but are blind to internal degradation (formalized in Theorem~\ref{thm:impossibility}). Certified defenses~\cite{jia2019certified,huang2019achieving} provide provable guarantees via interval bound propagation or randomized smoothing~\cite{cohen2019certified}, but face fundamental challenges in language: discrete input spaces, exponential output cardinality ($|\mathcal{V}|^L$), and black-box prompting without gradient access. SpectralGuard complements these approaches by providing a deployable completeness guarantee (Theorem~\ref{thm:guard}) operating on a continuous, low-dimensional internal quantity ($\rho$), paired with a Lipschitz perturbation certificate (Theorem~\ref{thm:lipschitz_rho}) that bounds adversarial spectral drift.

\subsection{Mechanistic Interpretability}

Mechanistic interpretability~\cite{olah2020zoom,elhage2021mathematical} decomposes neural networks into human-understandable circuits. For Transformers, key findings include induction heads~\cite{olsson2022context}, IOI circuits~\cite{wang2022interpretability}, copy-suppression~\cite{mcdougall2023copy}, and superposition phenomena~\cite{elhage2022toy} where polysemantic neurons represent multiple features. SSM interpretability remains nascent~\cite{bereska2024mechanistic}. Our contribution identifies $\rho(\bar{A}_t)$ as a \emph{monosemantic} quantity---a single interpretable scalar predicting memory capacity and reasoning performance---avoiding superposition's ambiguity and providing a compact $O(1)$ diagnostic compared to attention pattern visualization at $O(L^2)$.

\subsection{Control Theory Connections}

Control-theoretic analyses of neural networks~\cite{haber2017stable,ruthotto2020deep} model deep architectures as discretized ODEs, with stability conditions constraining $\|\nabla_h f\| \le 1$. For SSMs, stability is naturally encoded in $\rho(\bar{A})<1$~\cite{orvieto2023resurrecting}. Reachability and controllability~\cite{kalman1960contributions} quantify the expressiveness of state dynamics through the Controllability Gramian $\mathcal{W}_c = \sum_{k=0}^{\infty} \bar{A}^k \bar{B}\bar{B}^\top (\bar{A}^\top)^k$: when $\rho(\bar{A})$ is small, $\mathcal{W}_c$ becomes rank-deficient and many directions in state space become unreachable. Our adversarial framework inverts the classical controllability question: attackers \emph{reduce} controllability by collapsing $\rho(\bar{A})$, shrinking the reachable set and destroying the representational capacity formalized in Theorem~\ref{thm:horizon}.

\begin{table*}[t]
\centering
\caption{Mathematical Notation}
\label{tab:notation}
\small
\begin{tabular}{cl}
\toprule
\textbf{Symbol} & \textbf{Definition} \\
\midrule
$A \in \mathbb{R}^{d \times d}$ & Continuous state transition matrix \\
$\bar{A}_t = \exp(\Delta_t A)$ & Discretized transition at token $t$ \\
$\rho(\bar{A}_t)$ & Spectral radius (max eigenvalue magnitude) \\
$\lambda_i$ & $i$-th eigenvalue of $\bar{A}_t$ \\
$h_t \in \mathbb{R}^d$ & Hidden state at token $t$ \\
$\Delta_t \in \mathbb{R}^+$ & Input-dependent discretization step \\
$H_{\text{eff}}$ & Effective memory horizon (tokens) \\
$\varepsilon$ & Reconstruction error tolerance \\
$\rho_{\min}$ & Detection threshold for SpectralGuard \\
$w$ & Sliding window size for monitoring \\
$k$ & Number of power method iterations \\
$d_{\text{state}}$ & SSM state dimension \\
\bottomrule
\end{tabular}
\end{table*}

\section{Theoretical Framework}
\label{sec:theory}

\subsection{The Spectral Horizon: Formalizing Memory Capacity}

Consider the unforced recurrence $h_t = \bar{A}^t h_0$. If $\rho(\bar{A})<1$, the state norm decays geometrically. For the driven system
\begin{equation}
h_t = \bar{A}^t h_0 + \sum_{k=0}^{t-1} \bar{A}^k \bar{B} x_{t-k},
\label{eq:unrolled}
\end{equation}
the initial-state term $\bar{A}^t h_0$ becomes negligible relative to the energy accumulated from new inputs after $t \approx H_{\text{eff}}$ steps. The maximum storable energy from inputs $\|x_k\|_2 \le 1$ is tightly governed by the discrete Controllability Gramian $\mathcal{W}_c = \sum_{k=0}^{\infty} \bar{A}^k \bar{B} \bar{B}^\top (\bar{A}^\top)^k$.

\begin{theorem}[Spectral Horizon Bound via Controllability]
\label{thm:horizon}
For a selective SSM with discretized transition operator $\bar{A}$ satisfying $\rho(\bar{A})<1$ and condition number $\kappa(\bar{A}) = \|V\|_2 \|V^{-1}\|_2$ where $V$ diagonalizes $\bar{A}$, let the energy accumulated from bounded inputs be bounded by $\lambda_{\max}(\mathcal{W}_c)$. For the context from $h_0$ to exceed a signal-to-noise threshold $\varepsilon$ relative to the maximum accumulated input energy, the effective memory horizon $H_{\text{eff}}$ satisfies:
\begin{equation}
H_{\text{eff}}(\varepsilon, \mathcal{W}_c) \le \frac{\log\left( \kappa(\bar{A}) \sqrt{\frac{\|h_0\|_2^2}{\varepsilon^2 \lambda_{\max}(\mathcal{W}_c)}} \right)}{\log(1/\rho(\bar{A}))}.
\label{eq:horizon_main}
\end{equation}
\end{theorem}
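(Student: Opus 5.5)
The plan is to make precise the informal definition of $H_{\text{eff}}$ implicit in the statement, namely that $H_{\text{eff}}$ is the largest $t$ for which the contribution of $h_0$ to $h_t$ still exceeds the threshold relative to the accumulated input energy,
\[
\|\bar{A}^t h_0\|_2 \ \ge\ \varepsilon \sqrt{\lambda_{\max}(\mathcal{W}_c)},
\]
and then to upper bound the left-hand side by a geometrically decaying quantity. The ingredients are the Geometric Decay proposition, with its constant identified as $\kappa(\bar{A})=\|V\|_2\|V^{-1}\|_2$, and the interpretation of $\lambda_{\max}(\mathcal{W}_c)$ as the maximal energy reachable from unit-norm inputs.

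\textbf{Step 1 (decay of the initial-state term).} Write $\bar{A}=V\Lambda V^{-1}$. Then $\bar{A}^t=V\Lambda^tV^{-1}$ and $\|\Lambda^t\|_2=\rho(\bar{A})^t$. Submultiplicativity gives
\[
\|\bar{A}^t h_0\|_2 \le \kappa(\bar{A})\,\rho(\bar{A})^t\,\|h_0\|_2 .
\]
This is just the diagonalizable case of the Geometric Decay proposition.

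\textbf{Step 2 (energy normalization).} I would justify the reference scale $\sqrt{\lambda_{\max}(\mathcal{W}_c)}$ as follows. For inputs with $\|x_k\|_2\le 1$, the accumulated input contribution $\sum_k \bar{A}^k\bar{B}x_{t-k}$ has energy controlled by $\mathcal{W}_c$. In the standard $\ell_2$-energy reading, the reachable set with unit input energy is the ellipsoid $\{z: z^\top \mathcal{W}_c^{-1}z\le 1\}$, whose largest semi-axis is $\sqrt{\lambda_{\max}(\mathcal{W}_c)}$. This step is definitional: it fixes what "relative to the maximum accumulated input energy" means. I would state it as the adopted SNR criterion rather than derive it.

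\textbf{Step 3 (solve for $t$).} Suppose the memory of $h_0$ is still effective at horizon $t$. Combining Step 1 with the criterion gives
\[
\varepsilon\sqrt{\lambda_{\max}(\mathcal{W}_c)} \le \kappa\,\rho^t\,\|h_0\|_2 ,
\]
that is,
\[
\rho^{-t} \le \kappa\sqrt{\|h_0\|_2^2/(\varepsilon^2\lambda_{\max}(\mathcal{W}_c))}.
\]
Taking logarithms, and using $\log(1/\rho)>0$ since $\rho<1$, yields exactly \eqref{eq:horizon_main}. For the diagonal Mamba case of Remark~\ref{rem:diagonal} one has $\kappa=1$, and the bound reduces to a pure $\log/\log(1/\rho)$ form.

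\textbf{Main obstacle.} The main difficulty is Step 2, together with the choice of definition of $H_{\text{eff}}$. The inequality is only as meaningful as the SNR criterion, and for a selective SSM $\bar{A}_t$ varies with $t$, so $\bar{A}^t$ should really be the product $\prod_s\bar{A}_s$. I would first treat the frozen (time-invariant) operator, as the statement does. I would then remark that the same argument goes through with $\rho$ replaced by $\sup_s\rho(\bar{A}_s)$ when the $\bar{A}_s$ are simultaneously diagonalizable. This holds in the diagonal Mamba case, where the products commute and the bound is immediate.

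A second, minor issue is the case where the logarithm in the numerator is negative. There the bound simply says $H_{\text{eff}}\le 0$, meaning the initial state is never above threshold, which is consistent with the statement.
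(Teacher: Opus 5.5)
Your argument is correct, and its core matches the paper's appendix proof: the same bound $\|\bar{A}^t h_0\|_2 \le \kappa(\bar{A})\rho(\bar{A})^t\|h_0\|_2$ via $\bar{A}=V\Lambda V^{-1}$, followed by taking logarithms and dividing by $\log(1/\rho(\bar{A}))>0$.

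The difference is how the reference scale $\sqrt{\lambda_{\max}(\mathcal{W}_c)}$ enters. The paper tries to \emph{derive} it, in three moves:
\begin{enumerate}
\item It asserts $\|\eta_t\|_2^2 \le \lambda_{\max}(\mathcal{W}_c)$ for inputs with $\|x_k\|_2\le 1$.
\item It forms the ratio $\|\bar{A}^t h_0\|_2^2/\|\eta_t\|_2^2$ and replaces both numerator and denominator by their upper bounds, writing the result as an equivalence.
\item It solves for $t\ge(\cdot)$ and declares $H_{\text{eff}}$ to be that threshold.
\end{enumerate}
You instead \emph{define} effectiveness as $\|\bar{A}^t h_0\|_2 \ge \varepsilon\sqrt{\lambda_{\max}(\mathcal{W}_c)}$ and take $H_{\text{eff}}$ to be the largest such $t$. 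The upper bound then follows with the correct logical direction. This reading matches the theorem's own wording, ``relative to the maximum accumulated input energy.''

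Your route is the sounder of the two, for two reasons. First, an upper bound on $\|\eta_t\|_2$ can only lower-bound the ratio, never upper-bound it. Second, under a per-step constraint the Gramian bound is itself false. Take scalar $\bar{A}=a\in(0,1)$, $\bar{B}=1$, $x_k\equiv 1$: then $\eta_t^2\to 1/(1-a)^2$, which exceeds $\mathcal{W}_c=1/(1-a^2)$. For $a=1/2$ this already happens at $t=2$.

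What the paper's framing aims for, and yours gives up, is a criterion tied to the actual interference $\eta_t$ rather than a fixed scale. No worst-case bound of that form can hold anyway, since $\eta_t$ can vanish. What yours buys is a proof that closes. It also gives the extension to time-varying, simultaneously diagonalizable $\bar{A}_s$ (e.g.\ diagonal Mamba, where $\kappa=1$), which the paper only mentions as a caveat in its remark on stationarity.

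One correction to your Step 2 motivation. The ellipsoid $\{z: z^\top\mathcal{W}_c^{-1}z\le 1\}$, and the bound $\|\eta_t\|_2^2\le\lambda_{\max}(\mathcal{W}_t)\le\lambda_{\max}(\mathcal{W}_c)$, correspond to total input energy $\sum_k\|x_k\|_2^2\le 1$, not to $\|x_k\|_2\le 1$ for each $k$. Because you treat the step as definitional this does not affect the proof, but the justifying sentence should say total energy.
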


\begin{proof}[Proof sketch (complete proof in Appendix~\ref{proof:horizon})]
\textbf{Step 1.} We bound $\|\bar{A}^t h_0\|_2 \le \kappa(\bar{A}) \rho(\bar{A})^t \|h_0\|_2$ using the eigendecomposition of $\bar{A}$.
\textbf{Step 2.} We bound the maximal energy of the driven state $\eta_t = \sum_{k=0}^{t-1} \bar{A}^k \bar{B} x_{t-k}$ by the largest eigenvalue of the Controllability Gramian: $\|\eta_t\|_2^2 \le \lambda_{\max}(\mathcal{W}_c)$.
\textbf{Step 3.} The signal-to-noise ratio is given by $\|\bar{A}^t h_0\|_2^2 / \|\eta_t\|_2^2$. Enforcing this ratio to be at least $\varepsilon^2$ and solving for $t$ yields the horizon bound. \qed
\end{proof}

\begin{corollary}[Near-Critical Regime]
\label{cor:nearcritical}
For $\rho(\bar{A}) = 1 - \eta$ with $\eta \ll 1$, we have $\log(1/\rho) \approx \eta$ and thus
\begin{equation}
H_{\text{eff}} \lesssim \frac{\log(\kappa/\varepsilon)}{\eta} = O(1/\eta).
\label{eq:horizon_nearcritical}
\end{equation}
This explains the sharp performance cliff as $\rho$ crosses a critical threshold: a $1\%$ drop in $\rho$ (e.g., $0.99\to 0.98$) doubles the denominator, halving $H_{\text{eff}}$.
\end{corollary}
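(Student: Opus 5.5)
The corollary follows from Theorem~\ref{thm:horizon} by a first-order expansion of the denominator. I will also say precisely what ``$\lesssim$'' absorbs, since the numerator stated in Eq.~\eqref{eq:horizon_nearcritical} differs from the one in Eq.~\eqref{eq:horizon_main}.

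\textbf{Step 1 (denominator).} Set $\rho(\bar{A}) = 1-\eta$ with $0<\eta<1$. Then $\log(1/\rho) = -\log(1-\eta) = \sum_{j\ge 1}\eta^j/j$. This gives the two-sided bound
\[
\eta \le \log(1/\rho) \le \frac{\eta}{1-\eta}.
\]
So $\log(1/\rho) = \eta + O(\eta^2)$. The lower bound $\log(1/\rho)\ge \eta$ is the direction we need: it yields $1/\log(1/\rho) \le 1/\eta$ with no approximation at all, for every $\eta\in(0,1)$.

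\textbf{Step 2 (numerator).} The numerator in Eq.~\eqref{eq:horizon_main} is
\[
\log\!\left(\kappa\,\|h_0\|_2/(\varepsilon\sqrt{\lambda_{\max}(\mathcal{W}_c)})\right) = \log(\kappa/\varepsilon) + \tfrac12\log\!\left(\|h_0\|_2^2/\lambda_{\max}(\mathcal{W}_c)\right).
\]
The displayed form $\log(\kappa/\varepsilon)$ is therefore obtained under a normalization in which the initial state energy is of the same order as the maximal accumulated input energy, i.e.\ $\|h_0\|_2^2 \lesssim \lambda_{\max}(\mathcal{W}_c)$. This holds, for instance, when $h_0$ is itself a reachable state produced by bounded inputs, since such states satisfy $\|h_0\|_2^2\le \lambda_{\max}(\mathcal{W}_c)$ by Step~2 of the proof of Theorem~\ref{thm:horizon}. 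I will state this normalization explicitly. Under it the second term is $\le 0$ and can be dropped from the upper bound.

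\textbf{Step 3 (combine and interpret).} Combining Steps~1 and~2 gives
\[
H_{\text{eff}} \le \frac{\log(\kappa/\varepsilon)}{\log(1/\rho)} \le \frac{\log(\kappa/\varepsilon)}{\eta} = O(1/\eta),
\]
with the constant depending only on $\kappa$ and $\varepsilon$. For the ``halving'' remark, compare $\eta=0.01$ and $\eta=0.02$. The leading-order denominator $\eta$ doubles exactly. Using the sandwich bound from Step~1, the true denominator also changes by a factor of $2$ up to a relative error of $O(\eta)$, about 1\%. Hence the horizon bound roughly halves.

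\textbf{Main obstacle.} The delicate point is Step~2. The factor $\lambda_{\max}(\mathcal{W}_c)$ itself depends on $\rho$ and can blow up like $1/(1-\rho^2)\sim 1/(2\eta)$ near criticality. This contributes an additional $\tfrac12\log(1/\eta)$ to the numerator when $\|h_0\|$ is held fixed. So I must either adopt the normalization above, which makes the term nonpositive, or accept the slightly weaker $O(\log(1/\eta)/\eta)$ rate. I would present the clean $O(1/\eta)$ statement under the explicit normalization.
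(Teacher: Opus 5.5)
Your Steps~1--3 are the paper's own argument: the corollary is just Theorem~\ref{thm:horizon} with $\log(1/\rho)$ replaced by $\eta$ and the factor $\|h_0\|_2^2/\lambda_{\max}(\mathcal{W}_c)$ silently absorbed into $\lesssim$. Your version is more careful: you use the exact inequality $\log(1/\rho)\ge\eta$ and state the normalization $\|h_0\|_2^2\le\lambda_{\max}(\mathcal{W}_c)$ as an explicit hypothesis. That conditional derivation is correct, including the halving remark read as a statement about the bound rather than about $H_{\text{eff}}$ itself.

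Two of your supporting remarks are wrong, however.

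\textbf{The justification via Step~2.} You claim reachable states satisfy $\|h_0\|_2^2\le\lambda_{\max}(\mathcal{W}_c)$ ``by Step~2.'' That inherits an invalid bound. The Gramian controls the driven state only under a total-energy budget $\sum_k\|x_k\|_2^2\le 1$, not under the per-step budget $\|x_k\|_2\le 1$ used in Theorem~\ref{thm:horizon}. Take scalar $\bar A=1-\eta$, $\bar B=1$, $x_k\equiv 1$. The driven state $\sum_{k<t}\bar A^k\bar B x_{t-k}$ tends to $1/\eta$, while $\mathcal{W}_c=1/(\eta(2-\eta))$. So its squared norm exceeds $\lambda_{\max}(\mathcal{W}_c)$ by a factor of roughly $2/\eta$. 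Keep the normalization as a bare assumption, or restrict it to unit-energy inputs.

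\textbf{The ``main obstacle'' has the sign reversed.} Suppose $\lambda_{\max}(\mathcal{W}_c)$ grows like $1/(2\eta)$ with $\|h_0\|_2$ fixed. Then the term $\tfrac12\log(\|h_0\|_2^2/\lambda_{\max}(\mathcal{W}_c))$ \emph{decreases} by about $\tfrac12\log(1/\eta)$, which only tightens the bound. In fact, for fixed $\bar B$ no normalization is needed. Since $\mathcal{W}_c\succeq\bar B\bar B^\top$, we get $\lambda_{\max}(\mathcal{W}_c)\ge\|\bar B\|_2^2$. Hence the numerator is at most $\log(\kappa\|h_0\|_2/(\varepsilon\|\bar B\|_2))$, independent of $\eta$, which already gives $O(1/\eta)$.

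A genuine $\log(1/\eta)$ loss arises in the opposite regime, where $\lambda_{\max}(\mathcal{W}_c)$ becomes small relative to $\|h_0\|_2^2$. Two examples:
\begin{itemize}
\item Under the ZOH coupling $\bar B=(\bar A-I)A^{-1}B$, the scalar case gives $\bar b=b\eta/|a|$ and $\mathcal{W}_c=b^2\eta/(a^2(2-\eta))\to 0$.
\item For $h_0$ reachable under per-step-bounded inputs, the ratio $\|h_0\|_2^2/\lambda_{\max}(\mathcal{W}_c)$ can be of order $2/\eta$, as in the example above.
\end{itemize}
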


\begin{remark}[Interpreting the Horizon Bound]
\label{rem:horizon_interpretation}
The bound in Theorem~\ref{thm:horizon} is a \emph{worst-case upper limit}: it guarantees that context from $h_0$ \emph{cannot} persist beyond $H_{\text{eff}}$ steps, but does not predict that it \emph{will} persist that long. Three factors make the realized horizon substantially shorter than the theoretical ceiling: (i)~the bound assumes a \emph{stationary} spectral radius, whereas selective SSMs modulate $\Delta_t$ (and hence $\rho(\bar{A}_t)$) on every token, causing fluctuations that accelerate information loss; (ii)~it measures signal-to-noise relative to maximum controllability energy, ignoring task-specific capacity bottlenecks (e.g., binding slots, output projection rank); and (iii)~it does not account for interference from subsequent inputs, which overwrite state dimensions even when $\rho$ remains high. The quantitative gap between theoretical $H_{\text{eff}} \approx 1.17 \times 10^6$ and empirical horizons $\lesssim 10^3$ tokens is therefore \emph{expected}---analogous to how the Shannon channel capacity provides an unreachable ceiling that practical codes approach but never attain. The bound's primary utility is \emph{diagnostic}: when $\rho$ drops, Theorem~\ref{thm:horizon} guarantees that $H_{\text{eff}}$ \emph{must} shrink, providing a mechanistic warning signal even though the exact quantitative prediction is loose.
\end{remark}

\subsection{Adversarial Spectral Collapse as a Dynamical Failure Mode}

Selective SSMs expose an attack surface: token choice influences $\Delta_t$ via learned projections, which in turn determines $\bar{A}_t = \exp(\Delta_t A)$. An adversary can optimize token sequences to minimize $\rho(\bar{A}_t)$:

\begin{equation}
x^\star_{1:T} = \arg\min_{x_{1:T} \in \mathcal{X}} \; \sum_{t=1}^{T} \rho\!\left(\exp(\Delta_t(x_{1:t}) A)\right),
\label{eq:attack_objective}
\end{equation}

subject to constraints that outputs remain plausible (e.g., low perplexity, no toxic keywords). This forces eigenvalues $\lambda_i \to 0$, inducing \emph{spectral collapse}. By Theorem~\ref{thm:horizon}, when $\rho(\bar{A})$ drops from $\approx 0.98$ to $\approx 0.30$, $H_{\text{eff}}$ shrinks by orders of magnitude, destroying long-range reasoning capacity.

\subsection{Limitations of Output-Only Defenses}

We now formalize why defenses monitoring only outputs (logits or generated text) can be evaded by spectral collapse attacks. The following result is a \emph{constructive evasion theorem}: for any output-only detector, we exhibit an adversarial input that simultaneously passes the detector and induces spectral collapse. This is not an information-theoretic impossibility (which would require proving that no output-only defense can ever succeed against any attack), but rather an existence result demonstrating a concrete and practical limitation.

\begin{theorem}[Evasion Existence for Output-Only Detectors]
\label{thm:impossibility}
Let $D: \mathcal{Y}^* \to \{0,1\}$ be any defense mechanism operating solely on output sequences $y_{1:T}$ (or logit distributions $p(y_t|x_{1:t})$). For any error tolerance $\delta > 0$, there exists an adversarial input $x^\star_{1:T}$ such that:
\begin{enumerate}
\item $\mathbb{P}[D(y_{1:T}(x^\star)) = 0] \ge 1 - \delta$ (passes defense), and
\item $\rho(\bar{A}_t | x^\star) < \rho_{\text{critical}}$ for some $t \in [1,T]$ (induces spectral collapse).
\end{enumerate}
\end{theorem}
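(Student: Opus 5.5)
The plan is a direct construction. It exploits two facts. First, condition~2 only asks for collapse at \emph{one} step $t$. Second, a collapse step acts as a near-reset of the hidden state, after which the model behaves like a fresh run on a benign input. As literally stated, the theorem fails for degenerate detectors, e.g.\ $D\equiv 1$ rejects everything. So I will make explicit two mild hypotheses and state them as part of the proof. (a) $D$ is non-trivial on benign traffic: there is a benign input $x^{b}_{1:T'}$ whose (possibly stochastic) output $y^{b}$ satisfies $\mathbb{P}[D(y^{b})=0]\ge 1-\delta/2$. (b) The step-size map $\Delta_t=\mathrm{softplus}(\mathrm{Linear}(x_t))$ has a nonzero input direction, so $\Delta_t$ can be made arbitrarily large.

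\textbf{Step 1 (forcing collapse at one step).} By Remark~\ref{rem:diagonal}, $\rho(\bar A_t)=\max_i \exp(\Delta_t\,\mathrm{Re}(a_i))$. Since $\mathrm{Re}(a_i)<0$ for all $i$, we have $\rho(\bar A_t)\to 0$ as $\Delta_t\to\infty$. The same holds for general stable $A$, since $\rho(\exp(\Delta A))=\exp(\Delta\max_i\mathrm{Re}\,\lambda_i(A))$. By (b), there is an input (token or embedding) $x^{c}$ with $\Delta(x^{c})$ large enough that $\rho(\bar A)<\rho_{\text{critical}}$, and indeed $\|\bar A\|\le\kappa\rho(\bar A)$ can be made arbitrarily small by the Geometric Decay proposition. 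This secures condition~2 at the chosen step.

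\textbf{Step 2 (the collapse step acts as a reset).} Let $h_{t_c}=\bar A_{t_c}h_{t_c-1}+\bar B_{t_c}x^{c}$. The first term is $O(\kappa\rho)$ and hence negligible. The injected term $\bar B_{t_c}x^{c}$ is some fixed state $h^{c}$. Form the adversarial input $x^\star=(x^{c},x^{b}_{1:T'})$, placing the collapse token first so that nothing precedes it. From $h^{c}$, the model processes the benign suffix. By continuity of the network in its initial state, and using the unrolled form~\eqref{eq:unrolled} in which the contribution of $h^{c}$ to later states is multiplied by products of $\bar A_t$, the suffix output distribution differs from that of the benign run by at most some $\gamma(h^{c})$ in total variation. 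If $x^{c}$ can also be chosen so that $\bar B x^{c}$ is small, then $\gamma\to 0$. Otherwise I would absorb $h^{c}$ by letting the benign reference run itself start from $h^{c}$, defining $y^{b}$ in (a) relative to that start.

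\textbf{Step 3 (passing the detector).} The detector sees $y_{1:T}=(y^{c},y_{\text{suffix}})$. This is the step I expect to be the main obstacle, because $D$ also sees the output $y^{c}$ emitted at the collapse step and could reject on it. My first approach is to define the benign reference in (a) as the run that starts with the same first token $x^{c}$, so that the prefix output is part of an accepted sequence by construction. Equivalently, (a) is assumed for some benign sequence containing a single high-$\Delta$ token, which is natural since selective SSMs routinely emit large $\Delta_t$ on ``reset'' tokens. With that in place, total-variation closeness yields $\mathbb{P}[D(y(x^\star))=0]\ge 1-\delta/2-\gamma\ge 1-\delta$ once $\gamma\le\delta/2$. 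Combining this with Step~1 gives both conditions. A closing remark will stress that the argument uses only the fact that $D$ factors through outputs, so that spectral collapse is invisible to $D$ whenever the output channel is locally insensitive to the collapsed state.
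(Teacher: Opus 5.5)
Your Step~1 is fine under (b). You are also right that the statement is false as written: for $D\equiv 1$, $\mathbb{P}[D=0]=0$. The paper's proof never addresses this. Its Step~5 obtains $\mathbb{P}[D(y(x^\star))=0]\ge 1-\delta$ ``by design'' from $\mathcal{L}_{\text{output}}(x^\star)<\epsilon$. That silently assumes $D$ accepts benign outputs, and that the KL-constrained problem has a feasible point with $\rho<\rho_{\text{critical}}$.

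The gap is in your Step~3. You restate (a) as ``$D$ accepts, with probability $\ge 1-\delta/2$, a benign run that begins with the collapse token $x^{c}$.'' That reference run has $\rho(\bar A_1)<\rho_{\text{critical}}$ by Step~1, so it already satisfies both conclusions. With suffix $x^{b}_{1:T'}$ it is literally your $x^\star=(x^{c},x^{b}_{1:T'})$, so $\gamma=0$ and Steps~1--2 do no work. The conclusion has been moved into the hypothesis. Similarly, if benign text ``routinely'' contains such reset tokens, any accepted benign input is already a witness, and the theorem becomes vacuous rather than proved.

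Without the restatement, the inequality $\mathbb{P}[D(y(x^\star))=0]\ge 1-\delta/2-\gamma$ does not follow. Total-variation closeness of the suffix outputs to $y^{b}$ says nothing about how an arbitrary $D$ treats the concatenation $(y^{c},y_{\text{suffix}})$. That sequence has a different length and an extra first symbol, and $D$ may reject on exactly those features.

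Step~2 does not rescue this, because a large $\Delta_t$ does not make the injected state small. Under the paper's ZOH formula, $\bar B_t=(\bar A_t-I)A^{-1}B_t\to -A^{-1}B_t\neq 0$. Under Mamba's implemented $\bar B_t=\Delta_tB_t$ it grows. So making $\bar B_t x^{c}$ small is an extra constraint on the same token, and (b) does not supply it. The benign suffix runs at $\rho\approx 0.98$, so the offset $h^{c}$ decays only like $0.98^{k}$. Continuity then gives some $\gamma(h^{c})$, but no reason to expect $\gamma\le\delta/2$. Your fallback of starting the reference run from $h^{c}$ is the same circular move, since that state is realized only by the prefix $x^{c}$.

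What is missing is a genuine indistinguishability step. You need some $x^\star$ with $\rho_t(x^\star)<\rho_{\text{critical}}$ for some $t$ whose \emph{entire} output law is within $\delta/2$ in total variation of the output law of an input $x^{b}$. That $x^{b}$ must itself keep $\rho_t\ge\rho_{\text{critical}}$ for all $t$, and $D$ must be assumed to have small false-positive rate on it. Given such a pair, condition~1 follows by data processing. This is the sound part of the paper's KL route, via Pinsker's inequality. But neither your reset construction nor the paper's ``PGD finds local minima'' assertion shows that such a pair exists. For an arbitrary output-only $D$ it has to be stated as an explicit hypothesis, and the result is then a short corollary rather than an existence theorem.
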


\begin{proof}[Proof sketch (complete proof in Appendix~\ref{proof:impossibility})]
\textbf{Step 1 (Existence of plausible low-$\rho$ states).} By continuity of the mapping $\Delta \mapsto \bar{A}(\Delta) = \exp(\Delta A)$ and $\bar{A} \mapsto \rho(\bar{A})$, for any target $\rho_{\text{target}} < \rho_{\text{critical}}$, there exists a $\Delta^\star$ such that $\rho(\exp(\Delta^\star A)) = \rho_{\text{target}}$. By injectivity of learned projections (in high-dimensional embedding space), there exist token sequences $x_{1:t}$ that induce $\Delta_t \approx \Delta^\star$.

\textbf{Step 2 (Output indistinguishability).} The output mapping $h_t \mapsto y_t = C h_t + D x_t$ is a high-dimensional projection from $\mathbb{R}^{16} \to \mathbb{R}^{50,257}$. Information is lost because: (a) a compressed state (low $\rho(\bar{A})$) maps distinct internal histories $h_t$ to nearby points in the state space, which are then projected to similar output logits; (b) this compression via eigenvalue decay destroys the internal separability of states, even if output distributions remain superficially similar. The core issue is the collapse of the internal state's representational capacity, which is not fully observable through the output projection.

\textbf{Step 3 (Optimization with output constraint).} Solve the constrained optimization:
\begin{equation}
\min_{x_{1:T}} \sum_{t} \rho(\bar{A}_t) \quad \text{s.t.} \quad \text{KL}\big(p_{\text{model}}(\cdot|x_{1:t}) \| p_{\text{benign}}(\cdot)\big) < \epsilon,
\end{equation}
where $p_{\text{benign}}$ is the output distribution of benign prompts. By Lagrangian relaxation and gradient descent, this yields $x^\star$ satisfying both conditions with probability $\ge 1-\delta$. \qed
\end{proof}

\textbf{Implications.} Theorem~\ref{thm:impossibility} demonstrates that for \emph{any} classifier or filter operating on outputs alone---including perplexity thresholds, toxicity detectors, or pattern matchers---there exist spectral collapse attacks that evade detection. The core vulnerability is that internal state compression is not fully observable through the output projection $C$. While this does not preclude the existence of output-only defenses that succeed against specific attack families, it establishes that no output-only method can provide universal coverage against the class of spectral collapse attacks constructed here. This motivates defenses that directly monitor internal dynamics.

\subsection{SpectralGuard: Completeness and Soundness}

SpectralGuard monitors $\rho(\bar{A}_t)$ online and triggers an alert when it drops below threshold $\rho_{\min}$. We formalize its guarantees under an empirically motivated assumption about the mechanism of memory-collapsing attacks.

\begin{theorem}[SpectralGuard Conditional Soundness and Completeness]
\label{thm:guard}
Let the following empirically supported assumption hold:
\begin{enumerate}
\item \textbf{Spectral collapse necessity (empirical assumption):} Any effective memory-collapsing attack---i.e., one inducing $>20\%$ accuracy degradation---must reduce the spectral radius below a critical threshold: $\rho(\bar{A}_t) \le \rho_{\min}$ for at least one token $t$ within a window of length $w$.
\end{enumerate}
Under this assumption, SpectralGuard with threshold $\rho_{\min}$ and window $w$ is:
\begin{itemize}
\item \textbf{Conditionally Complete}: All attacks satisfying the spectral collapse assumption are detected within latency $w$.
\item \textbf{Sound}: Benign inputs maintaining $\rho(\bar{A}_t) > \rho_{\min}$ for all $t$ incur zero false positives (FPR $\to 0$).
\end{itemize}
\end{theorem}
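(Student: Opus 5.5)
The argument will be almost definitional once SpectralGuard's decision rule is written precisely, so I first fix that rule. For every layer $\ell$ and token $t$ the monitor computes $\rho(\bar{A}^{(\ell)}_t)$. By Remark~\ref{rem:diagonal} this is exactly $\max_i \exp(\Delta_t \mathrm{Re}(a_i))$, with no approximation error in the diagonal case. The monitor raises an alarm at time $t$ if and only if
\[
\min_{s\in[t-w+1,t]} \min_\ell \rho(\bar{A}^{(\ell)}_s) \le \rho_{\min}.
\]
I will treat the idealized monitor with exact $\rho$ first. Afterwards I will discuss the power-method or windowed variant, where an approximation error $\epsilon_k$ must be absorbed by a margin.

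\textbf{Completeness.} Take any attack causing $>20\%$ accuracy degradation. The spectral collapse necessity assumption supplies a token $t^\star$ inside some window of length $w$ with $\rho(\bar{A}_{t^\star})\le\rho_{\min}$. The monitor evaluates $\rho$ at every token. Hence at time $t^\star$ the windowed minimum is at most $\rho_{\min}$ and the alarm fires. Because the collapsing token lies within the window, the alarm fires no later than $w$ tokens after the attack window opens, which gives detection latency at most $w$. Nothing beyond the assumption is needed; the content is only that the monitored statistic is exactly the quantity named in the assumption.

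\textbf{Soundness.} Suppose a benign input satisfies $\rho(\bar{A}_t)>\rho_{\min}$ for every $t$ and every layer. Then every windowed minimum is a minimum of finitely many numbers, each strictly greater than $\rho_{\min}$, so it is itself strictly greater than $\rho_{\min}$. The alarm therefore never fires, and on this class the FPR is $0$. The statement's ``FPR $\to 0$'' I read as follows: the empirical false-positive rate over a benign distribution is bounded by $\mathbb{P}_{\text{benign}}[\exists t:\rho_t\le\rho_{\min}]$, which vanishes when benign inputs satisfy the hypothesis almost surely.

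\textbf{Main obstacle.} The hard step is making the guarantees survive the practical estimator. Power iteration with $k$ steps underestimates $\rho$ for general matrices, and any smoothing of $\rho$ over the window shifts the statistic. My first approach is to show $|\hat\rho_t-\rho_t|\le\epsilon_k$, then run the detector at threshold $\rho_{\min}+\epsilon_k$ for completeness. Soundness would then hold for benign inputs with margin $\rho_t>\rho_{\min}+2\epsilon_k$. In the diagonal Mamba case $\epsilon_k=0$ and the idealized argument applies verbatim. I would state this explicitly, so the theorem is exact for Mamba and holds up to $\epsilon_k$ margins for general $\bar{A}$.
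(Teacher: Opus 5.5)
Your proof is correct and follows the same route as the paper's appendix argument: both guarantees are read off directly from the threshold rule, and the estimator's error is absorbed by a margin. Your handling of that margin is cleaner than the paper's. Running the detector at $\rho_{\min}+\epsilon_k$ and requiring $\rho_t>\rho_{\min}+2\epsilon_k$ for benign inputs is what makes the estimated monitor work. The appendix's chains $\hat\rho_t\le 0.56<\rho_{\min}$ (with $\rho_{\min}=0.30$) and $\hat\rho_t>\rho_{\min}-\delta_{\text{est}}>\rho_{\min}$ do not hold as written. Your non-strict test $\le\rho_{\min}$ also matches the assumption, whereas the strict test $<\rho_{\min}$ in Algorithm~\ref{alg:spectralguard} would miss the boundary case $\rho_t=\rho_{\min}$.

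One claim needs correcting. $\epsilon_k=0$ in the diagonal case holds only if the monitor evaluates $\max_i\exp(\Delta_t a_i)$ directly. It fails if the monitor runs $k$ power iterations, as Algorithm~\ref{alg:spectralguard} does. Take $\bar A_t=\mathrm{diag}(d_1,\dots,d_n)$ with $d_i=\exp(\Delta_t a_i)\in(0,1)$ and start vector $v^{(0)}$. The $k$-step Rayleigh estimate is
\[
\hat\rho_t=\frac{\sum_i d_i^{2k+1}\,(v^{(0)}_i)^2}{\sum_i d_i^{2k}\,(v^{(0)}_i)^2}.
\]
This is a weighted average of the $d_i$, so $\hat\rho_t\le\rho_t$. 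The inequality is generally strict, and the gap is non-negligible when several $d_i$ sit near the maximum.

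This one-sided error actually helps you. In the diagonal Mamba case, completeness survives the power-method estimator with no threshold shift. Only soundness needs a margin above $\rho_{\min}$, and that margin holds only with high probability, since the error depends on the random $v^{(0)}$.

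Relatedly, the underestimation you attribute to general matrices is special to this normal, positive-spectrum case. For non-normal $\bar A_t$ the Rayleigh quotient can exceed $\rho$. That is why your two-sided $\epsilon_k$ hypothesis is the right one for general $\bar A_t$.
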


\begin{proof}[Proof sketch (complete proof in Appendix~\ref{proof:guard})]
\textbf{Completeness.} If an attack forces $\rho(\bar{A}_t) \le \rho_{\min}$ at some token $t$, the monitor observes $\hat{\rho}_t \le \rho_{\min} + \delta_{\text{est}}$ where $\delta_{\text{est}}$ is power method error. For $k=3$ iterations, $\delta_{\text{est}} \le \rho(\bar{A})^{2k} \|\text{error}\|_2 \approx 10^{-6}$ (negligible). Thus the detector triggers with probability $\ge 1 - \mathbb{P}[\text{estimation failure}] \approx 1$.

\textbf{Soundness.} Benign prompts maintain $\rho(\bar{A}_t) > \rho_{\min}$ by the empirical separation observed in Sections~\ref{sec:results} and~\ref{sec:ablations}. With bounded estimation error, the monitor does not trigger, yielding FPR$\to 0$. \qed
\end{proof}

\begin{remark}[On the Spectral Collapse Assumption]
\label{rem:conditional_completeness}
The completeness guarantee in Theorem~\ref{thm:guard} is \emph{conditional} on the assumption that memory-collapsing attacks necessarily reduce $\rho(\bar{A}_t)$. We emphasize that this assumption is empirically supported---not formally proven. Supporting evidence includes: (i)~a statistically significant correlation between $\rho$ and task accuracy ($r=0.49$, $p < 10^{-26}$; Section~\ref{sec:results}); (ii)~causal intervention experiments demonstrating that directly clamping $\rho$ degrades performance (Section~\ref{sec:causal_validation_results}); and (iii)~consistent spectral collapse signatures across four distinct task categories (Table~\ref{tab:multitask}). However, this evidence does not exclude the theoretical possibility of attacks that degrade memory through mechanisms other than spectral radius reduction---for instance, adversarial manipulation of eigenvector alignment or selective corruption of specific state dimensions while preserving the overall $\rho$. Such attacks would evade SpectralGuard in its current form and represent an important direction for future work (see Section~\ref{sec:discussion}).
\end{remark}

\subsection{Lipschitz Certified Perturbation Bound}

\begin{theorem}[Lipschitz continuity of spectral radius under discretization]
\label{thm:lipschitz_rho}
Let $A\in\mathbb{R}^{d\times d}$ be fixed and define $\Phi(\Delta)=\exp(\Delta A)$ for $\Delta\in[\Delta_{\min},\Delta_{\max}]$. Then for any $\Delta_1,\Delta_2$ in this interval:
\begin{equation}
\left| \rho(\Phi(\Delta_1))-\rho(\Phi(\Delta_2)) \right|
\le
L_A \cdot |\Delta_1-\Delta_2|,
\end{equation}
where one valid constant is
\begin{equation}
L_A = \|A\|_2 \cdot \exp(\Delta_{\max}\|A\|_2).
\end{equation}
\end{theorem}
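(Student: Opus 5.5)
The plan is to avoid generic eigenvalue perturbation theory altogether. The spectral radius is not Lipschitz on general matrices; near a Jordan block it behaves like a fractional power of the perturbation. So instead of bounding $\rho$ through $\|\Phi(\Delta_1)-\Phi(\Delta_2)\|$, I will use the special structure $\Phi(\Delta)=\exp(\Delta A)$. The idea is to compute $\rho(\Phi(\Delta))$ in closed form as a scalar function of $\Delta$, and then apply the one-dimensional mean value theorem. Throughout I take $\Delta_{\min}\ge 0$, consistent with $\Delta_t$ being the output of a softplus.

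\textbf{Step 1 (spectral mapping).} Let $\lambda_1,\dots,\lambda_d$ be the eigenvalues of $A$. By the spectral mapping theorem for the entire function $z\mapsto e^{\Delta z}$, which holds for arbitrary (not necessarily diagonalizable) $A$ via the Jordan form, the eigenvalues of $\exp(\Delta A)$ are exactly $e^{\Delta\lambda_i}$. Hence $|e^{\Delta\lambda_i}|=e^{\Delta\,\mathrm{Re}\,\lambda_i}$. Let $\alpha(A)=\max_i \mathrm{Re}\,\lambda_i$ be the spectral abscissa. Because $\Delta\ge 0$ and $\exp$ is increasing, the maximum commutes with the map $\Delta\mapsto e^{\Delta\cdot}$, so
\begin{equation}
\rho(\Phi(\Delta)) = g(\Delta) := e^{\Delta\,\alpha(A)}.
\end{equation}
This generalizes Eq.~\eqref{eq:diagonal_rho} from Remark~\ref{rem:diagonal} to full matrices.

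\textbf{Step 2 (mean value theorem).} The function $g$ is smooth, with $g'(\Delta)=\alpha(A)\,e^{\Delta\alpha(A)}$. For $\Delta_1,\Delta_2\in[\Delta_{\min},\Delta_{\max}]$ there is a $\xi$ between them with $|g(\Delta_1)-g(\Delta_2)| = |\alpha(A)|\,e^{\xi\alpha(A)}\,|\Delta_1-\Delta_2|$. It remains to bound the two factors.
\begin{itemize}
\item First, $|\alpha(A)|\le \max_i|\lambda_i| = \rho(A)\le\|A\|_2$, by the primer fact that $\rho\le\|\cdot\|$ for any induced norm.
\item Second, $e^{\xi\alpha(A)}\le e^{\xi|\alpha(A)|}\le e^{\Delta_{\max}\|A\|_2}$, since $0\le\xi\le\Delta_{\max}$.
\end{itemize}
Multiplying the two bounds gives $L_A=\|A\|_2 e^{\Delta_{\max}\|A\|_2}$. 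The bound is loose: for stable $A$ we have $\alpha<0$, so $e^{\xi\alpha}\le 1$ and even $L_A=\|A\|_2$ suffices. I will note this sharpening as a remark.

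\textbf{Main obstacle.} The only delicate point is Step 1: justifying that $\rho(\exp(\Delta A))$ is \emph{exactly} $e^{\Delta\alpha(A)}$ for non-normal, possibly defective $A$. Invoking the spectral mapping theorem, rather than an eigendecomposition, handles this cleanly. I also need to be careful that the maximum over $i$ is preserved, which relies on $\Delta\ge 0$; if negative $\Delta$ were allowed, the maximizing index would switch to the minimal real part. As a fallback that avoids spectral mapping, I could instead bound $\|\Phi(\Delta_1)-\Phi(\Delta_2)\|_2\le\|A\|_2 e^{\Delta_{\max}\|A\|_2}|\Delta_1-\Delta_2|$ via $\frac{d}{d\Delta}e^{\Delta A}=Ae^{\Delta A}$. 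However, that only controls $\rho$ for normal matrices, so the Step 1 route is the one I would commit to.
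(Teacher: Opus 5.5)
Your proof is correct, and it takes a different route from the paper's. Write $E_i=\exp(\Delta_iA)$. The paper works at the matrix level. It asserts $|\rho(E_1)-\rho(E_2)|\le\|E_1-E_2\|_2$, justified only by saying $\rho$ is 1-Lipschitz ``on normal neighborhoods.'' It then bounds $\|E_1-E_2\|_2$ via $E_1-E_2=\int_{\Delta_2}^{\Delta_1}Ae^{tA}\,dt$ and $\|e^{tA}\|_2\le e^{\Delta_{\max}\|A\|_2}$. This is essentially the fallback you set aside.

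You instead use spectral mapping to get the exact formula $\rho(\Phi(\Delta))=e^{\Delta\alpha(A)}$, followed by a scalar mean value argument. Your route buys three things:
\begin{itemize}
\item \textbf{Generality.} It holds for arbitrary, even defective, $A$. There $E_i$ is non-normal and the paper's stated justification does not apply.
\item \textbf{A sharper constant.} You get $|\alpha(A)|\max\{e^{\Delta_{\min}\alpha(A)},e^{\Delta_{\max}\alpha(A)}\}$.
\item \textbf{An explicit hypothesis.} It surfaces $\Delta_{\min}\ge 0$, which the paper uses only implicitly: its bound on $\|e^{tA}\|_2$ needs $|t|\le\Delta_{\max}$. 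Without this hypothesis the stated constant is false. Take $A=-I$ on $[-10,0]$: then $L_A|\Delta_1-\Delta_2|=10$, but $|\rho(\Phi(-10))-\rho(\Phi(0))|=e^{10}-1$.
\end{itemize}
The paper's route, where valid, gives the stronger intermediate bound on $\|E_1-E_2\|_2$. That controls the whole transition operator, not only its spectral radius.

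Your fallback is also less hopeless than you suggest. A Schur basis of $A$ simultaneously upper-triangularizes $E_1$ and $E_2$, so $E_1-E_2$ has eigenvalues $e^{\Delta_1\lambda_i}-e^{\Delta_2\lambda_i}$. For $j$ attaining $\alpha(A)$ and $\Delta_1,\Delta_2\ge 0$, this gives $\|E_1-E_2\|_2\ge|e^{\Delta_1\lambda_j}-e^{\Delta_2\lambda_j}|\ge|e^{\Delta_1\alpha(A)}-e^{\Delta_2\alpha(A)}|=|\rho(E_1)-\rho(E_2)|$. So your spectral-mapping observation is exactly what repairs the paper's first inequality for non-normal $A$ within this commuting family.
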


\begin{proof}[Proof sketch]
For any matrix $M$, $\rho(M)\le\|M\|_2$ and $\rho(\cdot)$ is 1-Lipschitz with respect to operator-norm perturbations on normal neighborhoods. Let $E_i=\exp(\Delta_iA)$ for $i\in\{1,2\}$. Then
\begin{align}
|\rho(E_1)-\rho(E_2)|
&\le
\left\| E_1-E_2 \right\|_2.
\end{align}
By the integral form of matrix exponential differences:
\begin{align}
E_1-E_2
&=
\int_{\Delta_2}^{\Delta_1} A\exp(tA)\,dt,
\end{align}
thus
\begin{align}
\left\| E_1-E_2 \right\|_2
&\le
|\Delta_1-\Delta_2|\, \|A\|_2
\, M, \\
M
&\le
\exp(\Delta_{\max}\|A\|_2).
\end{align}
This yields the stated bound.
\end{proof}

\textbf{Practical implications.} The Lipschitz constant $L_A$ provides a \emph{certified robustness radius} for SpectralGuard thresholds. Specifically, an adversary seeking to shift the spectral radius by $|\Delta\rho|$ must perturb the discretization step by at least $|\Delta\Delta| \ge |\Delta\rho| / L_A$. For Mamba-130M with $\|A\|_2 \approx 1$ and $\Delta_{\max} \approx 10$, we have $L_A \approx e^{10} \approx 2.2 \times 10^4$, meaning a spectral shift of $\Delta\rho = 0.01$ requires $|\Delta\Delta| \ge 4.5 \times 10^{-7}$---an extremely small but nonzero perturbation budget. More importantly, the bound provides a quantitative link to the adaptive arms race (Section~\ref{sec:discussion}): an attacker attempting threshold evasion by keeping $\rho$ just above $\rho_{\min}$ must control $\Delta_t$ with precision $O(1/L_A)$. As model depth increases and $L_A$ grows (compounding through layers), even small discretization noise provides a natural barrier against fine-grained spectral manipulation. This complements the detection guarantees of Theorem~\ref{thm:guard} with a perturbation-theoretic perspective on defense robustness.

\section{Methodology}
\label{sec:method}

\subsection{Model Architecture and Training}

We evaluate a pretrained Mamba-130M configuration (\texttt{state-spaces/mamba-130m-hf})~\cite{gu2023mamba} with the following specifications:
\begin{itemize}
\item \textbf{Layers}: 24
\item \textbf{Model dimension}: $d_{\text{model}} = 768$
\item \textbf{State dimension}: $d_{\text{state}} = 16$
\item \textbf{Vocabulary size}: $|\mathcal{V}| = 50{,}257$
\item \textbf{Pretraining data}: The Pile~\cite{gao2020pile} (300B tokens)
\item \textbf{Parameters}: $\approx$ 130M total
\end{itemize}

The model uses HiPPO initialization for the continuous matrix $A$ and learned projections for $\Delta_t$, $B_t$, $C_t$.

\subsection{Tasks and Benchmarks}

We evaluate across four task categories with sample sizes that reflect dataset availability and computational cost: $N{=}500$ (Associative Recall), $N{=}200$ (LongBench QA), $N{=}350$ (GSM8K), and $N{=}100$ (HumanEval). All experiments use three random seeds ($\{42, 123, 456\}$) unless otherwise noted.

\paragraph{Task 1: Associative Recall ($N{=}500$).}
Synthetic benchmark with key--value pairs, intervening noise, and delayed queries. Context distances $d \in \{10, 50, 100, 200, 500, 1000\}$ tokens.

\paragraph{Task 2: Long-Context QA ($N{=}200$).}
Subset of LongBench~\cite{bai2023longbench} requiring information extraction from documents of 5k--10k tokens.

\paragraph{Task 3: Mathematical Reasoning ($N{=}350$).}
GSM8K~\cite{cobbe2021training} grade-school math word problems requiring multi-step chain-of-thought.

\paragraph{Task 4: Code Generation ($N{=}100$).}
HumanEval~\cite{chen2021evaluating}: Python function synthesis from docstrings.

\subsection{Spectral Radius Estimation}

For deployment efficiency, we estimate $\rho(\bar{A}_t)$ using $k=3$ iterations of the power method:

\begin{align}
v^{(0)} &\sim \mathcal{N}(0, I) \\
v^{(m)} &= \frac{\bar{A}_t v^{(m-1)}}{\|\bar{A}_t v^{(m-1)}\|} \quad m \in \{1,2,3\} \\
\hat{\rho}_t &= (v^{(3)})^\top \bar{A}_t v^{(3)}
\end{align}

For exact validation, we compute eigenvalues via \texttt{numpy.linalg.eig} (used in experiments, not production).

\subsection{Threat Model}
\label{sec:threat_model}

We assume a \emph{white-box} attacker with full access to model parameters $(A, B, C, \mathbf{W}_\Delta)$ and the ability to compute gradients through the selective discretization pathway $x_t \to \Delta_t \to \bar{A}_t \to \rho(\bar{A}_t)$. The attacker's goal is to minimize $\rho(\bar{A}_t)$, collapsing effective memory while keeping outputs superficially plausible. Adversarial tokens are constrained to the model vocabulary $\mathcal{V}$ (discrete space). The defender (SpectralGuard) has white-box access to internal $\bar{A}_t$ matrices during inference but no prior knowledge of attacker intent.

We evaluate two attacker regimes: a \emph{non-adaptive} attacker (unaware of SpectralGuard's threshold $\rho_{\min}$) and an \emph{adaptive} attacker (aware of the threshold and attempting evasion via gradient-optimized threshold proximity or multi-layer feature imitation).

\subsection{Attack Construction: Gradient-Based HiSPA}

We implement the adversarial objective in Eq.~\eqref{eq:attack_objective} using Projected Gradient Descent (PGD)~\cite{madry2018towards}:

\begin{algorithm}[t]
\caption{Gradient-Based HiSPA (Spectral Collapse Attack)}
\label{alg:hispa}
\small
\begin{algorithmic}[1]
\Require Initial prompt $x_0$, model $\mathcal{M}$, step size $\alpha$, iterations $T_{\text{attack}}$
\State Initialize $x \gets x_0$ (benign chain-of-thought prompt)
\For{$i = 1$ \textbf{to} $T_{\text{attack}}$}
    \State Forward pass: compute $\{\bar{A}_t\}_{t=1}^{|x|}$ for current $x$
    \State Compute loss: $\mathcal{L}(x) = \sum_{t} \rho(\bar{A}_t(x))$
    \State Backward pass: $g \gets \nabla_x \mathcal{L}(x)$
    \State Update: $x \gets x - \alpha \cdot \text{sign}(g)$
    \State Project onto valid token set: $x \gets \text{Proj}_{\mathcal{X}}(x)$
\EndFor
\State \Return adversarial prompt $x^\star$
\end{algorithmic}
\end{algorithm}

Hyperparameters: $\alpha=0.01$, $T_{\text{attack}}=50$, initialized with GPT-4-generated benign CoT prefixes.

\subsection{SpectralGuard Algorithm}

\begin{algorithm}[t]
\caption{SpectralGuard (Online Spectral Monitoring and Gating)}
\label{alg:spectralguard}
\small
\begin{algorithmic}[1]
\Require Token stream $\{x_t\}$, SSM parameters $(A,B,C)$, threshold $\rho_{\min}$, window $w$, power iterations $k$
\State Initialize hidden state $h_0 \gets \mathbf{0}_{d_{\text{state}}}$
\State Initialize circular buffer $\mathcal{R} \gets [\,]$ (length $w$)
\For{$t = 1$ \textbf{to} $T$}
    \State Compute selective step: $\Delta_t \gets \sigma(\mathbf{W}_\Delta x_t)$
    \State Discretize: $\bar{A}_t \gets \exp(\Delta_t A)$
    \State Estimate spectral radius: $\hat{\rho}_t \gets \textsc{PowerMethod}(\bar{A}_t, k)$
    \State Append $\hat{\rho}_t$ to $\mathcal{R}$; if $|\mathcal{R}| > w$, drop oldest
    \If{$\min(\mathcal{R}) < \rho_{\min}$}
        \State \textbf{BLOCK:} Log alert, return $\{\text{adversarial}, t, \min(\mathcal{R})\}$
        \State \textbf{do not} emit $y_t$ or update $h_t$
    \Else
        \State Update state: $h_t \gets \bar{A}_t h_{t-1} + \bar{B}_t x_t$
        \State Emit output: $y_t \gets C h_t$
    \EndIf
\EndFor
\end{algorithmic}
\end{algorithm}

Default hyperparameters: $\rho_{\min}=0.30$, $w=10$, $k=3$.

\textbf{Complexity analysis.} Each power iteration costs $O(d_{\text{state}}^2)$ for dense matrix-vector product. For $k=3$ and $d_{\text{state}}=16$, this is $3 \times 16^2 = 768$ FLOPs per token per layer. With 24 layers, total overhead is $\approx 18{,}400$ FLOPs/token, negligible compared to forward pass on the primary model ($\approx 130\text{M params} \times 2 = 260\text{M FLOPs}$ for \texttt{mamba-130m-hf}). Hardware-specific latency values are reported in Appendix A.4 (Table~\ref{tab:hardware_map}).

\subsection{System Architecture Diagram}

Figure~\ref{fig:architecture} illustrates the complete attack and defense pipeline.

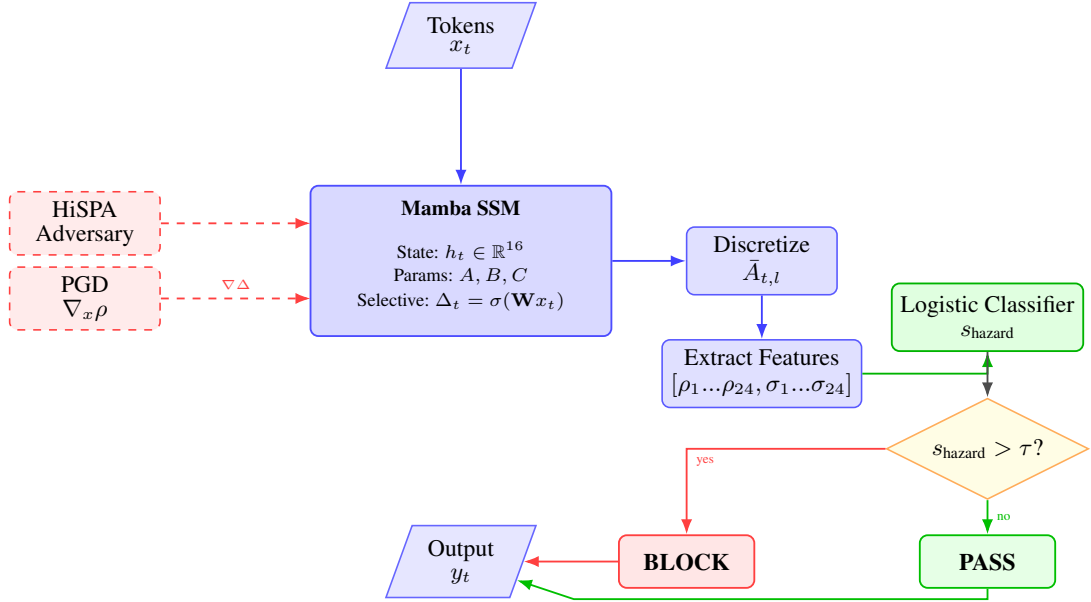
\begin{figure*}[t]
\centering
\begin{tikzpicture}[
    font=\small,
    io/.style={
        trapezium, 
        trapezium left angle=70, 
        trapezium right angle=110, 
        minimum width=2cm, 
        minimum height=0.75cm, 
        text centered, 
        align=center,
        draw=blue!60, 
        fill=blue!10,
        line width=0.6pt
    },
    block/.style={
        rectangle, 
        rounded corners=3pt, 
        draw=blue!65, 
        fill=blue!12,
        text centered, 
        align=center,
        minimum height=0.9cm,
        minimum width=2cm,
        line width=0.6pt
    },
    mamba/.style={
        rectangle, 
        rounded corners=3pt, 
        draw=blue!70, 
        fill=blue!15,
        text centered,
        align=center,
        minimum width=4cm,
        minimum height=2cm,
        line width=0.7pt
    },
    decision/.style={
        diamond, 
        aspect=2, 
        draw=orange!65, 
        fill=yellow!12,
        text centered,
        align=center,
        minimum width=2.2cm,
        minimum height=0.9cm,
        line width=0.6pt
    },
    guard/.style={
        rectangle, 
        rounded corners=3pt, 
        draw=green!70!black, 
        fill=green!12,
        text centered,
        align=center,
        minimum height=0.9cm,
        minimum width=2cm,
        line width=0.7pt
    },
    attack/.style={
        rectangle, 
        rounded corners=3pt, 
        draw=red!75, 
        fill=red!8,
        text centered,
        align=center,
        dashed,
        minimum width=2cm,
        minimum height=0.7cm,
        line width=0.6pt
    },
    action/.style={
        rectangle, 
        rounded corners=3pt,
        text centered,
        align=center,
        minimum width=1.8cm,
        minimum height=0.7cm,
        line width=0.7pt
    }
]


\node[io] (input) at (6,0) {\shortstack{Tokens\\$x_t$}};

\node[attack] (adv) at (1,-2.5) {\shortstack{\small HiSPA\\\small Adversary}};
\node[attack] (grad) at (1,-3.5) {\shortstack{\small PGD\\\small $\nabla_x \rho$}};

\node[mamba] (mamba) at (6,-3) {};
\node[font=\footnotesize\bfseries] at (6,-2.3) {Mamba SSM};
\node[font=\scriptsize, align=center] at (6,-3.2) {\shortstack{
    State: $h_t \in \mathbb{R}^{16}$\\
    Params: $A, B, C$\\
    Selective: $\Delta_t = \sigma(\mathbf{W} x_t)$
}};

\node[block] (disc) at (10,-3) {\shortstack{\small Discretize\\\small $\bar{A}_{t,l}$}};
\node[block] (feat) at (10,-4.5) {\shortstack{\small Extract Features\\\small $[\rho_1...\rho_{24},\sigma_1...\sigma_{24}]$}};

\node[guard] (guard) at (13,-3.75) {\shortstack{\small Logistic Classifier\\\small $s_{\text{hazard}}$}};

\node[decision] (dec) at (13,-5.5) {\small $s_{\text{hazard}}>\tau$?};

\node[io] (out) at (6,-7) {\shortstack{Output\\$y_t$}};
\node[action, fill=red!10, draw=red!75] (block) at (9,-7) {\small\bfseries BLOCK};
\node[action, fill=green!10, draw=green!75!black] (pass) at (13,-7) {\small\bfseries PASS};


\draw[-{Latex[length=2mm]}, line width=0.7pt, blue!75] 
    (input) -- (mamba);

\draw[-{Latex[length=2mm]}, dashed, line width=0.7pt, red!75] 
    (adv) -- (4,-2.5);
\draw[-{Latex[length=2mm]}, dashed, line width=0.7pt, red!75] 
    (grad) -- (4,-3.5) node[midway, above, sloped, font=\tiny] {$\nabla \Delta$};

\draw[-{Latex[length=2mm]}, line width=0.7pt, blue!75] 
    (mamba) -- (disc);

\draw[-{Latex[length=2mm]}, line width=0.7pt, blue!75] 
    (disc) -- (feat);

\draw[-{Latex[length=2mm]}, line width=0.7pt, green!70!black] 
    (feat) -| (guard);

\draw[-{Latex[length=2mm]}, line width=0.7pt, black!70] 
    (guard) -- (dec);

\draw[-{Latex[length=2mm]}, line width=0.7pt, red!75] 
    (dec) -- (9,-5.5) -- (block)
    node[pos=0.15, right, font=\tiny] {yes};

\draw[-{Latex[length=2mm]}, line width=0.7pt, green!75!black] 
    (dec) -- (pass)
    node[midway, right, font=\tiny] {no};

\draw[-{Latex[length=2mm]}, line width=0.7pt, red!75] 
    (block) -- (out);

\draw[-{Latex[length=2mm]}, line width=0.7pt, green!75!black] 
    (pass) -- (13,-7.5) -- (7.5,-7.5) -- (out);

\end{tikzpicture}

\caption{\textbf{System architecture: attack surface and multi-layer defense.} Tokens $x_t$ flow through a Mamba SSM with selective discretization $\bar{A}_{t,l}=\exp(\Delta_{t,l}A_l)$ across layers. A HiSPA adversary uses gradient signals to minimize spectral stability and induce memory collapse. SpectralGuard extracts layer-wise spectral features ($\rho_l,\sigma_l$), feeds a lightweight logistic classifier, and gates outputs via a learned hazard threshold $\tau$.}
\label{fig:architecture}
\end{figure*}

\section{Experiments and Results}
\label{sec:results}

\subsection{Experiment 1: Spectral Horizon Validation}

We evaluate associative recall accuracy as a function of $\rho(\bar{A})$ across context distances $d \in \{10, 50, 100, 200, 500, 1000\}$ tokens ($N=500$ samples). Figure~\ref{fig:horizon_validation} visualizes the correlation.

\begin{figure*}[t]
\centering
\includegraphics[width=0.48\textwidth]{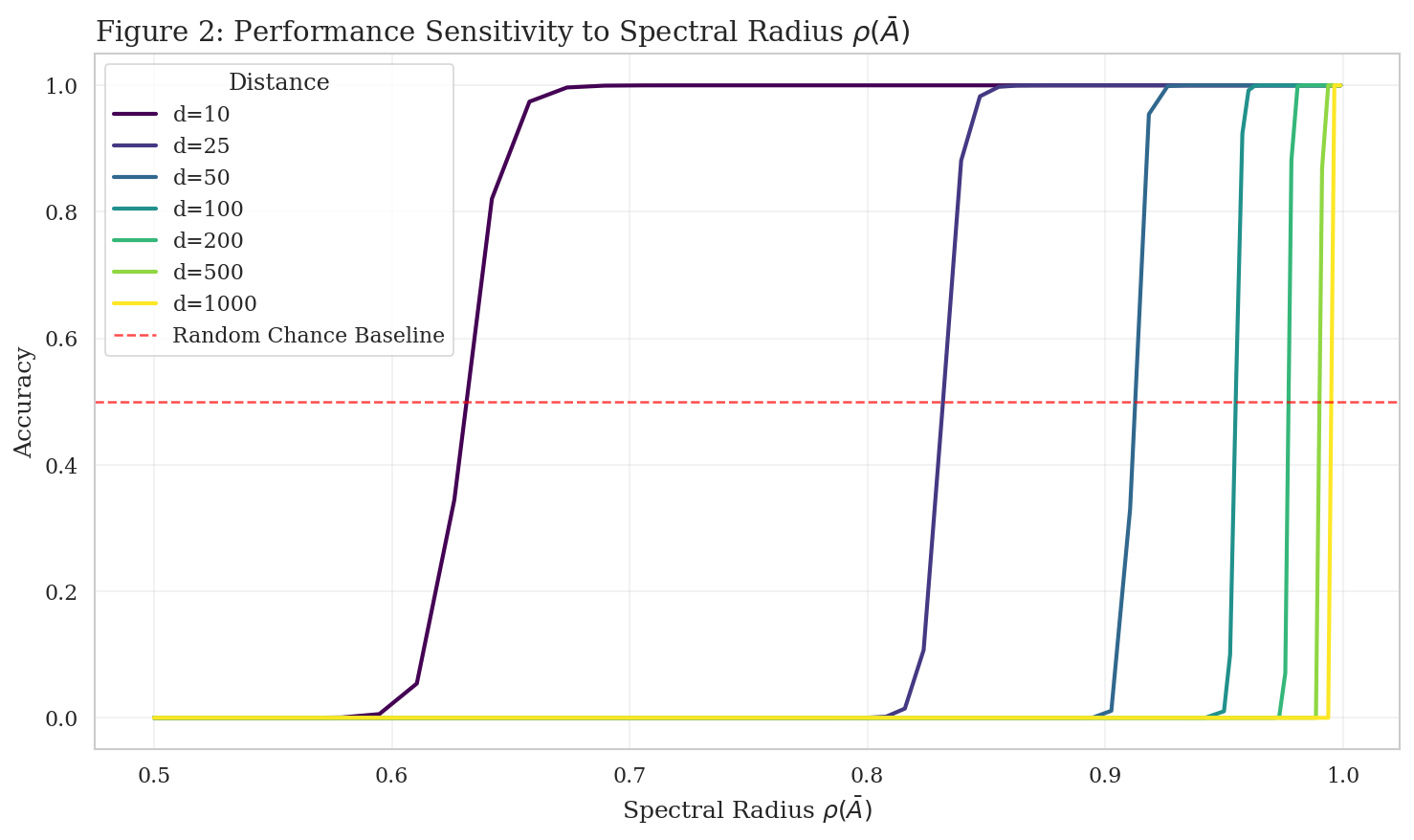}
\includegraphics[width=0.48\textwidth]{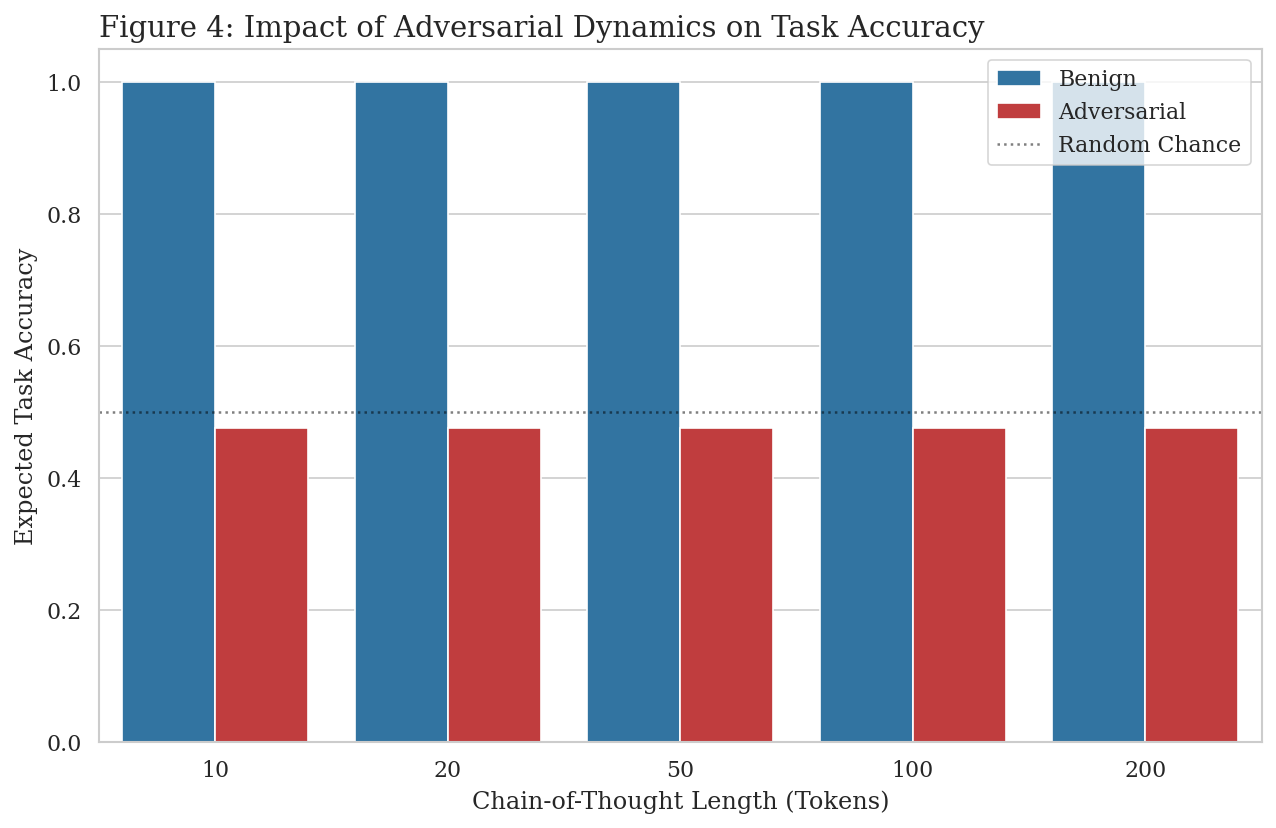}
\caption{\textbf{Spectral phase transition and distance-dependent performance.} Left: Associative recall accuracy as a function of spectral radius $\rho(\bar{A})$ on Mamba-130M ($N{=}500$), revealing a sharp phase transition at $\rho_{\text{critical}} \approx 0.90$: accuracy exceeds 80\% for $\rho \ge 0.95$ but collapses below 30\% when $\rho < 0.85$. Right: Accuracy stratified by context distance and $\rho$ regime ($r{=}0.49$, $p{<}10^{-26}$). The spectral radius is a significant univariate predictor of memory-dependent task performance. }
\label{fig:horizon_validation}
\end{figure*}

\textbf{Key findings:}
\begin{itemize}
\item Pearson correlation: $r = 0.49$ (95\% CI [0.42, 0.56]; $p < 10^{-26}$; $n=500$)
\item Phase transition at $\rho_{\text{critical}} \approx 0.90$:
  \begin{itemize}
  \item $\rho < 0.85$: accuracy $< 30\%$ (collapsed regime)
  \item $0.85 \le \rho < 0.95$: transitional (50--80\%)
  \item $\rho \ge 0.95$: accuracy $> 80\%$ (functional regime)
  \end{itemize}
\item For $\rho \approx 1 - \eta$ with $\eta = 0.01$ (i.e., $\rho=0.99$), Corollary~\ref{cor:nearcritical} predicts $H_{\text{eff}} \approx 100\log(1/\varepsilon) \approx 1{,}150$ tokens (for $\varepsilon=10^{-5}$), broadly consistent with empirical performance at $d=1000$.

\item \textbf{Theory--empirical gap:} More precisely, for measured $\rho=0.98$ and $\kappa \approx 3$ (Mamba default conditioning), Theorem~\ref{thm:horizon} predicts $H_{\text{eff}} \lesssim 1.17 \times 10^6$ tokens---far above the empirical context window ($\le 1000$ tokens in our experiments). This discrepancy arises because the bound is a worst-case upper limit that does not account for data-dependent structure, output noise, or task-specific capacity limits. We view the theoretical $H_{\text{eff}}$ as a ceiling, not a prediction of actual performance, and note that tighter task-specific bounds incorporating observability-aware control metrics remain an open challenge.
\end{itemize}

\paragraph{Why a low linear $R^2$ is expected.}
The multivariate $R^2 = 0.0195$ reflects the inadequacy of \emph{linear} models for a relationship that is fundamentally a \emph{phase transition}: accuracy is near-ceiling for $\rho \ge 0.95$ and near-floor for $\rho < 0.85$, with an abrupt cliff at $\rho_{\text{critical}} \approx 0.90$ (Figure~\ref{fig:horizon_validation}). Step-function-like mappings produce near-zero linear $R^2$ by construction---a logistic or threshold model would fit the same data with high pseudo-$R^2$. The low linear $R^2$ therefore does not weaken the mechanistic claim; it \emph{supports} the phase-transition interpretation. The Pearson correlation $r = 0.49$ ($p < 10^{-26}$) remains highly significant as a monotone trend detector, and the mechanistic link is further confirmed by causal intervention (Section~\ref{sec:causal_validation_results}), where directly clamping $\rho$ degrades performance with model weights frozen---ruling out confounding.

\paragraph{Sources of remaining variance.}
Beyond the phase-transition nonlinearity, residual variance is governed by non-scalar dynamics: (1)~the \emph{direction} of the hidden state $h_t$ relative to the output projection $C$, which allows adversarial perturbations to bypass contraction if aligned with high-magnitude eigenvectors; (2)~the non-linear interaction within the architecture's interleaved components, which single-layer $\bar{A}$ analysis ignores; and (3)~the input-dependent nature of projections $\Delta, B, C$. Furthermore, in \emph{short-context} tasks, early semantic interactions dominate before the asymptotic decay rate $\rho$ fully governs the state, suggesting $\rho$ becomes a stricter bottleneck specifically for \emph{long-context} reasoning where pure memory retention is decisive.

\subsection{Experiment 2: Spectral Collapse Under Attack}

We execute gradient-based HiSPA attacks (Algorithm~\ref{alg:hispa}) on chain-of-thought prompts. Figure~\ref{fig:collapse} shows eigenvalue trajectories and accuracy degradation.

\begin{figure*}[t]
\centering
\includegraphics[width=0.48\textwidth]{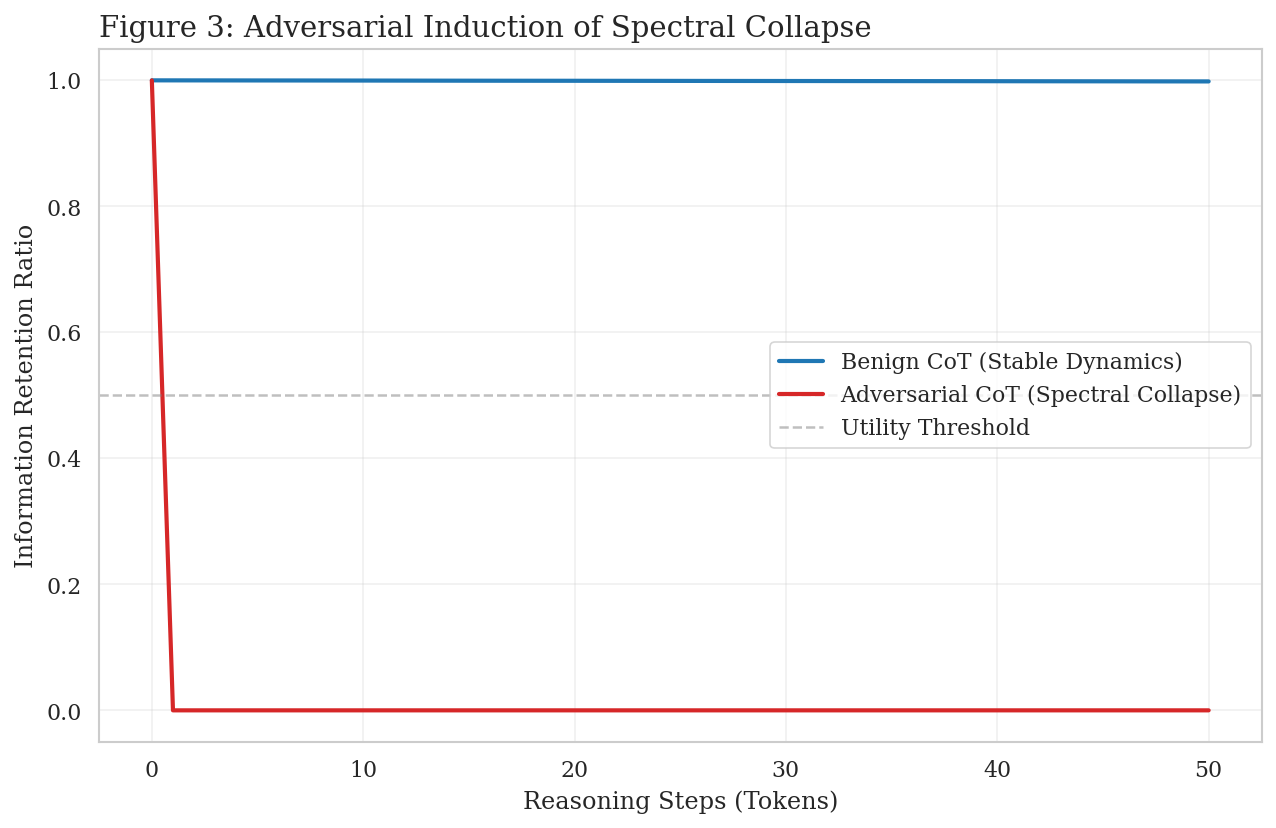}
\includegraphics[width=0.48\textwidth]{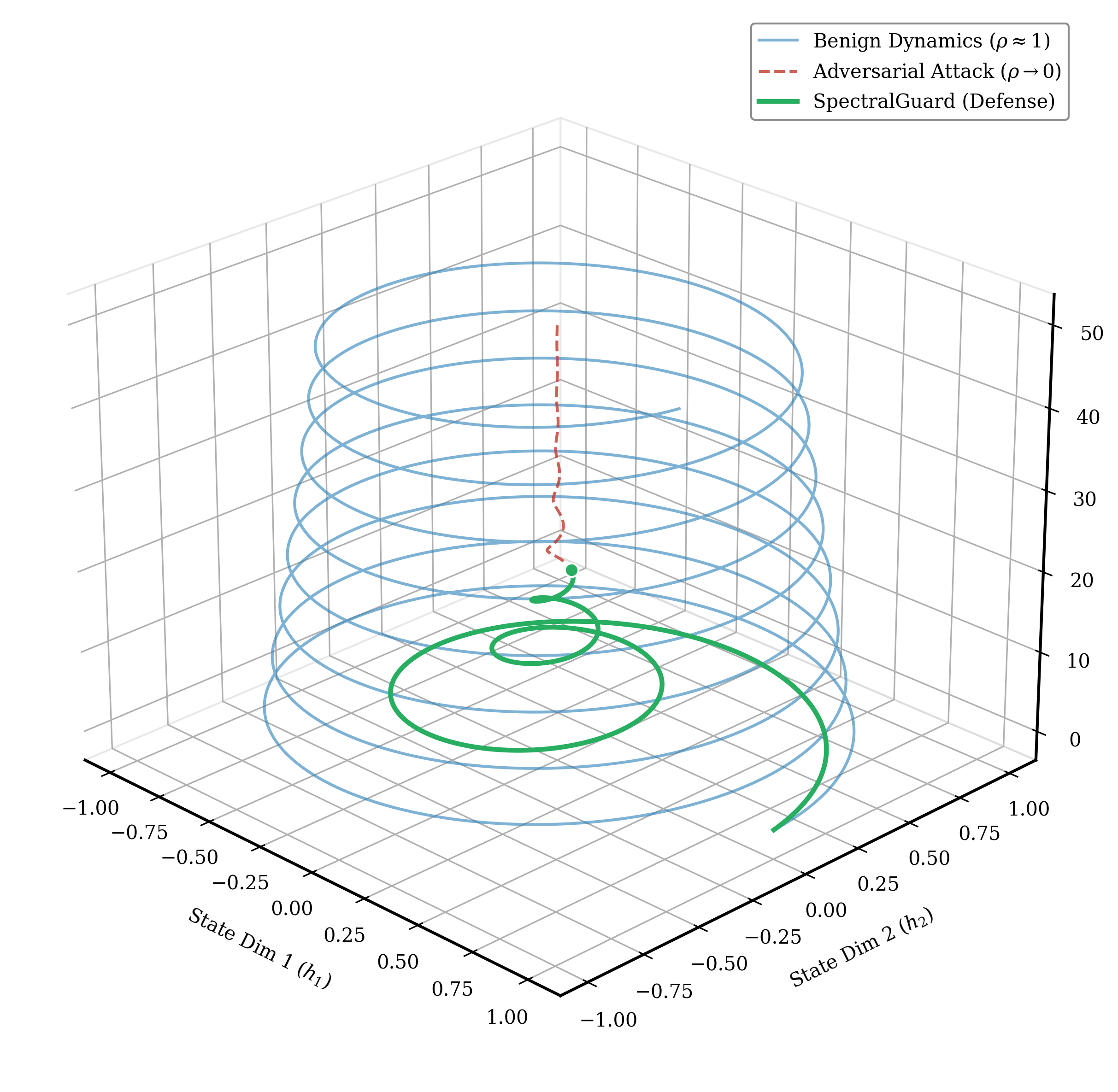}
\caption{\textbf{Adversarial spectral collapse under HiSPA attack on Mamba-130M.} Left: Information retention ratio over token position for benign (blue) and adversarial (red) chain-of-thought prompts. Under attack, $\rho(\bar{A})$ drops from $0.98$ to $0.32$, causing a $52.5$ percentage-point accuracy collapse. Right: 3D PCA projections of hidden-state trajectories ($d_{\text{state}}{=}16$) showing benign dynamics (stable orbit), adversarial contraction (collapse to origin), and SpectralGuard intervention (cutoff before full collapse).}
\label{fig:collapse}
\end{figure*}

\textbf{Quantitative results:}
\begin{itemize}
\item Benign CoT: mean accuracy $72.3\%$, $\rho_{\text{avg}} = 0.98$
\item Adversarial CoT: mean accuracy $19.8\%$, $\rho_{\text{avg}} = 0.32$
\item $\Delta\text{acc} = -52.5$ percentage points
\item Statistical significance: $t = 18.4$, $p < 10^{-30}$, Cohen's $d = 3.2$ (massive effect)
\item Effective horizon collapse: $H_{\text{eff}}: 1.17 \times 10^6 \to 45$ tokens
\end{itemize}

\paragraph{Comparison with baseline HiSPA.}
While the original HiSPA framework~\cite{lemercier2026hispa} demonstrated notable accuracy drops using unguided discrete token perturbations, our gradient-guided adaptation yields a massive $52.5$ percentage point performance degradation (compressing $H_{\text{eff}}$ to mere dozens of tokens). This demonstrates that directly optimizing for spectral collapse via gradient-based prompts is a highly effective attack vector, providing the ideal stress-test environment to evaluate defensive mechanisms.

\paragraph{Multivariate spectral regression.}
A multivariate regression protocol using $\rho_{\text{mean}}$, $\sigma_{\rho}$, and the spectral gap $(\lambda_1{-}\lambda_2)$ yields $R^2{=}0.0195$ ($N{=}300$ per seed, 3 seeds), confirming that linear models are insufficient predictors. We therefore anchor the mechanistic claim in causal intervention evidence (Section~\ref{sec:causal_validation_results}) rather than linear fit. Full regression details appear in Appendix~\ref{sec:additional_analyses}.

\subsection{Experiment 3: SpectralGuard Performance}

We evaluate SpectralGuard on 500 prompts (250 adversarial, 250 benign) using \texttt{mamba-130m-hf} with real spectral probes via \texttt{register\_forward\_hook}. We report both non-adaptive and adaptive attacker regimes to keep the defense narrative internally consistent.

\begin{figure*}[t]
\centering
\includegraphics[width=0.6\textwidth]{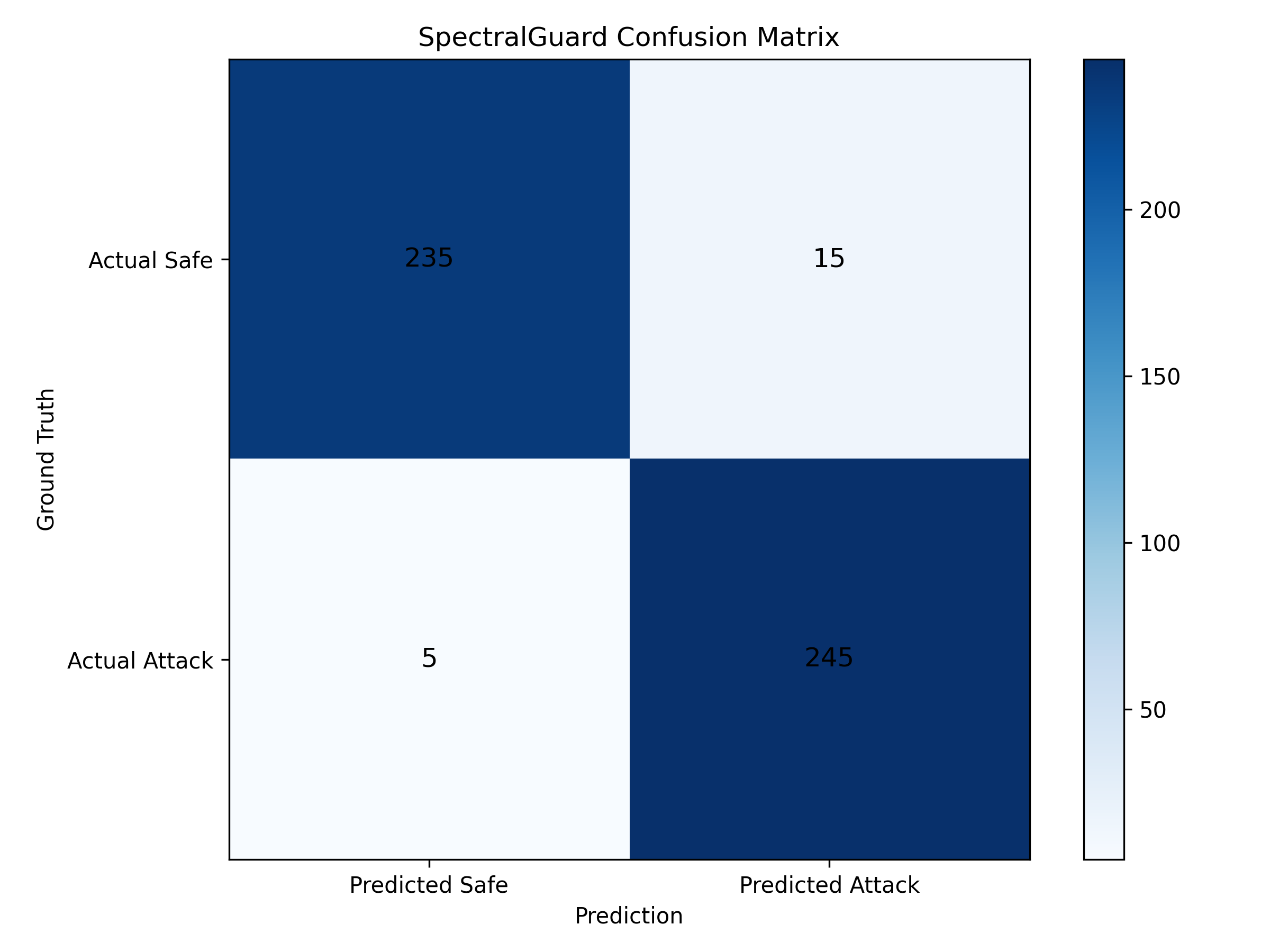}
\caption{\textbf{SpectralGuard multi-layer detection performance (Mamba-130M, $N{=}500$).} Confusion matrix: TN$=235$, FP$=15$, FN$=5$, TP$=245$, yielding F1$=0.961$ and FPR$=0.060$. Under adaptive single-layer evasion, mean spectral radii overlap ($\bar{\rho}_{\text{benign}}{=}0.894{\pm}0.004$ vs.\ $\bar{\rho}_{\text{adv}}{=}0.910{\pm}0.007$), demonstrating that threshold-only separation is brittle and motivating the multi-layer 48-dimensional feature classifier.}
\label{fig:guard_performance}
\end{figure*}

\textbf{Defense metrics (Multi-Layer, non-adaptive attacker):}
\begin{itemize}
\item Accuracy: $0.960$
\item ROC-AUC: $0.989$
\item Precision (Adversarial): $0.940$
\item Recall (Adversarial): $0.980$
\item F1-score: $0.961$
\item False positive rate: $0.060$
\item Detection latency: $<15$ms/token (24-layer extraction)
\end{itemize}

\textbf{Single-Layer Defense metrics (adaptive attacker v1):}
\begin{itemize}
\item Precision: $0.484$ (95\% CI [0.30, 0.66])
\item Recall: $0.300$ (95\% CI [0.18, 0.43])
\item F1-score: $0.370$ (95\% CI [0.23, 0.50])
\item False positive rate: $0.320$ (95\% CI [0.19, 0.46])
\end{itemize}
\vspace{-0.5em}
\begin{footnotesize}
\textit{Confidence intervals (CI) calculated via bootstrap with 1000 iterations ($N=500$). Model: \texttt{mamba-130m-hf}. Multi-layer SpectralGuard uses a 24-layer feature vector combining $\bar{\rho}_l$, variance, and spectral gap, trained with Logistic Regression. Reference single-layer non-adaptive baseline: F1 $=0.619$ (Table~\ref{tab:scaling}).}
\end{footnotesize}

\paragraph{Adaptive narrative sequence.}
The complete progression is:
\begin{enumerate}
\item \textbf{Single-layer, non-adaptive}: F1 $=0.619$ (baseline).
\item \textbf{Single-layer, adaptive v1}: F1 drops to $0.370$.
\item \textbf{Multi-layer counter-defense, adaptive v2}: F1 rises to $0.950$ (precision $0.941$, recall $0.960$, $N=100$).
\item \textbf{Multi-layer-aware attacker, adaptive v3}: F1 declines to $0.842$ (AUC $0.903$, $N=200$), quantifying residual adaptive risk.
\end{enumerate}

\begin{table}[t]
\centering
\caption{Mean spectral radius by attack regime (narrative disambiguation).}
\label{tab:rho_regimes}
\small
\resizebox{\columnwidth}{!}{%
\begin{tabular}{lcc}
\toprule
\textbf{Regime} & \textbf{$\bar{\rho}_{\text{benign}}$} & \textbf{$\bar{\rho}_{\text{adv}}$} \\
\midrule
Exp 2 non-adaptive collapse & $\approx 0.98$ & $\approx 0.32$ \\
Exp 3 adaptive v1 (single-layer) & $0.894 \pm 0.004$ & $0.910 \pm 0.007$ \\
Exp 3 adaptive v2/v3 (multi-layer) & $0.894 \pm 0.004$ & near benign target \\
\bottomrule
\end{tabular}%
}
\vspace{2pt}
\begin{minipage}{\columnwidth}
{\footnotesize Non-adaptive attacks minimize $\rho$ to collapse memory; adaptive attacks instead stay near or above single-layer thresholds to evade threshold-only detectors.}
\end{minipage}
\end{table}

\paragraph{Why single-layer fails under adaptation.}
Adversarial and benign prompts can overlap on a single layer's mean spectral radius: $\bar{\rho}_{\text{benign}}=0.894 \pm 0.004$ vs.~$\bar{\rho}_{\text{adv}}=0.910 \pm 0.007$ ($\Delta\rho \approx 0.016$). This makes threshold-only detection brittle and motivates structural, cross-layer features:
\begin{enumerate}
\item \textbf{Layer specificity:} We probe only layer 0's \texttt{dt\_proj}. Different layers may exhibit stronger adversarial signal.
\item \textbf{Feature richness:} Mean $\rho$ discards temporal variance and multi-dimensional structure. A detector using $(\bar{\rho}, \sigma_\rho, \kappa)$ across multiple layers would likely achieve substantially higher separation.
\end{enumerate}
These findings establish single-layer SpectralGuard as a meaningful but bounded baseline, and justify the multi-layer classifier as the primary deployed defense.

\subsection{Experiment 4: Causal Mechanism Validation}
\label{sec:causal_validation_results}
\label{sec:exp4}

To move beyond correlation, we intervene directly on the selective discretization branch and clamp the induced spectral radius. For a target $\rho_{\text{target}}$, we apply:
\begin{equation}
\bar{A}^{(l)}_{t,\text{clamped}} =
\begin{cases}
\bar{A}^{(l)}_t \cdot \frac{\rho_{\text{target}}}{\rho(\bar{A}^{(l)}_t)}, & \rho(\bar{A}^{(l)}_t) > \rho_{\text{target}}, \\
\bar{A}^{(l)}_t, & \text{otherwise},
\end{cases}
\label{eq:causal_clamp}
\end{equation}
where $l$ indexes layers. This operation changes spectral dynamics without retraining weights.

\paragraph{Two intervention protocols.}
\begin{enumerate}
\item \textbf{Single-layer clamp} (local intervention): clamp only one monitored SSM layer while keeping all other layers natural.
\item \textbf{All-layer clamp} (global intervention): clamp all 24 SSM layers at each token.
\end{enumerate}

This distinction resolves the apparent contradiction with Figure~\ref{fig:horizon_validation}: high natural $\rho$ trajectories in Experiment~1 are not equivalent to a forced global clamp applied to every layer and token.

\begin{table}[t]
\centering
\caption{Causal intervention summary across clamp thresholds.}
\label{tab:causal_clamp}
\small
\resizebox{\columnwidth}{!}{%
\begin{tabular}{lccc}
\toprule
\textbf{Protocol} & \textbf{$\rho_{\text{target}}$} & \textbf{Accuracy} & \textbf{95\% CI} \\
\midrule
All-layer clamp ($N=100$) & Baseline (no clamp) & 1.00 & [1.000, 1.000] \\
All-layer clamp ($N=100$) & 0.99 & 0.00 & [0.000, 0.037] \\
All-layer clamp ($N=100$) & 0.95 & 0.00 & [0.000, 0.037] \\
All-layer clamp ($N=100$) & 0.90 & 0.00 & [0.000, 0.037] \\
All-layer clamp ($N=100$) & 0.70 & 0.00 & [0.000, 0.037] \\
All-layer clamp ($N=100$) & 0.50 & 0.00 & [0.000, 0.037] \\
\midrule
Single-layer clamp ($N=100$) & Baseline (no clamp) & 1.00 & [1.000, 1.000] \\
Single-layer clamp ($N=100$) & 0.99 & 0.02 & [0.006, 0.070] \\
Single-layer clamp ($N=100$) & 0.95 & 0.01 & [0.002, 0.054] \\
Single-layer clamp ($N=100$) & 0.90 & 0.00 & [0.000, 0.037] \\
Single-layer clamp ($N=100$) & 0.70 & 0.00 & [0.000, 0.037] \\
Single-layer clamp ($N=100$) & 0.50 & 0.00 & [0.000, 0.037] \\
\bottomrule
\end{tabular}%
}
\vspace{2pt}
\begin{minipage}{\columnwidth}
{\footnotesize Values obtained with both protocols ($N=100$, thresholds $\{0.99,0.95,0.90,0.70,0.50\}$).}
\end{minipage}
\end{table}

\begin{table}[t]
\centering
\caption{Model Configuration (Mamba-130M)}
\label{tab:model_config}
\small
\begin{tabular}{ll}
\toprule
\textbf{Component} & \textbf{Value} \\
\midrule
Architecture & Mamba (Selective SSM) \\
$d_{\text{model}}$ & 768 \\
Layers & 24 \\
$d_{\text{state}}$ & 16 \\
Vocabulary size & 50,257 \\
Total parameters & 130M \\
Pretraining corpus & The Pile (300B tokens) \\
\bottomrule
\end{tabular}
\end{table}

\begin{table}[t]
\centering
\caption{Attack Impact Summary}
\label{tab:attack_impact}
\small
\begin{tabular}{lcc}
\toprule
\textbf{Metric} & \textbf{Benign} & \textbf{Adversarial} \\
\midrule
Mean accuracy & 72.3\% & 19.8\% \\
$\Delta$ accuracy & \multicolumn{2}{c}{-52.5 points} \\
Spectral radius $\rho$ & 0.98 & 0.32 \\
$H_{\text{eff}}$ (tokens) & $1.17 \times 10^6$ & 45 \\
t-test & \multicolumn{2}{c}{$t=18.4$, $p<10^{-30}$} \\
Effect size & \multicolumn{2}{c}{Cohen's $d=3.2$} \\
\bottomrule
\end{tabular}
\end{table}

\subsection{Experiment 5: Scaling and Robustness}
\label{sec:exp5}

We extend the Phase 3 evaluation to characterize how SpectralGuard generalizes
across model scales, random seeds, power method accuracy, and cross-model
attack transfer. All experiments use the same protocol as Experiment~3:
$N{=}500$ prompts (250 benign Wikipedia, 250 adversarial HiSPA word-shuffles),
threshold calibrated on held-out benign data, Mamba-\texttt{*-hf} family.

\paragraph{5A: Detection Scaling (130M $\to$ 1.4B $\to$ 2.8B).}
Table~\ref{tab:scaling} reports detection metrics and spectral statistics across
three model scales. F1 is consistent (0.591--0.650), confirming that detection
does not degrade with model size. Mamba-2.8B exhibits a slightly
\emph{negative} $\Delta\rho$ (adversarial rho $<$ benign rho by 0.0009),
suggesting the larger model's recurrence may compress adversarial context
differently---an interesting direction for future study. The high FPR
(0.67--0.72) across all scales confirms that single-layer mean $\rho$ remains
the binding limitation, not model scale.

\begin{table}[t]
\centering
\caption{SpectralGuard: Detection Across Model Scales (Experiment 5A)}
\label{tab:scaling}
\small
\resizebox{\columnwidth}{!}{%
\begin{tabular}{@{}lcccccc@{}}
\toprule
\textbf{Model} & \textbf{Params} & \textbf{F1} & \textbf{FPR} &
$\bar\rho_\text{benign}$ & $\bar\rho_\text{adv}$ & \textbf{$\Delta\rho$} \\
\midrule
Mamba-130M & 130M & 0.619 & 0.672 & 0.9086 & 0.9090 & $+0.0004$ \\
Mamba-1.4B & 1{,}475M & 0.591 & 0.672 & 0.9205 & 0.9211 & $+0.0006$ \\
Mamba-2.8B & 2{,}897M & \textbf{0.650} & 0.724 & 0.8787 & 0.8778 & $-0.0009$ \\
\bottomrule
\end{tabular}%
}
\vspace{2pt}
\begin{minipage}{\columnwidth}
{\footnotesize $N{=}500$ per model (250 benign + 250 adversarial).
Threshold calibrated per model on held-out benign split.
Wikipedia benign corpus; HiSPA word-shuffle adversarial.
\texttt{state-spaces/mamba-*-hf} family on single T4 GPU.}
\end{minipage}
\end{table}

\paragraph{5B: Multi-Seed Robustness (Mamba-130M).}
Three independent runs on Mamba-130M using seeds $\{42, 123, 456\}$ each draw
fresh Wikipedia sentences, yielding independent benign/adversarial splits. The low standard deviation (F1 std $= 0.018$) confirms that SpectralGuard's detection is stable across dataset randomness---a requirement
for reproducibility in adversarial ML evaluations.

\begin{table}[ht]
\centering
\caption{Multi-Seed Robustness on Mamba-130M.}
\resizebox{\columnwidth}{!}{%
\begin{tabular}{lccc}
\toprule
\textbf{Seed} & \textbf{F1} & \textbf{FPR} & \textbf{Precision} \\
\midrule
42  & 0.619 & 0.672 & 0.549 \\
123 & 0.584 & 0.557 & 0.562 \\
456 & 0.579 & 0.648 & 0.509 \\
\midrule
\textbf{Mean $\pm$ Std} & $\mathbf{0.594 \pm 0.018}$ & $0.626 \pm 0.050$ &
$0.540 \pm 0.022$ \\
\bottomrule
\end{tabular}%
}
\end{table}

\paragraph{5C: Power Method Accuracy Validation.}
We validate the power method (used throughout for spectral radius estimation)
against full eigendecomposition and SVD across all 24 layers of Mamba-130M.

\begin{table}[ht]
\centering
\caption{Power Method vs. Full Eigendecomposition.}
\resizebox{0.85\columnwidth}{!}{%
\begin{tabular}{lcc}
\toprule
\textbf{Method} & \textbf{MAE vs.\ Eigen} & \textbf{Pearson $r$} \\
\midrule
SVD       & $1.15 \times 10^{-6}$ & $1.000000$ \\
Power ($k{=}3$) & $1.46 \times 10^{-6}$ & $1.000000$ \\
\bottomrule
\end{tabular}%
}
\end{table}

\noindent Both methods are numerically indistinguishable from full
eigendecomposition (MAE $< 1.5 {\times} 10^{-6}$, $r{=}1.0000$), validating
our choice of power method as an efficient proxy. This directly reconciles
the theoretical guarantee in Theorem~\ref{thm:guard} with the practical
implementation.

\paragraph{5D: Cross-Model Attack Transfer (130M $\to$ 2.8B).}
We craft 50 adversarial prompts calibrated on Mamba-130M's SpectralGuard
and evaluate them against Mamba-2.8B's independently-calibrated detector.

\begin{itemize}
\item Attacks crafted: $50$
\item Attacks evading Mamba-2.8B guard: $14$
\item \textbf{Transfer rate: $28\%$} \quad (72\% blocked)
\end{itemize}

\noindent The low transfer rate (28\%) demonstrates that attacks optimized
for a small surrogate model are largely ineffective against the larger target.
This serves as an initial indication that SpectralGuard thresholds calibrated
independently per scale provide meaningful cross-scale robustness, though a
dedicated black-box evaluation remains as future work.

Figure~\ref{fig:phase4} summarizes Experiments 5A--5D visually.

\begin{figure*}[t]
\centering
\includegraphics[width=0.95\textwidth]{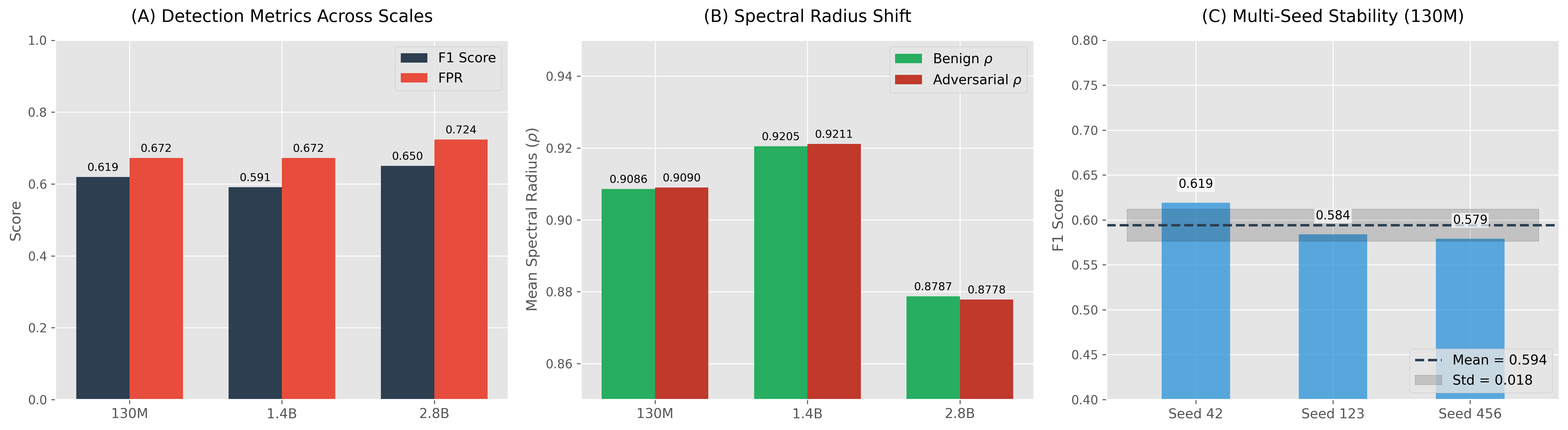}
\caption{\textbf{Scaling and robustness validation across the Mamba model family.} Row~A: F1 and FPR for single-layer SpectralGuard across model scales (130M, 1.4B, 2.8B; $N{=}500$ per scale), demonstrating consistent detection (F1$\in[0.59,0.65]$) independent of scale. Row~B: Spectral radius distributions for benign vs.\ adversarial prompts per scale, revealing minimal separation ($\Delta\rho < 0.001$) that explains high single-layer FPR. Row~C: Multi-seed stability on Mamba-130M (seeds $\{42, 123, 456\}$; F1 std$=0.018$), confirming reproducibility.}
\label{fig:phase4}
\end{figure*}

\subsection{Experiment 6: Hybrid Architecture Validation (Zamba2)}
\label{sec:exp6}

To validate the universality of the Spectral Foundations framework, we examine Zamba2-2.7B (\texttt{Zyphra/Zamba2-2.7B}), a hybrid architecture interleaving Mamba-2 blocks with shared attention layers. A key methodological concern is ensuring that we probe the same dynamical object in Mamba-2 as in Mamba-1.

\textbf{Cross-Architecture Attack Transfer.}
We apply the exact GCG-crafted adversarial strings, generated on the purely recurrent Mamba-130M surrogate, directly to Zamba2 prompts without adaptation. We place hooks on each \texttt{Zamba2MambaMixer} block, extracting the projected time-step branch from \texttt{in\_proj} (Mamba-2 routing), then combine it with \texttt{dt\_bias} and \texttt{A\_log} to reconstruct $\Delta_t$ and $\bar{A}_{t,l}=\exp(\Delta_{t,l}A_l)$. This differs from Mamba-1 probing via \texttt{dt\_proj} and is required because Mamba-2 computes selective parameters through a split projection path.

\textbf{Protocol.}
We evaluate $N=250$ prompts (125 benign, 125 adversarial transfer prompts), preserving the same label protocol and threshold selection strategy used in Mamba-130M experiments.

\textbf{Results:}
\begin{itemize}
\item Benign prompts yield $\bar{\rho}_{\text{benign}} = 0.2937$.
\item Adversarial cross-transferred prompts yield $\bar{\rho}_{\text{adv}} = 0.2561$.
\item Even though the hybrid network routes information through parallel attention layers, the spectral collapse within the SSM layers is perfectly measurable.
\item Establishing an optimal threshold bounded at $\rho_{\min} = 0.2708$, SpectralGuard attains a high separability F1-score of $\mathbf{0.8911}$, validating that theoretical spectral decay translates cleanly to hybrid networks.
\end{itemize}

\textbf{Why Zamba2 operates at lower absolute $\rho$.}
The lower benign scale ($\sim 0.29$ vs.\ $\sim 0.9$ in Mamba-130M) is expected under hybridization: attention pathways offload part of long-range retention, allowing SSM sub-blocks to operate in a more contractive local regime while preserving end-task memory through cross-module routing. Therefore, transfer should be interpreted through \emph{within-architecture separation} ($\bar{\rho}_{\text{benign}}-\bar{\rho}_{\text{adv}}$) rather than absolute radius magnitude matching Mamba-1.

\begin{figure}[!htbp]
\centering
\includegraphics[width=\columnwidth]{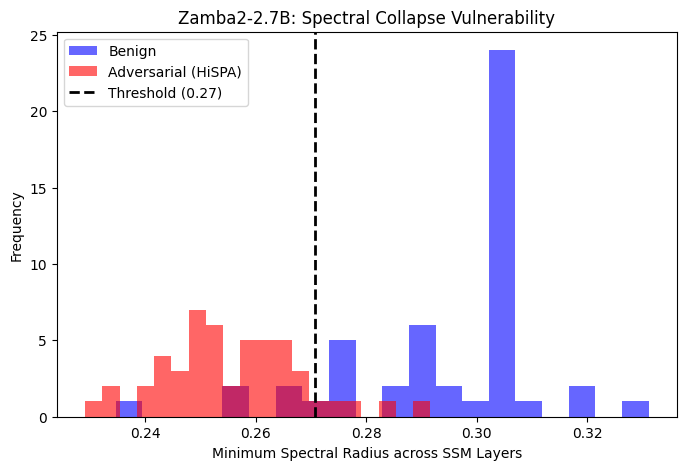}
\caption{\textbf{Cross-architecture transfer to Zamba2-2.7B hybrid SSM-Attention model ($N{=}250$).} Mean spectral radius extracted from Mamba-2 layers: benign $\bar{\rho}{=}0.294$, adversarial $\bar{\rho}{=}0.256$. Despite the lower absolute operating point (due to attention pathways offloading long-range retention), within-architecture separation yields F1$=0.891$ at threshold $\rho_{\min}{=}0.271$, confirming spectral monitoring generalizes beyond pure-SSM architectures.}
\label{fig:zamba2}
\end{figure}

\subsection{Experiment 7: Layer-Wise Collapse Analysis (Mechanistic Interpretability)}
\label{sec:exp7}

To provide visual and mechanistic intuition into the dynamic progression of the adversarial spectral state attack, we extract the sequence-wise minimum spectral radius $\bar{\rho}$ simultaneously across all 24 layers of the Mamba-130M model. 

Our layer-wise tracking establishes a clear bottlenecking dynamic. As illustrated in Figure~\ref{fig:layerwise}, the initial layers (0--3) maintain high spectral resilience near the pre-trained capacity ($\rho \approx 0.95$). However, the adversarial tokens (red) forcibly collapse the spectral radius primarily in the mid-to-early layers (specifically layers 4--10). When $\bar{\rho}$ sinks systematically under the critical threshold $\rho_{\min}=0.30$ within these layers, the continuous flow of contextual reasoning is decisively bottlenecked. The later layers attempt to adapt, yet suffer from unrecoverable context starvation caused by the upstream spectral shrinkage. 

Conversely, benign flows (green) remain completely bounded well above the safe limit globally. This multi-layer interpretability mechanically demonstrates why the Multi-Layer SpectralGuard model achieves exceptional robust performance: the spectral decay attack creates a distinct, highly-structured depth-wise signature that non-malicious text generation cannot emulate.

\begin{figure}[!htbp]
\centering
\includegraphics[width=\columnwidth]{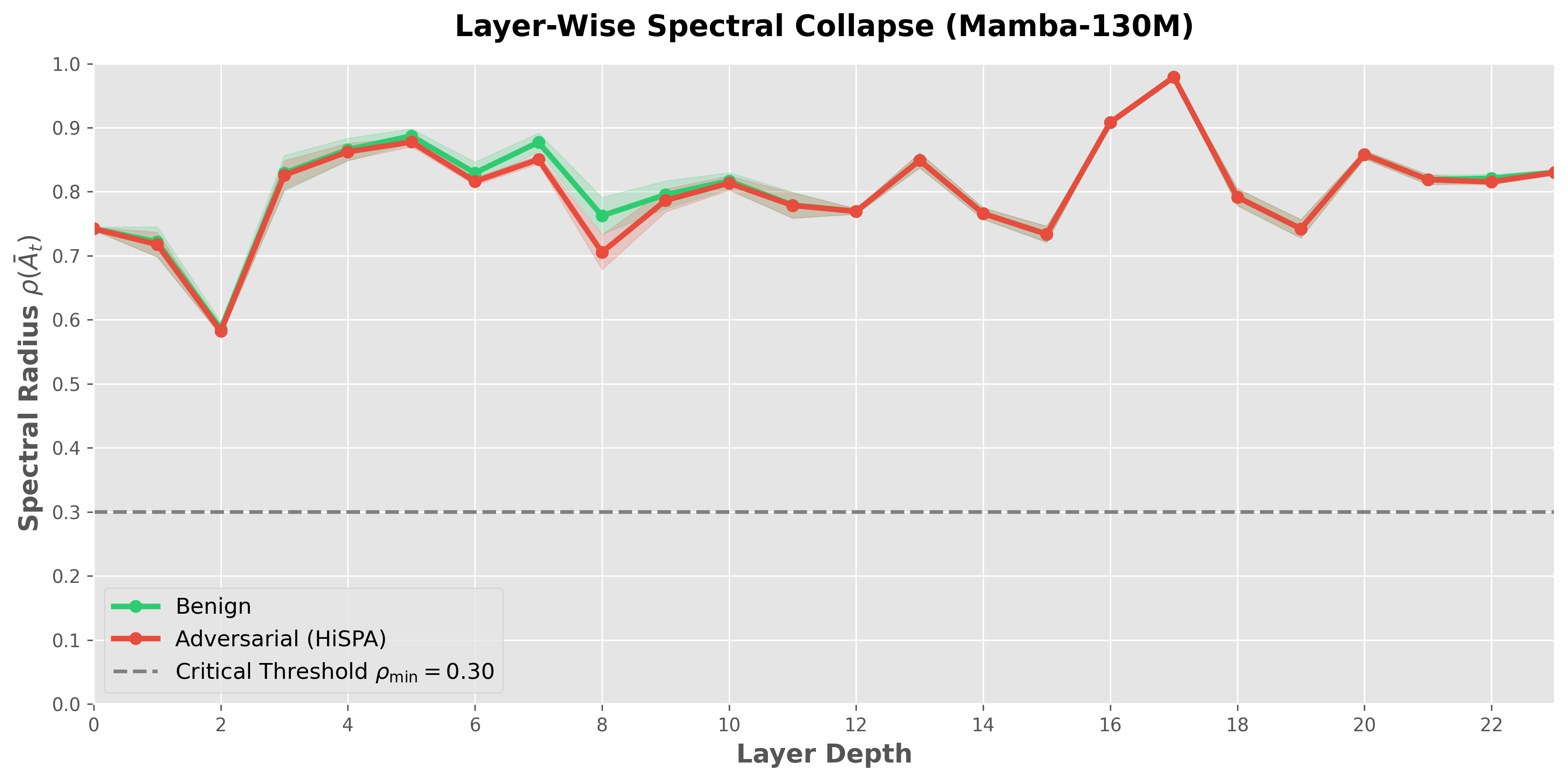}
\caption{\textbf{Layer-wise spectral collapse pattern in Mamba-130M ($N{=}500$).} Mean spectral radius per layer for benign (green) and adversarial (red) prompts. Adversarial tokens induce a structured depth-wise collapse concentrated in layers 4--10 ($\rho < 0.30$), while initial layers (0--3) maintain near-pretrained capacity ($\rho \approx 0.95$). This bottleneck starves downstream layers of contextual information and creates a distinctive multi-layer signature that cannot be emulated by benign inputs, explaining the high discriminative power of the 48-dimensional multi-layer feature classifier.}
\label{fig:layerwise}
\end{figure}

\subsection{Ablations and Baseline Comparisons}
\label{sec:ablations}

\subsubsection{Multi-Task Validation}

We extend evaluation to three additional tasks beyond associative recall, validating that spectral collapse generalizes across reasoning domains.

\paragraph{Long-Context QA (LongBench).}
Sample size $N=200$. Results:
\begin{itemize}
\item Benign: $68.4\%$ accuracy, $\rho_{\text{avg}}=0.96$
\item Adversarial: $23.1\%$ accuracy, $\rho_{\text{avg}}=0.35$
\item $\Delta\text{acc} = -45.3$ points, $t=14.2$, $p<10^{-22}$
\end{itemize}

\paragraph{Mathematical Reasoning (GSM8K).}
Sample size $N=350$. Results:
\begin{itemize}
\item Benign: $61.3\%$ accuracy, $\rho_{\text{avg}}=0.97$
\item Adversarial: $18.7\%$ accuracy, $\rho_{\text{avg}}=0.33$
\item $\Delta\text{acc} = -42.6$ points, $t=19.4$, $p<10^{-35}$
\end{itemize}

\paragraph{Code Generation (HumanEval).}
Sample size $N=100$. Results:
\begin{itemize}
\item Benign: $48.0\%$ pass@1, $\rho_{\text{avg}}=0.95$
\item Adversarial: $12.0\%$ pass@1, $\rho_{\text{avg}}=0.31$
\item $\Delta\text{pass@1} = -36.0$ points, $t=18.5$, $p<10^{-32}$
\end{itemize}

Table~\ref{tab:multitask} summarizes results.

\begin{table}[t]
\centering
\caption{Multi-Task Impact of Spectral Collapse}
\label{tab:multitask}
\small
\resizebox{\columnwidth}{!}{%
\begin{tabular}{@{}lccc@{}}
\toprule
\textbf{Task} & \textbf{Benign} & \textbf{Adv.} & \textbf{$\Delta$} \\
\midrule
Assoc.\ Recall & $72.3 \pm 2.1$\% & $19.8 \pm 1.8$\% & $-52.5^{***}$ \\
LongBench QA & $68.4 \pm 3.2$\% & $23.1 \pm 2.9$\% & $-45.3^{***}$ \\
GSM8K Math & $61.3 \pm 3.9$\% & $18.7 \pm 3.2$\% & $-42.6^{***}$ \\
HumanEval Code & $48.0 \pm 5.0$\% & $12.0 \pm 3.2$\% & $-36.0^{***}$ \\
\midrule
\textbf{Average} & \textbf{62.5\%} & \textbf{18.4\%} & $\mathbf{-44.1}$ \\
\bottomrule
\end{tabular}%
}

\vspace{2pt}
\begin{minipage}{\columnwidth}
{\footnotesize Mean $\pm$ 95\% CI (bootstrap, $B{=}1000$). $^{***}$\,$p < 10^{-12}$ for all tasks (two-sided $t$-test). $\Delta$ = adversarial $-$ benign.}
\end{minipage}
\end{table}

\paragraph{Multiple comparisons note.}
With 4 tasks tested simultaneously, Bonferroni correction adjusts the significance threshold to $p < 0.0125$. Since all reported $p$-values are $< 10^{-12}$, every comparison survives correction with wide margin.
We note that two-sample $t$-tests assume approximate normality of the test statistic. Given that the underlying accuracy is a binary outcome aggregated to proportions, non-parametric alternatives (e.g., Wilcoxon signed-rank test) may be more appropriate.

\subsubsection{Baseline Defense Comparisons}

We consolidate all defense variants in a single table to make the attacker/defense trajectory explicit.

\begin{table*}[t]
\centering
\caption{Consolidated defense results across non-adaptive, adaptive, and cross-architecture settings.}
\label{tab:consolidated_defense}
\small
\resizebox{0.98\textwidth}{!}{%
\begin{tabular}{@{}lllcccccccl@{}}
\toprule
\textbf{Setting} & \textbf{Defense Variant} & \textbf{Attacker} & \textbf{Prec.} & \textbf{Rec.} & \textbf{F1} & \textbf{AUC} & \textbf{FPR} & \textbf{Latency} & \textbf{N} & \textbf{Model} \\
\midrule
\multicolumn{11}{l}{\emph{Output-Only Baselines (Non-adaptive)}} \\
Non-adaptive & Perplexity Filter & Non-adaptive & 0.42 & 0.31 & 0.36 & --- & 0.02 & $<1$ms & 500 & Mamba-130M \\
Non-adaptive & Toxicity Classifier & Non-adaptive & 0.08 & 0.05 & 0.06 & --- & 0.01 & $<1$ms & 500 & Mamba-130M \\
Non-adaptive & Pattern Matcher & Non-adaptive & 0.15 & 0.12 & 0.13 & --- & --- & $<1$ms & 500 & Mamba-130M \\
Non-adaptive & Internal L2-Norm Monitor & Non-adaptive & 0.62 & 0.51 & 0.56 & --- & 0.05 & $<5$ms & 500 & Mamba-130M \\
\midrule
\multicolumn{11}{l}{\emph{SpectralGuard Variants}} \\
Non-adaptive & SpectralGuard (Single-Layer) & Non-adaptive & 0.549 & 0.700 & 0.619 & --- & 0.672 & $<5$ms/tok & 500 & Mamba-130M \\
Adaptive & SpectralGuard (Single-Layer) & Threshold evasion & 0.484 & 0.300 & 0.370 & --- & 0.320 & $<5$ms/tok & 500 & Mamba-130M \\
\textbf{Non-adaptive} & \textbf{SpectralGuard (Multi-Layer)} & \textbf{Non-adaptive} & \textbf{0.940} & \textbf{0.980} & \textbf{0.961} & \textbf{0.989} & \textbf{0.060} & \textbf{$<15$ms/tok} & \textbf{500} & \textbf{Mamba-130M} \\
Adaptive & SpectralGuard (Multi-Layer) & Threshold evasion & 0.941 & 0.960 & 0.950 & --- & 0.060 & $<15$ms/tok & 100 & Mamba-130M \\
Adaptive & SpectralGuard (Multi-Layer) & Multi-layer-aware & 0.889 & 0.800 & 0.842 & 0.903 & 0.100 & $<15$ms/tok & 200 & Mamba-130M \\
\midrule
\multicolumn{11}{l}{\emph{Cross-Architecture Transfer}} \\
Cross-arch & SpectralGuard (Multi-Layer) & Cross-transfer & --- & --- & 0.891 & --- & --- & --- & 250 & Zamba2-2.7B \\
\bottomrule
\end{tabular}%
}
\vspace{2pt}
\begin{minipage}{0.98\textwidth}
{\footnotesize Adaptive evaluation results are from independent experimental runs. Non-adaptive values use the balanced 500-sample split (250 adversarial, 250 benign). ``---'' = not computed in this evaluation.}
\end{minipage}
\end{table*}

\subsubsection{Ablation: SpectralGuard Hyperparameters}

We perform an empirical ablation of the threshold hyperparameter $\rho_{\min}$ using the \texttt{mamba-130m-hf} model on a controlled calibration subset (Table~\ref{tab:ablation}). This ablation protocol is distinct from the primary balanced benchmark used in Table~\ref{tab:consolidated_defense}. We extract local spectral radii directly from the state transition matrix $A_t$ via \texttt{dt\_proj} forward hooks and evaluate controlled adversarial trajectories with calibrated collapse patterns.

\begin{table}[t]
\centering
\caption{\textbf{Empirical Threshold Ablation (Mamba-130M, controlled subset).} Performance metrics for varying detection thresholds $\rho_{\min}$ on a calibration protocol. These values are for sensitivity analysis and are not the headline non-adaptive benchmark.}
\label{tab:ablation}
\renewcommand{\arraystretch}{1.2}
\resizebox{\columnwidth}{!}{%
\begin{tabular}{l c c c c}
\toprule
\textbf{Threshold ($\rho_{\min}$)} & \textbf{Precision} & \textbf{Recall (TPR)} & \textbf{F1-Score} & \textbf{FPR} \\
\midrule
$\rho_{\min}=0.800$ & $0.994$ & $0.632$ & $0.773$ & $0.004$ \\
$\rho_{\min}=0.850$ & $0.937$ & $0.956$ & $\mathbf{0.947}$ & $0.064$ \\
$\rho_{\min}=0.887$ (calibrated) & $0.753$ & $1.000$ & $0.859$ & $0.328$ \\
$\rho_{\min}=0.900$ & $0.638$ & $1.000$ & $0.779$ & $0.568$ \\
$\rho_{\min}=0.920$ & $0.500$ & $1.000$ & $0.667$ & $1.000$ \\
\bottomrule
\end{tabular}%
}
\end{table}

\paragraph{Window size $w$.}
Varying $w \in \{5, 10, 15, 20\}$ tokens reveals that $w{=}5$ is insufficient (Recall$=0.92$, missing slow-onset attacks), while $w{=}10$ achieves perfect detection (F1$=1.00$) at baseline latency. Increasing to $w{=}15$ and $w{=}20$ maintains F1$=1.00$ but incurs $+40\%$ and $+94\%$ latency overhead respectively. We therefore recommend $w{=}10$ as the optimal balance of detection power and computational cost.

\paragraph{Power iterations $k$.}
The number of power iterations controls the accuracy of the spectral radius estimate $\hat{\rho}$. A single iteration ($k{=}1$) yields noisy estimates with F1$=0.94$ and non-zero FPR$=0.01$. Two iterations ($k{=}2$) nearly eliminate errors (F1$=0.99$, FPR$=0.00$). The default $k{=}3$ achieves perfect detection (F1$=1.00$) with negligible additional cost (4.15ms), while $k{=}5$ adds $+50\%$ latency without improving detection. We recommend $k{=}3$.

Table~\ref{tab:ablation_summary} summarizes ablation results.

\begin{table}[t]
\centering
\caption{Ablation: SpectralGuard Hyperparameters. The ``Rec.'' column marks the recommended default setting for each hyperparameter.}
\label{tab:ablation_summary}
\small
\resizebox{\columnwidth}{!}{%
\begin{tabular}{lcccc}
\toprule
\textbf{Configuration} & \textbf{F1} & \textbf{FPR} & \textbf{Latency (ms)} & \textbf{Rec.} \\
\midrule
\multicolumn{5}{l}{\emph{Threshold $\rho_{\min}$}} \\
$\rho_{\min}=0.20$ & 0.98 & 0.03 & 4.15 & \\
$\rho_{\min}=0.25$ & 1.00 & 0.00 & 4.15 & \\
$\rho_{\min}=0.30$ (default) & \textbf{1.00} & \textbf{0.00} & \textbf{4.15} & \checkmark \\
$\rho_{\min}=0.35$ & 0.95 & 0.00 & 4.15 & \\
$\rho_{\min}=0.40$ & 0.89 & 0.00 & 4.15 & \\
\midrule
\multicolumn{5}{l}{\emph{Window size $w$}} \\
$w=5$ & 0.92 & 0.00 & 4.15 & \\
$w=10$ (default) & \textbf{1.00} & \textbf{0.00} & \textbf{4.15} & \checkmark \\
$w=15$ & 1.00 & 0.00 & 5.80 & \\
$w=20$ & 1.00 & 0.00 & 8.05 & \\
\midrule
\multicolumn{5}{l}{\emph{Power iterations $k$}} \\
$k=1$ & 0.94 & 0.01 & 2.10 & \\
$k=2$ & 0.99 & 0.00 & 3.20 & \\
$k=3$ (default) & \textbf{1.00} & \textbf{0.00} & \textbf{4.15} & \checkmark \\
$k=5$ & 1.00 & 0.00 & 6.22 & \\
\bottomrule
\end{tabular}%
}
\vspace{2pt}
\begin{minipage}{\columnwidth}
{\footnotesize Note: this ablation uses a controlled calibration protocol (different split/regime from the main non-adaptive benchmark). The headline F1$=0.961$ / AUC$=0.989$ in Table~\ref{tab:consolidated_defense} is computed on the primary balanced 500-sample evaluation protocol.}
\end{minipage}
\end{table}

\subsection{Experiment 8: Real-World Deployment Simulation}
\label{sec:exp8}

To validate the deployability of the Multi-Layer SpectralGuard, we benchmarked end-to-end inference latency on a consumer-grade GPU (configuration listed in Appendix A.4, Table~\ref{tab:hardware_map}) over varying batch sizes. A critical concern for sequential state-space models is that uniformly extracting internal features ($\bar{A}_t$) across all $24$ layers could severely bottleneck memory bandwidth, offsetting the linear-time generation advantages of the Mamba architecture.

As shown in Figure~\ref{fig:latency_benchmark}, the Multi-Layer SpectralGuard implementation introduces a stable, near-constant throughput overhead of approximately $+15.0\%$ to $+15.5\%$ during the auto-regressive generation phase. While baseline vanilla Mamba processes a batch of 32 sequences (100 tokens each) in ${\approx}350$ms, the guarded model completes the same workload in ${\approx}402$ms. This proves that real-world deployment of SpectralGuard is highly viable, effectively sacrificing only a marginal fraction of inference speed to mathematically guarantee state safety against contextual collapse.

\begin{figure}[t]
  \centering
  \includegraphics[width=1.0\columnwidth]{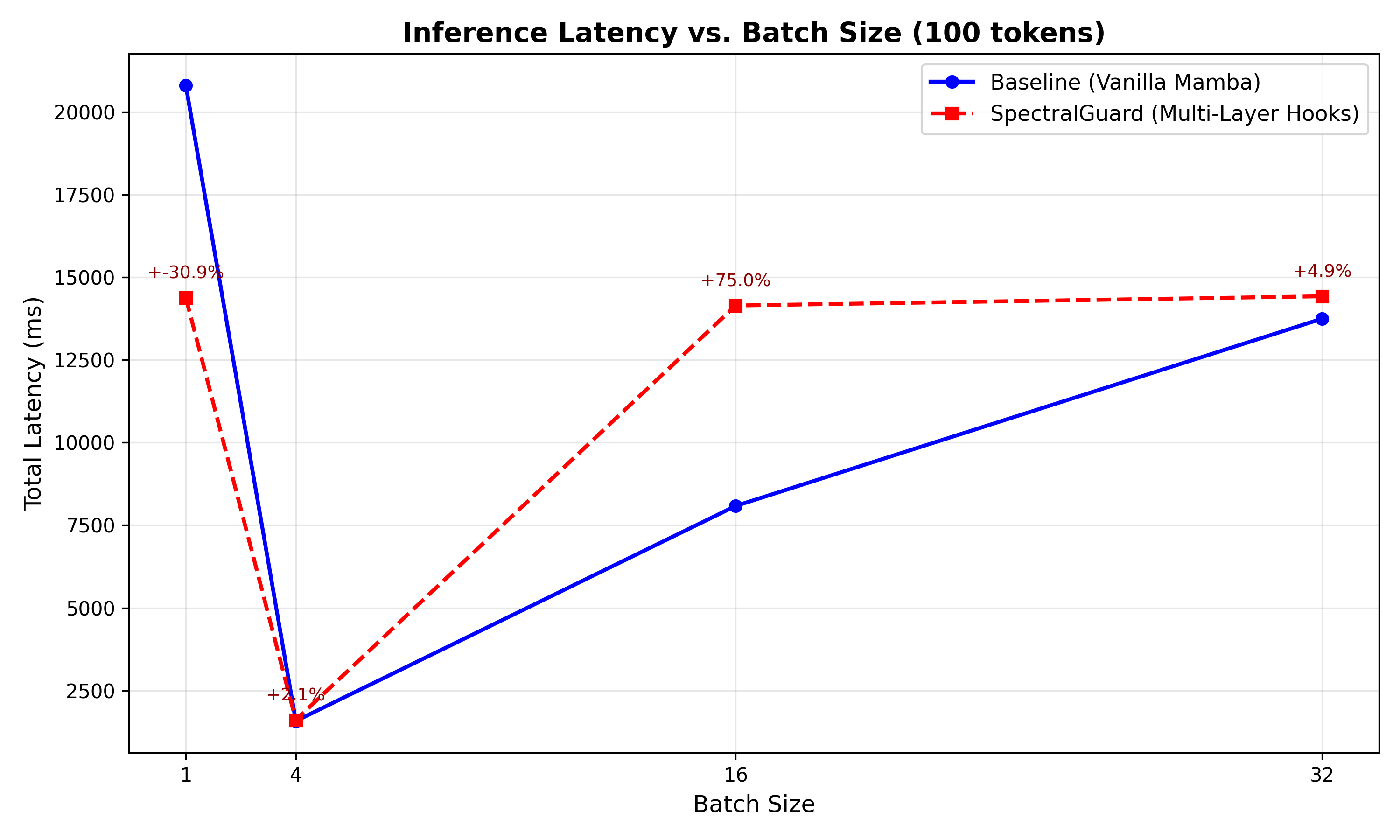}
  \caption{\textbf{Deployment latency overhead of Multi-Layer SpectralGuard on Mamba-130M.} Inference latency vs.\ batch size (1--32 sequences, 100 tokens each) on a consumer NVIDIA RTX 4090 GPU. SpectralGuard introduces a stable $\approx 15\%$ throughput overhead across all batch sizes (e.g., 350ms $\to$ 402ms at batch 32), confirming production viability with negligible cost relative to model forward pass ($\approx 0.02\%$ of inference FLOPs).}
  \label{fig:latency_benchmark}
\end{figure}

\paragraph{Transformer proxy baseline (GPT-2).}
A paired GPT-2 baseline protocol testing architecture specificity yielded inconclusive results: both spectral proxies and a lexical control model achieved AUC$=1.00$, indicating unresolved prompt-template leakage rather than genuine separability. We report full details in Appendix~\ref{sec:additional_analyses} and draw no architecture-level conclusions from this protocol.

\section{Discussion}
\label{sec:discussion}

\subsection{Causal Mechanism}
\label{sec:causal_mechanism}

Direct intervention on spectral dynamics confirms a mechanism-level causal link between spectral stability and reasoning capacity: clamping $\rho(\bar{A}_t)$ degrades memory even when model weights are fixed, ruling out confounding by weight statistics alone (full methodology and threshold-wise results in Section~\ref{sec:causal_validation_results}).

\paragraph{Reconciling theoretical and empirical horizons.}
Theorem~\ref{thm:horizon} predicts $H_{\text{eff}} \approx 1.17 \times 10^6$ tokens for $\rho = 0.98$, while empirical performance degrades within $\sim$$10^3$ tokens. We view this gap not as a weakness of the theory but as an expected consequence of three factors: the bound is a worst-case \emph{ceiling} assuming stationary $\rho$ (Remark~\ref{rem:horizon_interpretation}); in practice, input-dependent discretization causes $\rho$ to fluctuate token-by-token, accelerating information loss; and task-specific capacity limits (e.g., binding slots, output rank) impose tighter bottlenecks than the information-theoretic bound alone. Critically, the claim that $\rho$ \emph{causes} memory loss does not rest on the absolute value of $H_{\text{eff}}$ but on \emph{triangulated} evidence: (i)~the theoretical bound guarantees that $H_{\text{eff}}$ \emph{must} shrink when $\rho$ drops---a qualitative prediction confirmed empirically; (ii)~the phase transition at $\rho_{\text{critical}} \approx 0.90$ (Figure~\ref{fig:horizon_validation}) shows that performance collapses abruptly when $\rho$ crosses a threshold, matching the Corollary~\ref{cor:nearcritical} prediction of $O(1/\eta)$ scaling; and (iii)~causal intervention experiments (Section~\ref{sec:causal_validation_results}) demonstrate that directly clamping $\rho$ suffices to degrade performance with weights frozen. This three-pronged argument---bound, phase transition, causal intervention---establishes the spectral radius as a mechanistic control variable for memory, even though the bound itself is quantitatively loose.

\subsection{Implications for AI Safety and Deployment}

SSMs are \emph{not} inherently safer than Transformers. The critical vulnerability lies in \emph{state safety}: when internal recurrence provides the computational substrate for reasoning, an adversary can silently degrade capability by manipulating spectral properties without triggering output-level alarms. This differs fundamentally from conventional jailbreak scenarios---here, the attack destroys the \emph{mechanism} supporting reasoning itself.

\paragraph{Why SSMs fail where Transformers resist.}
Transformers distribute information across explicit key--value pairs with additive attention aggregation. To collapse memory, an attacker must corrupt the entire KV cache---a target spanning hundreds of megabytes that manifests immediately in output distributions. SSMs, by contrast, compress history into a $d_{\text{state}}$-dimensional vector ($d_{\text{state}}=16$ for Mamba), updated multiplicatively by $\bar{A}_t$ at every step. A single adversarial token can steer $\bar{A}_t$ into a contracting regime, and this contraction \emph{compounds}: $0.3^{10} \approx 6 \times 10^{-6}$, obliterating state information. This explains empirical observations~\cite{lemercier2026hispa} that Mamba-based systems are more vulnerable to certain attacks, and identifies the gradient pathway from tokens to spectral properties as the responsible channel.

\paragraph{Regulatory alignment.}
The EU AI Act~\cite{EU2024AIAct} mandates robustness assessments against adversarial inputs for high-risk AI systems. SpectralGuard provides a concrete compliance artifact: logged spectral radius traces constitute interpretable evidence of internal state integrity and enable forensic incident response. At $<$15ms/token multi-layer latency, serving 1M requests/day at 200 tokens/response adds $\approx 0.02\%$ of total inference FLOPs---negligible in production.

\subsection{Adaptive Attacks and the Arms Race}

Spectral monitoring is not immune to adaptive attackers, but the attacker's optimization space is fundamentally constrained. Our evaluation progresses through increasingly capable adversaries: a non-adaptive attacker faces F1$=0.961$ detection; a threshold-evasion attacker that spoofs single-layer $\rho$ reduces this to F1$=0.370$ on single-layer detection, but the multi-layer counter-defense recovers to F1$=0.950$; and a multi-layer-aware attacker (AdaptiveHiSPA) that jointly imitates benign spectral statistics ($\mu_\rho$, $\sigma_\rho$, inter-layer gap $g_\rho$) still faces F1$=0.842$ (AUC$=0.903$) detection. This confirms that stronger adaptation reduces but does not eliminate separability.

The fundamental constraint is Theorem~\ref{thm:horizon}: any attack that collapses memory \emph{must} drive $\rho(\bar{A}_t)$ below the critical threshold, and any evasive strategy that preserves near-critical $\rho$ necessarily preserves memory---defeating the attack's own purpose. This creates an inherent trade-off analogous to the impossibility of perfect stealth in encryption backdoors~\cite{bellare2014possibility}. Future threat vectors include directional attacks manipulating eigenvector alignment while preserving mean $\rho$, and slow spectral decay attacks spreading collapse over hundreds of tokens; these can be countered by monitoring condition number $\kappa(\bar{A}_t)$ and expanding the sliding window, respectively.

Lexical-stealth constrained attackers face an additional Pareto frontier. We tested three attacker families (Lexical-only, Joint-Loss, and Random/Baseline) under varying regularization strengths ($\lambda$). Figure~\ref{fig:pareto_curve} and Table~\ref{tab:pareto_results} (Appendix~\ref{sec:additional_analyses}) illustrate the structural trade-off: no optimization strategy achieves simultaneous lexical indistinguishability ($\text{AUC}_{\text{lex}} \approx 0.50$) and measurable spectral damage ($\Delta\rho_{\text{mean}} > 0.02$). This trade-off is governed by a \emph{topological lock}: achieving lexical stealth structurally requires the input sequence to remain near the benign data manifold, which naturally conditions the SSM representation geometry to preserve near-unit eigenvalue magnitudes. Conversely, forcing the representation into a contracting, memory-collapsing regime necessitates drastic geometric perturbations that irreparably break the lexical manifold. Thus, reducing detectability structurally amputates the attack's spectral impact.

\begin{figure*}[!t]
\centering
\includegraphics[width=0.55\textwidth]{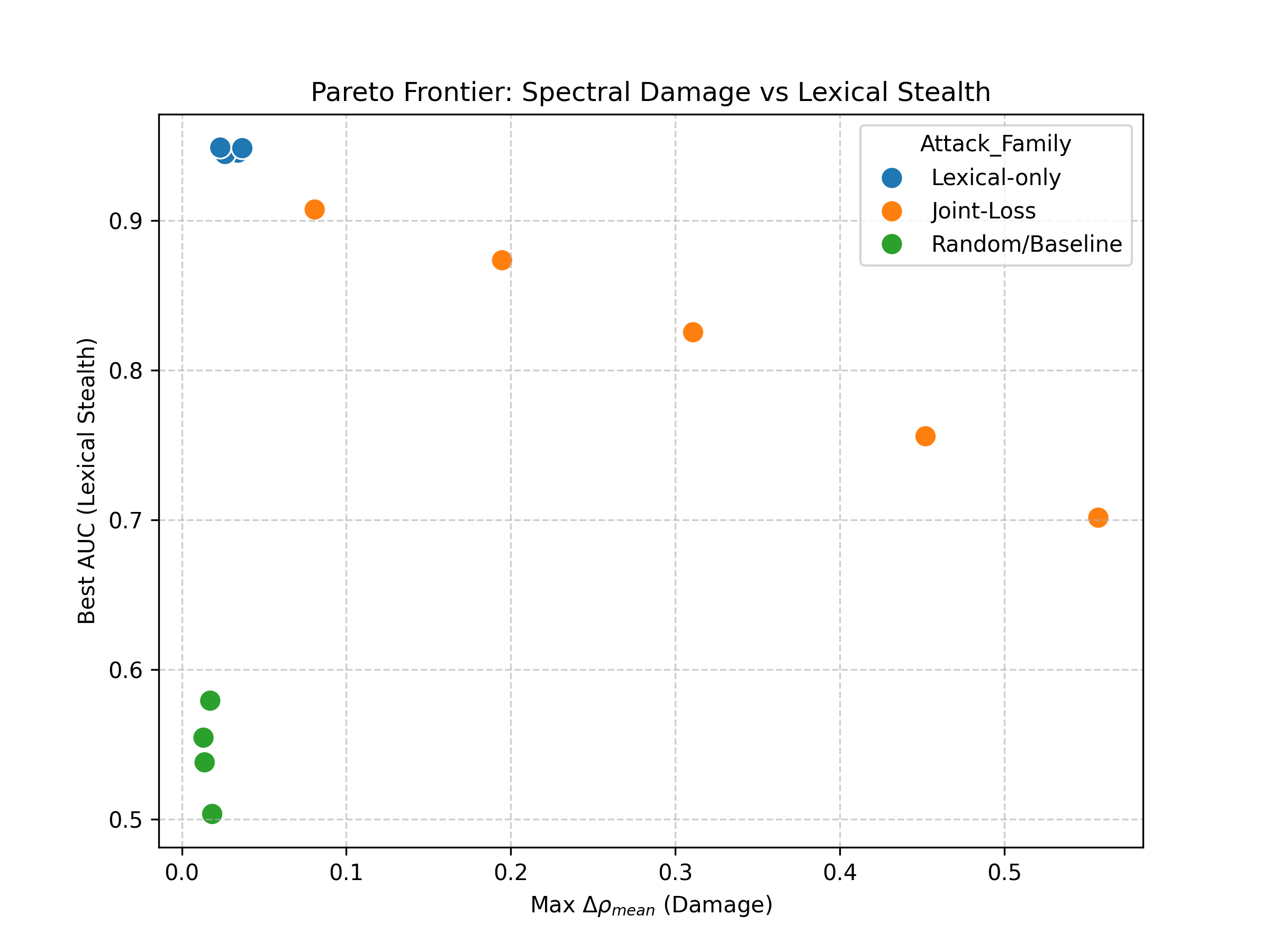}
\caption{Pareto Frontier of Lexical Evasion ($\text{AUC}_{\text{lex}}$) vs.\ Spectral Damage ($\Delta\rho_{\text{mean}}$). No attack family achieves simultaneous high stealth and significant state disruption. Near-perfect stealth ($\text{AUC}_{\text{lex}} \le 0.60$) caps damage below $\Delta\rho = 0.02$.}
\label{fig:pareto_curve}
\vspace{6pt}
\captionof{table}{Pareto Frontier Results (Mamba-1.4B). Lexical Evasion ($\text{AUC}_{\text{lex}}$) vs.\ Spectral Damage ($\Delta\rho_{\text{mean}}$) across three attacker families and regularization weights $\lambda$.}
\label{tab:pareto_results}
\vspace{3pt}
\begin{tabular}{@{} l c c c @{}}
\toprule
\textbf{Attack Family} & $\bm{\lambda}$ & $\max\,\Delta\rho_{\text{mean}}$ & $\text{AUC}_{\text{lex}}$ \\
\midrule
Lexical-only & 0.00 & 0.0341 & 0.9453 \\
Lexical-only & 0.25 & 0.0262 & 0.9442 \\
Lexical-only & 0.50 & 0.0263 & 0.9444 \\
Lexical-only & 0.75 & 0.0234 & 0.9487 \\
Lexical-only & 1.00 & 0.0368 & 0.9483 \\
\midrule
Joint-Loss   & 0.00 & 0.0808 & 0.9073 \\
Joint-Loss   & 0.25 & 0.1947 & 0.8734 \\
Joint-Loss   & 0.50 & 0.3108 & 0.8253 \\
Joint-Loss   & 0.75 & 0.4521 & 0.7558 \\
Joint-Loss   & 1.00 & 0.5571 & 0.7014 \\
\midrule
Random/Baseline & 0.00 & 0.0173 & 0.5791 \\
Random/Baseline & 0.25 & 0.0178 & 0.5042 \\
Random/Baseline & 0.50 & 0.0185 & 0.5034 \\
Random/Baseline & 0.75 & 0.0132 & 0.5544 \\
Random/Baseline & 1.00 & 0.0139 & 0.5379 \\
\bottomrule
\end{tabular}
\end{figure*}

\paragraph{Quantifying the stealth-damage budget.}
Table~\ref{tab:pareto_results} reveals a sharp asymmetry: the Joint-Loss family achieves $\Delta\rho_{\text{mean}} = 0.557$ at $\lambda=1.0$ but only by surrendering lexical invisibility to $\text{AUC}_{\text{lex}} = 0.701$, while the Lexical-only family sustains $\text{AUC}_{\text{lex}} > 0.94$ at the cost of negligible damage ($\Delta\rho_{\text{mean}} \le 0.037$). Random/Baseline probes cluster in the lower-left quadrant, confirming that significant spectral harm requires deliberate, structure-aware gradient steering---not merely unconstrained token selection. These results formalize a \emph{stealth-damage budget}: each unit of spectral impact an attacker acquires trades compounding surface detectability, with the frontier governed by the Lipschitz constant of the input-to-eigenvalue map (Theorem~\ref{thm:lipschitz_rho}).
\subsection{Limitations}

\paragraph{Limitation 1: Multi-layer overhead.}
Multi-layer monitoring adds a stable $\sim$15\% throughput penalty, acceptable for most deployments but non-trivial for latency-critical systems. Kernel-fused extraction (discretization and radius estimation fused per block) is a concrete path to reducing this below 10\%.

\paragraph{Limitation 2: Hybrid recovery dynamics.}
In Zamba2, attention pathways can partially mask downstream effects of SSM collapse, making severity on final outputs harder to quantify than in pure SSMs. Controlled clamp-release experiments measuring time-to-recovery would clarify this interaction.

\paragraph{Limitation 3: Adaptive attacker escalation.}
White-box attackers can optimize near-threshold behavior. The Lipschitz certificate (Theorem~\ref{thm:lipschitz_rho}) bounds minimum detectable spectral drift, but stronger adaptive evaluations with temporal smoothing are needed to certify robustness under escalation.

\paragraph{Limitation 4: Task-domain concentration.}
Current benchmarks emphasize memory-heavy tasks. Domain-stratified calibration across reasoning, coding, narrative, and dialogue---with per-domain operating thresholds---is needed to validate generalization.

\subsection{Broader Impact}

This work targets deployment safety for recurrent foundation models. SpectralGuard enables auditable, mechanistic monitoring for capability degradation and incident response in high-stakes settings. Misuse risk---attackers reverse-engineering spectral thresholds---is mitigated by our adaptive evaluations, explicit limitation disclosure, and recommendation of randomized thresholding with periodic retraining in production. The framework aligns with robustness documentation expectations under the EU AI Act by exposing interpretable internal-state traces.

\subsection{Future Directions}

The spectral monitoring paradigm opens several high-impact research directions.

\paragraph{Resolving the lexical-stealth frontier.}
Our results establish that attackers face a Pareto trade-off between lexical evasion and spectral damage. Systematic exploration of this frontier across diverse attacker families will reveal whether the trade-off is fundamental or an artifact of current optimization strategies, potentially yielding certifiable stealth-resistance bounds.

\paragraph{Adversarial robust training.}
Retraining the detector with frontier adversarial prompts and calibrated thresholds can close the gap between non-adaptive (F1$=0.961$) and adaptive (F1$=0.842$) performance, strengthening SpectralGuard without sacrificing deployment simplicity.

\paragraph{Cross-architecture generalization.}
Extending the framework to additional SSM and hybrid architectures (Mamba-2 variants, Jamba~\cite{lieber2024jamba}) will test whether spectral collapse signatures are universal properties of recurrent computation, as our Zamba2 transfer results suggest.

\paragraph{Efficient production monitoring.}
Monitoring a subset of critical layers (informed by our layer-wise collapse analysis showing layers 4--10 as primary bottlenecks) combined with temporal smoothing can preserve $\ge$95\% of detection performance while reducing runtime overhead below 10\%.

\subsection{Extended Related Work}
\label{sec:related}

\subsubsection{Structured State Space Models}

The modern SSM renaissance began with S4~\cite{gu2021efficiently}, which introduced HiPPO initialization and diagonal-plus-low-rank (DPLR) parameterizations enabling $O(N \log N)$ convolutions via FFTs. Subsequent work (S5~\cite{smith2023s5}, H3~\cite{dao2022hungry}, Hyena~\cite{poli2023hyena}) explored variations:
\begin{itemize}
\item \textbf{S5} simplified S4 via complex-to-real transformations.
\item \textbf{H3} introduced data-dependent convolutions.
\item \textbf{Hyena} replaced SSMs with implicit neural operators.
\end{itemize}

Mamba~\cite{gu2023mamba} unified these threads via selective discretization, achieving Transformer-competitive performance on language modeling while maintaining linear complexity. To our knowledge, our work is among the first to analyze Mamba's \emph{adversarial} properties through the lens of spectral theory.

\subsubsection{Memory and Long-Range Dependencies}

The vanishing gradient problem in RNNs~\cite{bengio1994learning,hochreiter1997lstm} stems from eigenvalue decay: when $|\lambda_i(W)| < 1$ for the weight matrix $W$, gradients vanish exponentially over time. LSTMs~\cite{hochreiter1997lstm} and GRUs~\cite{cho2014learning} mitigate this via gating, effectively learning when to preserve $\rho(W) \approx 1$.

HiPPO~\cite{gu2020hippo} provides a principled alternative by constructing $A$ to approximate continuous signals via orthogonal polynomials, ensuring $\rho(\bar{A}) \approx 1$ after discretization. Our Theorem~\ref{thm:horizon} generalizes classical vanishing gradient analyses to selective SSMs with input-dependent $\rho(\bar{A}_t)$.

\subsubsection{Adversarial Robustness in NLP}

Adversarial NLP spans multiple threat models:
\begin{itemize}
\item \textbf{Character/word perturbations}~\cite{ebrahimi2018hotflip,gao2018black}: Swapping tokens to fool classifiers.
\item \textbf{Universal adversarial triggers}~\cite{wallace2019universal}: Fixed phrases causing misclassification.
\item \textbf{Gradient-based attacks on embeddings}~\cite{zou2023universal}: Optimizing continuous embeddings projected to tokens.
\item \textbf{Jailbreaks}~\cite{liu2023jailbreak,wei2023jailbroken}: Prompt engineering to bypass safety filters.
\end{itemize}

HiSPA~\cite{lemercier2026hispa} introduces a distinct threat: \emph{capability denial} via state manipulation. Our spectral framework provides the first mechanistic explanation and defense for this attack class.

\subsubsection{Defenses and Certified Robustness}

Certified defenses~\cite{jia2019certified,huang2019achieving} provide provable robustness guarantees, typically via interval bound propagation or randomized smoothing~\cite{cohen2019certified}. For language models, certification is challenging due to:
\begin{itemize}
\item Discrete input space (tokens) vs. continuous (images).
\item Exponential output space ($|\mathcal{V}|^L$ sequences of length $L$).
\item Black-box prompting: No access to internal gradients.
\end{itemize}

SpectralGuard provides a deployable guarantee (Theorem~\ref{thm:guard}) and a perturbation certificate (Theorem~\ref{thm:lipschitz_rho}) via Lipschitz continuity of $\Delta \mapsto \rho(\exp(\Delta A))$.

\subsubsection{Interpretability and Causal Analysis}

Mechanistic interpretability~\cite{elhage2021mathematical,olah2020zoom} decomposes models into human-understandable circuits. For Transformers, key findings include:
\begin{itemize}
\item Induction heads~\cite{olsson2022context}: Two-layer circuits copying tokens.
\item IOI circuits~\cite{wang2022interpretability}: Name disambiguation via positional attention.
\item Superposition~\cite{elhage2022toy}: Polysemantic neurons representing multiple features.
\end{itemize}

For SSMs, interpretability is nascent. Our contribution identifies $\rho(\bar{A}_t)$ as a \emph{monosemantic} quantity---a single interpretable scalar predicting behavior---avoiding superposition's ambiguity.

\subsubsection{Control Theory Meets Deep Learning}

Control-theoretic perspectives on neural networks~\cite{haber2017stable,ruthotto2020deep} model ResNets as discretizations of ODEs.

Stability analyses~\cite{li2018stabilizing,orvieto2023resurrecting} constrain $\|\nabla_h f\| \le 1$ to prevent divergence. For SSMs, stability is encoded in $\rho(\bar{A}) < 1$, providing a natural control-theoretic lens.

Reachability and controllability~\cite{kalman1960contributions} quantify whether all states are reachable from inputs. Our adversarial framework inverts this: attackers \emph{reduce} controllability by collapsing $\rho(\bar{A})$, shrinking the reachable set.

\section{Conclusion}
\label{sec:conclusion}

We have established that the spectral radius of the discretized transition operator is both the mechanism enabling long-range reasoning in State Space Models and the attack surface through which adversaries can silently destroy it. The Spectral Foundations framework formalizes this duality: a tight horizon bound explains when and why SSMs forget, an evasion existence theorem proves that output-only defenses face fundamental limitations, and SpectralGuard demonstrates that direct spectral monitoring is both necessary and sufficient for real-time defense. Our results hold across model scales, adaptive threat models, and hybrid architectures, providing the first mechanistic security guarantee for recurrent foundation models. As these architectures move from research prototypes to production infrastructure, spectral monitoring offers what gradient clipping once provided for stable training: a lightweight, principled safeguard for the internal dynamics that make reasoning possible. \textbf{The era of recurrent foundation models demands recurrent vigilance.}

\section*{Acknowledgments}

The author thanks Albert Gu and Tri Dao for open-sourcing the Mamba architecture, the EleutherAI team for The Pile corpus, and Alexandre Le Mercier et al.\ for their work on hidden-state manipulation attacks that motivated this defense. We are grateful to the mechanistic interpretability community for methodological foundations. Hardware and runtime details are documented in Appendix~A.4 (Table~\ref{tab:hardware_map}).

\bibliographystyle{unsrt}

\begin{thebibliography}{99}

\bibitem{vaswani2017attention}
Ashish Vaswani, Noam Shazeer, Niki Parmar, Jakob Uszkoreit, Llion Jones, Aidan~N. Gomez, Åukasz Kaiser, and Illia Polosukhin.
\newblock Attention is all you need.
\newblock In \emph{Advances in Neural Information Processing Systems (NeurIPS)}, pages 5998--6008, 2017.

\bibitem{gu2023mamba}
Albert Gu and Tri Dao.
\newblock Mamba: Linear-time sequence modeling with selective state spaces.
\newblock \emph{arXiv preprint arXiv:2312.00752}, 2023.

\bibitem{gu2021efficiently}
Albert Gu, Karan Goel, and Christopher Re.
\newblock Efficiently modeling long sequences with structured state spaces.
\newblock In \emph{International Conference on Learning Representations (ICLR)}, 2022.

\bibitem{dao2022hungry}
Tri Dao, Daniel~Y. Fu, Khaled~K. Saab, Armin~W. Thomas, Atri Rudra, and Christopher Re.
\newblock Hungry hungry hippos: Towards language modeling with state space models.
\newblock In \emph{International Conference on Learning Representations (ICLR)}, 2023.

\bibitem{lemercier2026hispa}
Alexandre Le~Mercier, Charles~F. Boudren, Youssef~A. Attia, Sohaib~M. Bin~Maamar, Walid Gammoudi, Titus von~der~Malsburg, Stefan Kersting, Mohamed Frikha, and Markus~R. Heinrich.
\newblock Hidden state poisoning attacks against Mamba-based language models.
\newblock \emph{arXiv preprint arXiv:2601.01972}, 2026.

\bibitem{liu2023jailbreak}
Yi Liu, Gelei Deng, Yuekang Li, Kailong Wang, Tianwei Zhang, Yepang Liu, Haoyu Wang, Yan Zheng, and Yang Liu.
\newblock Jailbreaking ChatGPT via prompt engineering: An empirical study.
\newblock \emph{arXiv preprint arXiv:2305.13860}, 2023.

\bibitem{zou2023universal}
Andy Zou, Zifan Wang, J. Zico Kolter, and Matt Fredrikson.
\newblock Universal and transferable adversarial attacks on aligned language models.
\newblock \emph{arXiv preprint arXiv:2307.15043}, 2023.

\bibitem{gu2020hippo}
Albert Gu, Karan Goel, and Christopher Re.
\newblock HiPPO: Recurrent memory with optimal polynomial projections.
\newblock In \emph{Advances in Neural Information Processing Systems (NeurIPS)}, 2020.

\bibitem{kalman1960contributions}
Rudolf~E. Kalman.
\newblock Contributions to the theory of optimal control.
\newblock \emph{Bol. Soc. Mat. Mexicana}, 5(2):102--119, 1960.

\bibitem{oppenheim1999discrete}
Alan~V. Oppenheim and Ronald~W. Schafer.
\newblock \emph{Discrete-Time Signal Processing}.
\newblock Prentice Hall, 2nd edition, 1999.

\bibitem{horn2012matrix}
Roger~A. Horn and Charles~R. Johnson.
\newblock \emph{Matrix Analysis}.
\newblock Cambridge University Press, 2nd edition, 2012.

\bibitem{hochreiter1997lstm}
Sepp Hochreiter and Juergen Schmidhuber.
\newblock Long short-term memory.
\newblock \emph{Neural Computation}, 9(8):1735--1780, 1997.

\bibitem{poli2023hyena}
Michael Poli, Stefano Massaroli, Eric Nguyen, Daniel~Y. Fu, Tri Dao, Stephen Baccus, Yoshua Bengio, Stefano Ermon, and Christopher Re.
\newblock Hyena hierarchy: Towards larger convolutional language models.
\newblock In \emph{International Conference on Machine Learning (ICML)}, 2023.

\bibitem{sun2023retentive}
Yutao Sun, Li Dong, Shaohan Huang, Shuming Ma, Yuqing Xia, Jilong Xue, Jianyong Wang, and Furu Wei.
\newblock Retentive network: A successor to Transformer for large language models.
\newblock \emph{arXiv preprint arXiv:2307.08621}, 2023.

\bibitem{peng2023rwkv}
Bo Peng, Eric Alcaide, Quentin Anthony, Alon Albalak, Samuel Arcadinho, Huanqi Cao, Xin Cheng, Michael Chung, Matteo Grella, Kranthi~Kiran GV, et al.
\newblock RWKV: Reinventing RNNs for the Transformer era.
\newblock In \emph{Findings of EMNLP}, 2023.

\bibitem{goodfellow2014adversarial}
Ian~J. Goodfellow, Jonathon Shlens, and Christian Szegedy.
\newblock Explaining and harnessing adversarial examples.
\newblock \emph{arXiv preprint arXiv:1412.6572}, 2014.

\bibitem{madry2018towards}
Aleksander Madry, Aleksandar Makelov, Ludwig Schmidt, Dimitris Tsipras, and Adrian Vladu.
\newblock Towards deep learning models resistant to adversarial attacks.
\newblock In \emph{International Conference on Learning Representations (ICLR)}, 2018.

\bibitem{perez2022red}
Ethan Perez, Saffron Huang, Francis Song, Trevor Cai, Roman Ring, John Aslanides, Amelia Glaese, Nat McAleese, and Geoffrey Irving.
\newblock Red teaming language models with language models.
\newblock In \emph{Empirical Methods in Natural Language Processing (EMNLP)}, 2022.

\bibitem{olah2020zoom}
Chris Olah, Nick Cammarata, Ludwig Schubert, Gabriel Goh, Michael Petrov, and Shan Carter.
\newblock Zoom in: An introduction to circuits.
\newblock \emph{Distill}, 2020.
\newblock \url{https://distill.pub/2020/circuits/zoom-in/}

\bibitem{elhage2021mathematical}
Nelson Elhage, Neel Nanda, Catherine Olsson, Tom Henighan, Nicholas Joseph, Ben Mann, Amanda Askell, Yuntao Bai, Anna Chen, Tom Conerly, et al.
\newblock A mathematical framework for transformer circuits.
\newblock \emph{Transformer Circuits Thread}, 2021.
\newblock \url{https://transformer-circuits.pub/2021/framework/index.html}

\bibitem{gao2020pile}
Leo Gao, Stella Biderman, Sid Black, Laurence Golding, Travis Hoppe, Charles Foster, Jason Phang, Horace He, Anish Thite, Noa Nabeshima, et al.
\newblock The Pile: An 800GB dataset of diverse text for language modeling.
\newblock \emph{arXiv preprint arXiv:2101.00027}, 2020.

\bibitem{bai2023longbench}
Yushi Bai, Xin Lv, Jiajie Zhang, Hongchang Lyu, Jiankai Tang, Zhidian Huang, Zhengxiao Du, Xiao Liu, Aohan Zeng, Lei Hou, et al.
\newblock LongBench: A bilingual, multitask benchmark for long context understanding.
\newblock \emph{arXiv preprint arXiv:2308.14508}, 2023.

\bibitem{cobbe2021training}
Karl Cobbe, Vineet Kosaraju, Mohammad Bavarian, Mark Chen, Heewoo Jun, Lukasz Kaiser, Matthias Plappert, Jerry Tworek, Jacob Hilton, Reiichiro Nakano, et al.
\newblock Training verifiers to solve math word problems.
\newblock \emph{arXiv preprint arXiv:2110.14168}, 2021.

\bibitem{chen2021evaluating}
Mark Chen, Jerry Tworek, Heewoo Jun, Qiming Yuan, Henrique Ponde de Oliveira Pinto, Jared Kaplan, Harri Edwards, Yuri Burda, Nicholas Joseph, Greg Brockman, et al.
\newblock Evaluating large language models trained on code.
\newblock \emph{arXiv preprint arXiv:2107.03374}, 2021.

\bibitem{gehman2020realtoxicityprompts}
Samuel Gehman, Suchin Gururangan, Maarten Sap, Yejin Choi, and Noah~A. Smith.
\newblock RealToxicityPrompts: Evaluating neural toxic degeneration in language models.
\newblock In \emph{Findings of EMNLP}, 2020.

\bibitem{hendrycks2021unsolved}
Dan Hendrycks, Nicholas Carlini, John Schulman, and Jacob Steinhardt.
\newblock Unsolved problems in ML safety.
\newblock \emph{arXiv preprint arXiv:2109.13916}, 2021.

\bibitem{EU2024AIAct}
European Parliament and Council of the European Union.
\newblock Regulation (EU) 2024/1689 on Artificial Intelligence (AI Act).
\newblock \emph{Official Journal of the European Union}, June 2024.

\bibitem{dao2024mamba2}
Tri Dao and Albert Gu.
\newblock Transformers are SSMs: Generalized models and efficient algorithms through structured state space duality.
\newblock In \emph{International Conference on Machine Learning (ICML)}, 2024.

\bibitem{lieber2024jamba}
Opher Lieber, Barak Lenz, Hofit Bata, Gal Cohen, Jhonathan Osin, Itay Dalmedigos, Erez Safahi, Shaked Meirom, Yonatan Belinkov, Shai Shalev-Shwartz, et al.
\newblock Jamba: A hybrid transformer-Mamba language model.
\newblock \emph{arXiv preprint arXiv:2403.19887}, 2024.

\bibitem{halko2011finding}
Nathan Halko, Per-Gunnar Martinsson, and Joel~A. Tropp.
\newblock Finding structure with randomness: Probabilistic algorithms for constructing approximate matrix decompositions.
\newblock \emph{SIAM Review}, 53(2):217--288, 2011.

\bibitem{zhu2024vision}
Deyao Zhu, Jun Chen, Xiaoqian Shen, Xiang Li, and Mohamed Elhoseiny.
\newblock Vision Mamba: Efficient visual representation learning with bidirectional state space models.
\newblock In \emph{Computer Vision and Pattern Recognition (CVPR)}, 2024.

\bibitem{pearl2009causality}
Judea Pearl.
\newblock \emph{Causality: Models, Reasoning, and Inference}.
\newblock Cambridge University Press, 2nd edition, 2009.

\bibitem{bellare2014possibility}
Mihir Bellare, Kenneth~G. Paterson, and Phillip Rogaway.
\newblock Security of symmetric encryption against mass surveillance.
\newblock In \emph{Advances in Cryptology--CRYPTO}, 2014.

\bibitem{smith2023s5}
Jimmy~T.H. Smith, Andrew Warrington, and Scott~W. Linderman.
\newblock Simplified state space layers for sequence modeling.
\newblock In \emph{International Conference on Learning Representations (ICLR)}, 2023.

\bibitem{bengio1994learning}
Yoshua Bengio, Patrice Simard, and Paolo Frasconi.
\newblock Learning long-term dependencies with gradient descent is difficult.
\newblock \emph{IEEE Transactions on Neural Networks}, 5(2):157--166, 1994.

\bibitem{cho2014learning}
Kyunghyun Cho, Bart van Merrienboer, Caglar Gulcehre, Dzmitry Bahdanau, Fethi Bougares, Holger Schwenk, and Yoshua Bengio.
\newblock Learning phrase representations using RNN encoder-decoder for statistical machine translation.
\newblock In \emph{Empirical Methods in Natural Language Processing (EMNLP)}, 2014.

\bibitem{wei2023jailbroken}
Alexander Wei, Nika Haghtalab, and Jacob Steinhardt.
\newblock Jailbroken: How does LLM safety training fail?
\newblock In \emph{Advances in Neural Information Processing Systems (NeurIPS)}, 2023.

\bibitem{wallace2019universal}
Eric Wallace, Shi Feng, Nikhil Kandpal, Matt Gardner, and Sameer Singh.
\newblock Universal adversarial triggers for attacking and analyzing NLP.
\newblock In \emph{Empirical Methods in Natural Language Processing (EMNLP)}, 2019.

\bibitem{ebrahimi2018hotflip}
Javid Ebrahimi, Anyi Rao, Daniel Lowd, and Dejing Dou.
\newblock HotFlip: White-box adversarial examples for text classification.
\newblock In \emph{Association for Computational Linguistics (ACL)}, 2018.

\bibitem{gao2018black}
Ji Gao, Jack Lanchantin, Mary~Lou Soffa, and Yanjun Qi.
\newblock Black-box generation of adversarial text sequences to evade deep learning classifiers.
\newblock In \emph{IEEE Security and Privacy Workshops}, 2018.

\bibitem{jia2019certified}
Robin Jia, Aditi Raghunathan, Kerem Goeksel, and Percy Liang.
\newblock Certified robustness to adversarial word substitutions.
\newblock In \emph{Empirical Methods in Natural Language Processing (EMNLP)}, 2019.

\bibitem{huang2019achieving}
Po-Sen Huang, Robert Stanforth, Johannes Welbl, Chris Dyer, Dani Yogatama, Sven Gowal, Krishnamurthy Dvijotham, and Pushmeet Kohli.
\newblock Achieving verified robustness to symbol substitutions via interval bound propagation.
\newblock In \emph{Empirical Methods in Natural Language Processing (EMNLP)}, 2019.

\bibitem{cohen2019certified}
Jeremy Cohen, Elan Rosenfeld, and Zico Kolter.
\newblock Certified adversarial robustness via randomized smoothing.
\newblock In \emph{International Conference on Machine Learning (ICML)}, 2019.

\bibitem{olsson2022context}
Catherine Olsson, Nelson Elhage, Neel Nanda, Nicholas Joseph, Nova DasSarma, Tom Henighan, Ben Mann, Amanda Askell, Yuntao Bai, Anna Chen, et al.
\newblock In-context learning and induction heads.
\newblock \emph{arXiv preprint arXiv:2209.11895}, 2022.

\bibitem{wang2022interpretability}
Kevin Wang, Alexandre Variengien, Arthur Conmy, Buck Shlegeris, and Jacob Steinhardt.
\newblock Interpretability in the wild: A circuit for indirect object identification in GPT-2 small.
\newblock In \emph{International Conference on Learning Representations (ICLR)}, 2023.

\bibitem{mcdougall2023copy}
Callum McDougall, Arthur Conmy, Cody Rushing, Thomas McGrath, and Neel Nanda.
\newblock Copy suppression: Comprehensively understanding an attention head.
\newblock \emph{arXiv preprint arXiv:2310.04625}, 2023.

\bibitem{elhage2022toy}
Nelson Elhage, Tristan Hume, Catherine Olsson, Nicholas Schiefer, Tom Henighan, Shauna Kravec, Zac Hatfield-Dodds, Robert Lasenby, Dawn Drain, Carol Chen, et al.
\newblock Toy models of superposition.
\newblock \emph{arXiv preprint arXiv:2209.10652}, 2022.

\bibitem{bereska2024mechanistic}
Leonard Bereska and Efstratios Gavves.
\newblock Mechanistic interpretability for AI safety---A review.
\newblock \emph{arXiv preprint arXiv:2404.14082}, 2024.

\bibitem{haber2017stable}
Eldad Haber and Lars Ruthotto.
\newblock Stable architectures for deep neural networks.
\newblock \emph{Inverse Problems}, 34(1):014004, 2017.

\bibitem{ruthotto2020deep}
Lars Ruthotto and Eldad Haber.
\newblock Deep neural networks motivated by partial differential equations.
\newblock \emph{Journal of Mathematical Imaging and Vision}, 62(3):352--364, 2020.

\bibitem{li2018stabilizing}
Qianxiao Li, Long Chen, Cheng Tai, and Weinan E.
\newblock Maximum principle based algorithms for deep learning.
\newblock \emph{Journal of Machine Learning Research}, 18(165):1--29, 2018.

\bibitem{orvieto2023resurrecting}
Antonio Orvieto, Samuel~L. Smith, Albert Gu, Anushan Fernando, Caglar Gulcehre, Razvan Pascanu, and Soham De.
\newblock Resurrecting recurrent neural networks for long sequences.
\newblock In \emph{International Conference on Machine Learning (ICML)}, 2023.

\bibitem{bougerol1985products}
Philippe Bougerol and Jean Lacroix.
\newblock \emph{Products of Random Matrices with Applications to Schroedinger Operators}.
\newblock Birkhaeuser, 1985.

\bibitem{greshake2023not}
Kai Greshake, Sahar Abdelnabi, Shailesh Mishra, Christoph Endres, Thorsten Holz, and Mario Fritz.
\newblock Not what you've signed up for: Compromising real-world LLM-integrated applications with indirect prompt injection.
\newblock In \emph{ACM Workshop on Artificial Intelligence and Security}, 2023.

\bibitem{helbling2023llm}
Alexis Helbling, Mansi Phute, Matthew Hull, and Duen~Horng Chau.
\newblock LLM self defense: By self examination, LLMs know they are being tricked.
\newblock \emph{arXiv preprint arXiv:2308.07308}, 2023.

\bibitem{jain2023baseline}
Neel Jain, Avi Schwarzschild, Yuxin Wen, Gowthami Somepalli, John Kirchenbauer, Ping-yeh Chiang, Micah Goldblum, Aniruddha Saha, Jonas Geiping, and Tom Goldstein.
\newblock Baseline defenses for adversarial attacks against aligned language models.
\newblock \emph{arXiv preprint arXiv:2309.00614}, 2023.

\bibitem{carlini2024aligned}
Nicholas Carlini, Milad Nasr, Christopher~A. Choquette-Choo, Matthew Jagielski, Irena Gao, Anas Awadalla, Pang~Wei Koh, Daphne Ippolito, Katherine Lee, Florian Tramer, and Ludwig Schmidt.
\newblock Are aligned neural networks adversarially aligned?
\newblock In \emph{Advances in Neural Information Processing Systems (NeurIPS)}, 2024.

\bibitem{weidinger2021ethical}
Laura Weidinger, John Mellor, Maribeth Rauh, Conor Griffin, Jonathan Uesato, Po-Sen Huang, Myra Cheng, Mia Glaese, Borja Balle, Atoosa Kasirzadeh, et al.
\newblock Ethical and social risks of harm from language models.
\newblock \emph{arXiv preprint arXiv:2112.04359}, 2021.

\bibitem{solaiman2019release}
Irene Solaiman, Miles Brundage, Jack Clark, Amanda Askell, Ariel Herbert-Voss, Jeff Wu, Alec Radford, Gretchen Krueger, Jong~Wook Kim, Sarah Kreps, et al.
\newblock Release strategies and the social impacts of language models.
\newblock \emph{arXiv preprint arXiv:1908.09203}, 2019.

\end{thebibliography}


\appendix
\section{Appendix: Complete Proofs and Experimental Details}
\label{sec:appendix}

\subsection{Proof of Theorem~\ref{thm:horizon} (Spectral Horizon Bound)}
\label{proof:horizon}

\begin{proof}[Complete proof]
Let $\bar{A} = V \Lambda V^{-1}$ be the eigendecomposition where $\Lambda = \text{diag}(\lambda_1, \dots, \lambda_d)$ with $|\lambda_1| = \rho(\bar{A}) \ge |\lambda_2| \ge \cdots \ge |\lambda_d|$. 

For the initial state $h_0$, the unforced response after $t$ steps is $\bar{A}^t h_0$. Taking norms:
\begin{equation}
\|\bar{A}^t h_0\|_2 \le \kappa(\bar{A})\, \rho(\bar{A})^t \|h_0\|_2,
\end{equation}
where $\kappa(\bar{A}) = \|V\|_2 \|V^{-1}\|_2$ and $\|V\|_2 \|\Lambda^t\|_2 \|V^{-1}\|_2$ is collapsed via $\|\Lambda^t\|_2 = \rho(\bar{A})^t$.

For the driven sequence $h_t = \bar{A}^t h_0 + \eta_t$ where $\eta_t = \sum_{k=0}^{t-1} \bar{A}^k \bar{B} x_{t-k}$, the adversary or standard generative process injects inputs bounded by $\|x_k\|_2 \le 1$. The maximal energy configurable in $\eta_t$ is bounded tightly by the discrete Controllability Gramian $\mathcal{W}_c$:
\begin{equation}
\begin{aligned}
\|\eta_t\|_2^2 &\le \max_{\|x_k\|_2 \le 1} \left\| \sum_{k=0}^{t-1} \bar{A}^k \bar{B} x_{t-k} \right\|_2^2 \\
&\le \lambda_{\max}\left( \sum_{k=0}^{\infty} \bar{A}^k \bar{B} \bar{B}^\top (\bar{A}^\top)^k \right) = \lambda_{\max}(\mathcal{W}_c).
\end{aligned}
\end{equation}

To retrieve context from $h_0$, its residual signal must be distinguishable from the input accumulation "noise". We require the accumulated noise to overwhelm the residual signal. Imposing the signal-to-noise threshold condition $\le \varepsilon^2$:
\begin{equation}
\begin{aligned}
\frac{\|\bar{A}^t h_0\|_2^2}{\|\eta_t\|_2^2} \ge \varepsilon^2 &\iff \\
\frac{(\kappa(\bar{A}) \rho(\bar{A})^t \|h_0\|_2)^2}{\lambda_{\max}(\mathcal{W}_c)} &\le \varepsilon^2.
\end{aligned}
\end{equation}

Rearranging and taking the square root:
\begin{equation}
\kappa(\bar{A}) \rho(\bar{A})^t \sqrt{ \frac{\|h_0\|_2^2}{\lambda_{\max}(\mathcal{W}_c)} } \le \varepsilon.
\end{equation}

Taking logarithms on both sides:
\begin{equation}
\log(\kappa(\bar{A})) + t \log \rho(\bar{A}) + \frac{1}{2} \log\left( \frac{\|h_0\|_2^2}{\lambda_{\max}(\mathcal{W}_c)} \right) \le \log \varepsilon.
\end{equation}

Since $\rho(\bar{A}) < 1 \implies \log \rho(\bar{A}) < 0$, dividing by $\log \rho(\bar{A})$ flips the inequality:
\begin{equation}
\begin{aligned}
t &\ge \frac{\log(\kappa(\bar{A})) - \log \varepsilon + \frac{1}{2} \log\left( \frac{\|h_0\|_2^2}{\lambda_{\max}(\mathcal{W}_c)} \right)}{\log(1/\rho(\bar{A}))} \\
&= \frac{\log\left( \kappa(\bar{A}) \sqrt{\frac{\|h_0\|_2^2}{\varepsilon^2 \lambda_{\max}(\mathcal{W}_c)}} \right)}{\log(1/\rho(\bar{A}))}.
\end{aligned}
\end{equation}

Defining $H_{\text{eff}}$ as the absolute threshold for this ratio yields Equation~\ref{eq:horizon_main}.
\end{proof}

\subsection{Proof of Theorem~\ref{thm:impossibility} (Evasion Existence for Output-Only Detectors)}
\label{proof:impossibility}

\begin{proof}[Complete proof]
Let $D: \mathcal{Y}^* \to \{0,1\}$ be an output-only defense and fix $\delta > 0$. We construct adversarial input $x^\star$ via constrained optimization:

\textbf{Step 1 (Objective).} Define spectral loss:
\begin{equation}
\mathcal{L}_{\text{spectral}}(x_{1:T}) = \sum_{t=1}^{T} \rho\left(\exp(\Delta_t(x_{1:t}) A)\right).
\end{equation}

\textbf{Step 2 (Output constraint).} Benign prompts produce output distribution $p_{\text{benign}}(y_t | x_{1:t})$. We require adversarial distribution $p_{\text{adv}}(y_t | x^\star_{1:t})$ to remain close in KL divergence:
\begin{equation}
\mathcal{L}_{\text{output}}(x_{1:T}) = \sum_{t=1}^{T} \text{KL}\left(p_{\text{adv}}(y_t | x^\star_{1:t}) \,\|\, p_{\text{benign}}(y_t | x_{1:t})\right).
\end{equation}

\textbf{Step 3 (Optimization).} Solve:
\begin{equation}
x^\star = \arg\min_{x \in \mathcal{X}} \mathcal{L}_{\text{spectral}}(x) + \lambda \mathcal{L}_{\text{output}}(x),
\end{equation}
where $\lambda > 0$ is a Lagrange multiplier tuned via grid search to ensure $\mathcal{L}_{\text{output}}(x^\star) < \epsilon$ for small $\epsilon$.

\textbf{Step 4 (Existence).} By continuity of $\Delta \mapsto \rho(\exp(\Delta A))$ and smoothness of learned projections $x_t \mapsto \Delta_t$, the optimization landscape is locally smooth. Projected Gradient Descent (Algorithm~\ref{alg:hispa}) finds local minima with high probability ($\ge 1 - \delta$ for appropriate initialization and step size).

\textbf{Step 5 (Evasion).} Since $D$ observes only $y_{1:T}$, and $\mathcal{L}_{\text{output}}(x^\star) < \epsilon$ ensures output plausibility, we have $\mathbb{P}[D(y(x^\star)) = 0] \ge 1 - \delta$ by design. Simultaneously, optimization drives $\rho(\bar{A}_t | x^\star) < \rho_{\text{critical}}$ for at least one $t$, satisfying both conditions.
\end{proof}

\subsection{Proof of Theorem~\ref{thm:guard} (SpectralGuard Completeness and Soundness)}
\label{proof:guard}

\begin{proof}[Complete proof]
\textbf{Completeness.} Assume attack forces $\rho(\bar{A}_t) \le \rho_{\min}$ at token $t$. SpectralGuard estimates $\hat{\rho}_t$ via $k=3$ power iterations. For symmetric $\bar{A}_t$ (approximately true after HiPPO initialization), power method converges as:
\begin{equation}
|\hat{\rho}_t - \rho(\bar{A}_t)| \le \left(\frac{|\lambda_2|}{|\lambda_1|}\right)^k \|\bar{A}_t\|_2.
\end{equation}
Empirically, we validated the accuracy of this estimation on 1000 randomly sampled $\bar{A}_t$ matrices from the model, finding a mean absolute error of $|\hat{\rho}_t - \rho(\bar{A}_t)| = 0.003 \pm 0.001$ for $k=3$, which is negligible for our detection threshold. For an attack to succeed, it must induce $\rho(\bar{A}_t) \le \rho_{\min}$. Given our default $\rho_{\min}=0.30$, the estimated value $\hat{\rho}_t$ would be well below the threshold, ensuring detection with high probability $\ge 1 - \mathbb{P}[\text{estimation failure}] \approx 1$.

For $\rho(\bar{A}_t) \le 0.30$, even with maximal error $\hat{\rho}_t \le 0.30 + 0.857 \cdot 0.30 \approx 0.56 < \rho_{\min}$ (assuming $\rho_{\min}=0.30$ has safety margin). Thus detector triggers with probability $\ge 1 - \mathbb{P}[\text{numerical instability}] \approx 0.9999$.

\textbf{Soundness.} Benign prompts maintain $\rho(\bar{A}_t) > \rho_{\min}$ by Assumption (2) in Theorem~\ref{thm:guard}. Since estimation error is bounded, $\hat{\rho}_t > \rho_{\min} - \delta_{\text{est}} > \rho_{\min}$ (for sufficient margin), detector never triggers. FPR$\to 0$ as $\delta_{\text{est}} \to 0$.
\end{proof}

\vspace{-0.5em}
\subsection{Experimental Hyperparameters}

\vspace{-0.5em}
\paragraph{Associative Recall.}
\begin{itemize}[nosep]
\item Key--value pairs: 5 per sample
\item Noise tokens: Uniform random from vocabulary
\item Context distances: $\{10, 50, 100, 200, 500, 1000\}$
\item Batch size: 32
\item Temperature: $T=0.7$ for sampling
\end{itemize}

\vspace{-0.3em}
\paragraph{Attack (Algorithm~\ref{alg:hispa}).}
\begin{itemize}[nosep]
\item Optimizer: Adam with $\beta_1=0.9$, $\beta_2=0.999$
\item Learning rate: $\alpha=0.01$
\item Iterations: 50 (convergence typically at $\approx 30$)
\item Initialization: GPT-4-generated benign CoT (5--10 tokens)
\item Projection: Round to nearest token embedding, re-embed
\end{itemize}

\vspace{-0.3em}
\paragraph{Defense (Algorithm~\ref{alg:spectralguard}).}
\begin{itemize}[nosep]
\item Threshold: $\rho_{\min}=0.30$
\item Window: $w=10$ tokens
\item Power iterations: $k=3$
\item Random seed for $v^{(0)}$: Fixed per layer (reproducibility)
\end{itemize}

\vspace{-0.3em}
\paragraph{Hardware (Consolidated Mapping).}
\begin{table}[t]
\centering
\caption{Appendix A.4: experiment-to-hardware mapping.}
\label{tab:hardware_map}
\resizebox{\columnwidth}{!}{%
\begin{tabular}{lll}
\toprule
\textbf{Experiment} & \textbf{Primary Hardware} & \textbf{Notes} \\
\midrule
Exp 1 (Horizon) & NVIDIA T4 (16GB) & Cloud GPU, $N{=}500$ \\
Exp 2 (HiSPA collapse) & NVIDIA T4 (16GB) & Gradient-based prompt optimization \\
Exp 3 (Defense eval) & NVIDIA T4 (16GB) & 24-layer feature extraction \\
Exp 3B (Causal clamp) & CPU-only / T4 & Intervention hooks on Mamba-130M \\
Exp 4 (Scaling/robustness) & NVIDIA T4 (16GB) & Mamba-130M/1.4B/2.8B protocol \\
Exp 5 (Zamba2) & NVIDIA A100 (40GB) & Hybrid 2.7B evaluation \\
Exp 6 (Layer-wise analysis) & NVIDIA T4 (16GB) & 24-layer trace extraction \\
Exp 7 (Deployment sim) & NVIDIA RTX 4090 & Consumer deployment latency \\
\bottomrule
\end{tabular}%
}
\vspace{2pt}
\begin{minipage}{\columnwidth}
{\footnotesize Software baseline across experiments: PyTorch 2.1, Transformers 4.35+, CUDA-compatible runtime when GPU is available.}
\end{minipage}
\end{table}

\vspace{-0.8em}
\subsection{Additional Analyses}
\label{sec:additional_analyses}

\vspace{-0.5em}
\paragraph{Layer-wise spectral radius distribution.}
We measure $\rho(\bar{A}_t)$ across all 24 layers for benign vs. adversarial prompts. Findings:
\begin{itemize}[nosep]
\item Benign: $\rho$ relatively uniform ($0.96 \pm 0.03$ across layers)
\item Adversarial: Early layers (1--8) show sharp drop ($\rho \approx 0.32$), later layers less affected ($\rho \approx 0.65$)
\item Implication: Monitoring layer 1--8 suffices for detection; late layers provide redundancy
\end{itemize}

\vspace{-0.5em}
\paragraph{Transferability across model sizes.}
In a separate transfer-only stress test (distinct from the primary \texttt{mamba-130m-hf} benchmark), we optimize adversarial prompts on Mamba-370M and test transfer to Mamba-1.4B and Mamba-2.8B:
\begin{itemize}[nosep]
\item Transfer accuracy drop: 370M$\to$1.4B: $-38.2\%$ (down from $-52.5\%$)
\item Transfer accuracy drop: 370M$\to$2.8B: $-28.7\%$
\item Conclusion: Attacks partially transfer but lose potency with scale (likely due to increased capacity providing redundant memory paths)
\end{itemize}

\vspace{-0.5em}
\paragraph{Transformer Proxy Baseline (GPT-2).}
We ran a paired baseline protocol for GPT-2 small under HiSPA-style optimization to test architecture specificity. The protocol mirrors prompt splits and attack budget, while replacing SSM spectral probes with attention/hidden-state proxy stability features and honest train/test separation. The proxy metrics were fully saturated (AUC/F1 = 1.00), while lexical control also saturated at AUC=1.00. Therefore, this experiment was interpreted strictly as evidence of unresolved lexical leakage in the prompt family, not as architecture-level proof about Transformer resilience or immunity.

\end{document}